\DeclareMathOperator*{\argmax}{arg\,max}
\newcommand{\E}{\operatorname{\mathbb E}}
\newcommand{\innermid}{\;\middle\lvert\;}
\newtheorem{lemma}{Lemma}
\newtheorem{corollary}{Corollary}
\newtheorem{theorem}{Theorem}
\newtheorem{proposition}{Proposition}
\newtheorem{assumption}{Assumption}
\newenvironment{subproof}[1][\proofname]{%
  \begin{proof}[#1]%
}{%
  \end{proof}%
}
\newtheorem{altassumption}{Assumption}[assumption]
\newenvironment{assumption+}[1]
  {%
   \begin{altassumption}}
  {\end{altassumption}}
\newenvironment{assumption*}
 {\ifnum \value{subassumption}=0
 \stepcounter{assumption}\fi\subassumption}
 {\endsubassumption}
\crefname{assumption}{assumption}{Assumptions}
\crefname{figure}{figure}{Figures}
\crefname{appendix}{appendix}{Appendices}
\title{Learning Decentralized Partially Observable Mean Field Control for Artificial Collective Behavior}
\author{
  Kai~Cui, \; Sascha~Hauck, \; Christian~Fabian, \; 
  Heinz~Koeppl \\
  Dept. of Electrical Engineering and Information Technology, Technische Universität Darmstadt \\
  \texttt{\{kai.cui, heinz.koeppl\}@tu-darmstadt.de}
}
\begin{document}

\maketitle

\begin{abstract}
  Recent reinforcement learning (RL) methods have achieved success in various domains. However, multi-agent RL (MARL) remains a challenge in terms of decentralization, partial observability and scalability to many agents. Meanwhile, collective behavior requires resolution of the aforementioned challenges, and remains of importance to many state-of-the-art applications such as active matter physics, self-organizing systems, opinion dynamics, and biological or robotic swarms. Here, MARL via mean field control (MFC) offers a potential solution to scalability, but fails to consider decentralized and partially observable systems. In this paper, we enable decentralized behavior of agents under partial information by proposing novel models for decentralized partially observable MFC (Dec-POMFC), a broad class of problems with permutation-invariant agents allowing for reduction to tractable single-agent Markov decision processes (MDP) with single-agent RL solution. We provide rigorous theoretical results, including a dynamic programming principle, together with optimality guarantees for Dec-POMFC solutions applied to finite swarms of interest. Algorithmically, we propose Dec-POMFC-based policy gradient methods for MARL via centralized training and decentralized execution, together with policy gradient approximation guarantees. In addition, we improve upon state-of-the-art histogram-based MFC by kernel methods, which is of separate interest also for fully observable MFC. We evaluate numerically on representative collective behavior tasks such as adapted Kuramoto and Vicsek swarming models, being on par with state-of-the-art MARL. Overall, our framework takes a step towards RL-based engineering of artificial collective behavior via MFC. 
\end{abstract}

\section{Introduction}
Reinforcement learning (RL) and multi-agent RL (MARL) has found success in varied domains with few agents, including e.g. robotics \citep{polydoros2017survey}, language models \citep{ouyang2022training} or transportation \citep{haydari2020deep}. However, tractability issues remain for systems with many agents, especially under partial observability \citep{zhang2021multi}. Here, specialized approaches give tractable solutions, e.g. via factorizations \citep{qu2020scalable, zhang2021decentralized}. We propose a general, tractable approach for a broad range of decentralized, partially observable systems.

\paragraph{Collective behavior \& partial observability.}
Of practical interest is the design of simple local interaction rules to fulfill global, cooperative objectives by emergence of global behavior \citep{vicsek2012collective}. For example, intelligent self-organizing robotic swarms provide many applications such as farming, and general design frameworks remains elusive \citep{hrabia2018towards, schranz2021swarm}. Other domains include group decision-making and opinion dynamics \citep{zha2020opinion}, biomolecular self-assembly \citep{yin2008programming}, and active matter \citep{cichos2020machine, kruk2020traveling}, e.g. nano-particles \citep{nasiri2022reinforcement} or microswimmers \citep{narinder2018memory}. Overall, there is a need for scalable MARL under decentralization and partial information.

\begin{figure}
    \centering
    \includegraphics[width=0.99\linewidth]{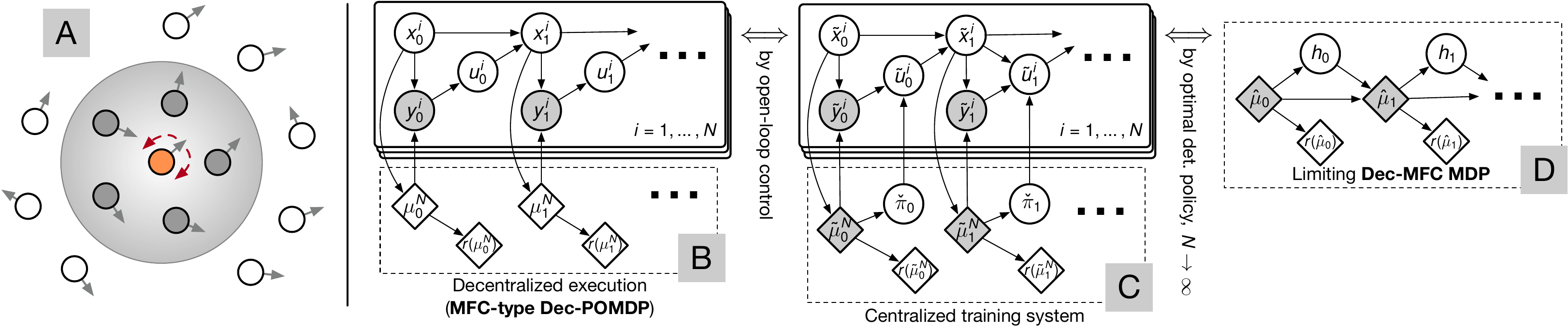}
    \caption{A: Partially-observable Vicsek problem: agents must align headings (arrows), but observe only partial information (e.g. heading distribution in grey circle for orange agent). B: The decentralized model as a graphical model (grey: observed variables). C: In centralized training, we also observe the mean field, guiding the learning of upper-level actions $\check \pi$. D: The solved limiting MDP.}
    \label{fig:overview1}
\end{figure}

\paragraph{Scalable and partially observable MARL.}
Despite its many applications, decentralized cooperative control remains a difficult problem even in MARL \citep{zhang2021multi}, especially if coupled with the simultaneous requirement of scalability. Recent scalable MARL methods include graphical decompositions \citep{qu2020scalable, zhang2021decentralized} amongst others \citep{zhang2021multi}. However, most remain limited to full observability \citep{zhang2021decentralized}. One line of algorithms applies pairwise mean field (MF) approximations over neighbors \citep{yang2018mean}, which has yielded decentralized, partially observable extensions \citep{subramanian2020partially, subramanian2022decentralized}. Relatedly, MARL based on mean field games (MFG, non-cooperative) and mean field control (MFC, cooperative) focus on a broad class of systems with many exchangeable agents. While the theory for MFG is developed \citep{huang2006distributed, sen2019mean, saldi2019partially}, to the best of our knowledge, neither MFC-based MARL algorithms nor discrete-time MFC have been proposed under \textit{partial information and decentralization}, except in special linear-quadratic cases \citep{tottori2022memory, wang2021global}. Further, MFGs have been useful for analyzing emergence of collective behavior \citep{perrin2021mean, carmona2022synchronization}, but less for "engineering" collective behavior to achieve global objectives as in MFC, which is our focus. This is in contrast to \textit{rational, selfish} agents, as a decomposition of global objectives into per-agent rewards is non-trivial \citep{waelchli2023discovering, kwonauto}. Beyond scalability to many agents, general MFC for MARL is also not yet scalable to \textit{high-dimensional} state-actions due to discretization of the simplex \citep{carmona2019model, gu2021mean}, except in linear-quadratic models \citep{fu2019actor, carmona2019linear}. Instead, we consider general discrete-time MFC and scale to higher dimensions via kernels.
We note that our model has a similar flavor to TD-POMDPs \citep{witwicki2010influence}, as the MF also abstracts influence from all other agents. However, TD-POMDP addresses different types of problems, as it considers local per-agent states, while the MF is both globally shared and influenced by all agents.

\paragraph{Our contribution.}
A \textit{tractable} framework for \textit{cooperative} control, that can handle \textit{decentralized, partially observable} systems, is missing. By the preceding motivation, we propose such a framework as illustrated in Figure~\ref{fig:overview1}. Our contributions may be summarized as (i) proposing the first discrete-time MFC model with decentralized and partially observing agents; (ii) providing accompanying approximation theorems, reformulations to a tractable single-agent Markov decision process (MDP), and novel optimality results over equi-Lipschitz policies; (iii) establishing a MARL algorithm with policy gradient guarantees; and (iv) presenting kernel-based MFC parametrizations of separate interest for general, higher-dimensional MFC. The algorithm is verified on classical collective swarming behavior models, and compared against standard MARL. Overall, our framework steps toward tractable RL-based engineering of artificial collective behavior for large-scale multi-agent systems.

\section{Decentralized Partially Observable MFC} \label{sec:theo}
In this section, we introduce the motivating finite MFC-type decentralized partially observable control problem, as a special case of cooperative, general decentralized partially observable Markov decision processes (Dec-POMDPs \citep{bernstein2002complexity, oliehoek2016concise}). We then proceed to simplify in three steps of (i) taking the infinite-agent limit, (ii) relaxing partial observability during training, and (iii) correlating agent actions during training, in order to arrive at a tractable MDP with optimality guarantees, see also Figures~\ref{fig:overview1} and \ref{fig:overview2}. Proofs are found in \crefrange{app-start}{app-end}.

In a nutshell, Dec-POMDPs are hard, and hence we \textit{reformulate} into the Dec-POMFC, for which we develop a new theory for optimality of Dec-POMFC solutions in the finite Dec-POMDP. The solution of Dec-POMFC itself also remains hard, because its MDP is not just continuous, but \textit{infinite-dimensional} for continuous state-actions. The MDP is later addressed in Section~\ref{sec:RL} by (i) kernel parametrizations and (ii) approximate policy gradients on the finite Dec-POMDP (Theorem~\ref{thm:ctde}).

\begin{figure}
    \centering
    \includegraphics[width=0.85\linewidth]{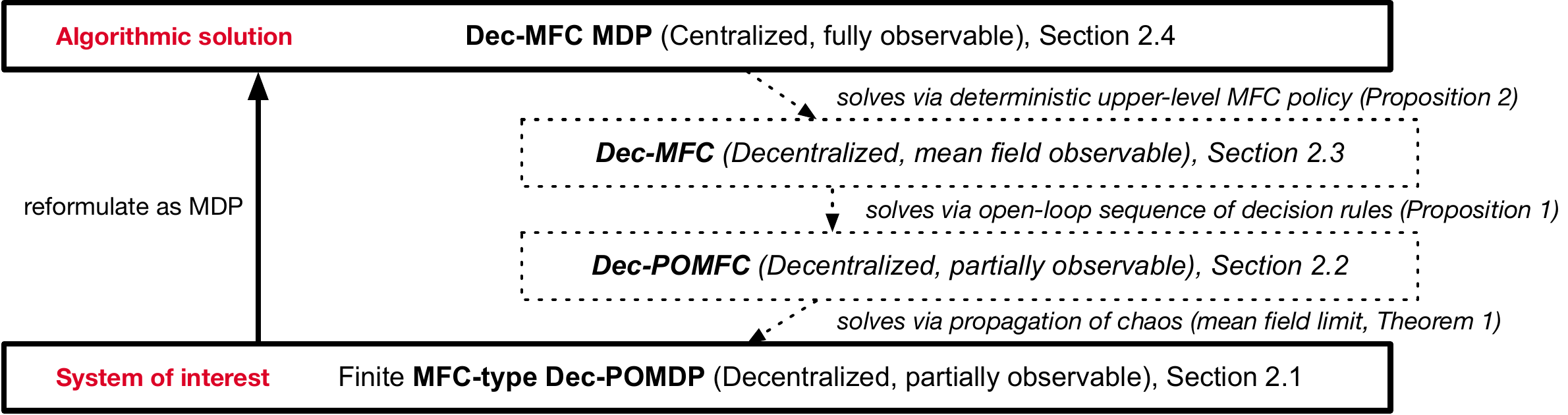}
    \caption{Three steps of approximation (mean field limit, open-loop control, and MDP reformulation) allow us to reformulate the broad class of MFC-type Dec-POMDP to a tractable Dec-MFC MDP.}
    \label{fig:overview2}
\end{figure}

\subsection{MFC-type cooperative multi-agent control}
To begin, we define the finite Dec-POMDP of interest, which is assumed to be MFC-type. In other words, (i) agents are \textbf{permutation invariant}, i.e. only the overall distribution of agent states matters, and (ii) agents observe only part of the system. We assume agents $i \in [N] \coloneqq \{ 1, \ldots, N \}$ endowed with random states $x^i_t$, observations $y^i_t$ and actions $u^i_t$ at times $t \in \mathcal T \coloneqq \mathbb N$ from compact metric state, observation and action spaces $\mathcal X$, $\mathcal Y$, $\mathcal U$ (finite or continuous). Agents depend on other agents only via the empirical \textbf{mean field} $\mu^N_t \coloneqq \frac 1 N \sum_{i \in [N]} \delta_{x^i_t}$. Policies are memory-less and shared by all agents, archetypal of collective behavior under simple rules \citep{hamann2018swarm}, and of interest to compute-constrained agents, including e.g. nano-particles or small robots. Optionally, memory and history-dependence can be integrated into the state, see Appendix~\ref{app:history}. Agents act according to policy $\pi \in \Pi$ from a class $\Pi \subseteq \mathcal P(\mathcal U)^{\mathcal Y \times \mathcal T}$ of policies, with spaces of probability measures $\mathcal P(\cdot)$, equipped with the $1$-Wasserstein metric $W_1$ \citep{villani2009optimal}. 
Starting with initial distribution $\mu_0$, $x^i_0 \sim \mu_0$, the \textbf{MFC-type Dec-POMDP} dynamics are
\begin{align} \label{eq:dec-pomdp}
    y^i_t \sim P^y(y^i_t \mid x^i_t, \mu^N_t), \quad
    u^i_t \sim \pi_t(u^i_t \mid y^i_t), \quad
    x^i_{t+1} \sim P(x^i_{t+1} \mid x^i_t, u^i_t, \mu^N_t)
\end{align}
for all $(i, t) \in [N] \times \mathcal T$, with transition kernels $P \colon \mathcal X \times \mathcal U \times \mathcal P(\mathcal X) \to \mathcal P(\mathcal X)$, $P^y \colon \mathcal X \times \mathcal P(\mathcal X) \to \mathcal P(\mathcal Y)$, objective $J^N(\pi) = \E [ \sum_{t \in \mathcal T} \gamma^t r(\mu^N_t) ]$ to maximize over $\pi \in \Pi$ under reward function $r \colon \mathcal P(\mathcal X) \to \mathbb R$, and discount factor $\gamma \in (0, 1)$. Results generalize to finite horizons, average per-agent rewards $r_{\mathrm{per}} \colon \mathcal X \to \mathbb R$, $r(\mu^N_t) = \int r_{\mathrm{per}} \mathrm d\mu^N_t$, and joint state-observation-action MFs via enlarged state space. 

Since general Dec-POMDPs are hard \citep{bernstein2002complexity}, our model establishes a tractable special case of high generality. Standard MFC already covers a broad range of applications, e.g. see surveys for finance \citep{carmona2020applications} and engineering \citep{djehiche2017mean} applications, which can now be handled under partial information. In addition, many classical, inherently partially observable models are covered by MFC-type Dec-POMDPs, such as the Kuramoto or Vicsek models in Section~\ref{sec:exp}, where many-agent convergence is known as propagation of chaos \citep{chaintron2022propagation}.

\subsection{Limiting MFC system}
In order to achieve tractability for large multi-agent systems, the first step is to take the infinite-agent limit. By a law of large numbers (LLN), this allows us to describe large systems only by the MF $\mu_t$. Consider a representative agent as in \eqref{eq:dec-pomdp} with states $x_0 \sim \mu_0$, $x_{t+1} \sim P(x_{t+1} \mid x_t, u_t, \mu_t)$, observations $y_t \sim P^y(y_t \mid x_t, \mu_t)$ and actions $u_t \sim \pi_t(u_t \mid y_t)$. Then, its state probability law replaces the empirical state distribution, informally $\mu_t = \mathcal L(x_t) \equiv \lim_{N \to \infty} \mu^N_t$. Looking only at the MF, we hence obtain the decentralized partially observable MFC (\textbf{Dec-POMFC}) system 
\begin{align} \label{eq:dec-pomfc}
    \mu_{t+1} &= \mathcal L(x_{t+1}) = T(\mu_t, \pi_t) \coloneqq \iiint P(x, u, \mu_t) \pi_t(\mathrm du \mid y) P^y(\mathrm dy \mid x, \mu_t) \mu_t(\mathrm dx)
\end{align}
by deterministic transitions $T \colon \mathcal P(\mathcal X) \times \mathcal P(\mathcal U)^{\mathcal Y} \to \mathcal P(\mathcal X)$ and objective $J(\pi) = \sum_{t=0}^{\infty} \gamma^t r(\mu_t)$.

\textbf{Approximation guarantees.}
Under mild continuity assumptions, the Dec-POMFC model in \eqref{eq:dec-pomfc} constitutes a good approximation of large-scale MFC-type Dec-POMDP in \eqref{eq:dec-pomdp} with many agents. 

\begin{assumption*} \label{ass:pcont}
The transitions $P$, $P^y$ and rewards $r$ are Lipschitz with constants $L_{P}$, $L_{P^y}$, $L_{r}$.
\end{assumption*}

\begin{assumption*} \label{ass:picont}
The class of policies $\Pi$ is the set of all $L_\Pi$-Lipschitz policies for some $L_\Pi > 0$, i.e. for all $t \in \mathcal T$ and $\pi \in \Pi$, we have that $\pi_t \colon \mathcal Y \to \mathcal P(\mathcal U)$ is $L_\Pi$-Lipschitz. Alternatively, we may assume unrestricted policies if (i) observations only depend on an agent's state, and (ii) $|\mathcal X| < \infty$.
\end{assumption*}

Lipschitz continuity of the model is commonly assumed \citep{huang2006distributed, gu2021mean, mondal2022approximation}, and in general at least (uniform) continuity is required: Consider a counterexample with uniform initial $\mu_0$ over states $A, B$. If dynamics, observations, or rewards jump between regimes at $\mu(A) = \mu(B) = 0.5$, the finite system will randomly experience all regimes, while limiting MFC experiences only the regime at $\mu(A) = \mu(B) = 0.5$. Meanwhile, Lipschitz policies are not only standard in MFC literature \citep{pasztor2021efficient, mondal2022approximation} by neural networks (NNs) \citep{araujo2023a}, but also fulfilled for finite $\mathcal Y$ trivially without loss of generality ($L_\Pi \coloneqq \operatorname{diam}(\mathcal U)$), and for continuous $\mathcal Y$ by kernel parametrizations in Section~\ref{sec:RL}. We extend MFC approximation theorems \citep{gu2021mean, mondal2022approximation, cui2023multi} to partial observations and compact spaces.

\begin{theorem} \label{thm:mf_conv}
Fix an equicontinuous family of functions $\mathcal F \subseteq \mathbb R^{\mathcal P(\mathcal X)}$. Under \crefrange{ass:pcont}{ass:picont}, the MF converges in the sense of $\sup_{\pi \in \Pi} \sup_{f \in \mathcal F} \E \left[ \left| f(\mu^N_{t}) - f(\mu_{t}) \right| \right] \to 0$ at all times $t \in \mathcal T$.
\end{theorem}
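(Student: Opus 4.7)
The plan is to establish the stronger claim $\sup_{\pi \in \Pi} \E[W_1(\mu^N_t, \mu_t)] \to 0$ for every fixed $t \in \mathcal T$, by induction on $t$, and then transfer this to the equicontinuous family $\mathcal F$ by a modulus-of-continuity argument. The transfer step is essentially automatic: since $\mathcal X$ is compact, $(\mathcal P(\mathcal X), W_1)$ is compact, and an equicontinuous family on a compact space is uniformly equicontinuous, so there exists a bounded modulus $\omega$ with $\omega(s) \downarrow 0$ as $s \downarrow 0$ such that $|f(\nu) - f(\nu')| \le \omega(W_1(\nu, \nu'))$ for every $f \in \mathcal F$; dominated convergence then yields $\sup_{f \in \mathcal F} \E[|f(\mu^N_t) - f(\mu_t)|] \le \E[\omega(W_1(\mu^N_t, \mu_t))] \to 0$.

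For the induction, the base case $t = 0$ is immediate: $\mu^N_0$ is the empirical measure of an iid sample from $\mu_0$ on the compact metric space $\mathcal X$, so Fournier--Guillin-style rates give $\E[W_1(\mu^N_0, \mu_0)] \to 0$ at a rate depending only on $\mathcal X$ and $\mu_0$, hence trivially uniformly in $\pi$. For the inductive step I split
\begin{equation*}
W_1(\mu^N_{t+1}, \mu_{t+1}) \le W_1(\mu^N_{t+1}, T(\mu^N_t, \pi_t)) + W_1(T(\mu^N_t, \pi_t), T(\mu_t, \pi_t)).
\end{equation*}
The second summand is controlled by showing that $\mu \mapsto T(\mu, \pi_t)$ is $W_1$-Lipschitz with a constant $L_T$ depending only on $L_P, L_{P^y}, L_\Pi$; this follows from the coupling characterization of $W_1$ applied to the three Lipschitz kernels $P^y$, $\pi_t$, $P$ appearing in \eqref{eq:dec-pomfc}, and the inductive hypothesis then closes this term uniformly in $\pi$. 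The first summand is a conditional empirical-measure concentration: conditional on $\sigma(x^1_t, \ldots, x^N_t)$, the next states $x^i_{t+1}$ are independent with individual laws whose $i$-average equals $T(\mu^N_t, \pi_t)$, so a Fournier--Guillin-type bound for empirical measures of independent samples on compact $\mathcal X$ drives this term to zero at a rate depending only on $N$ and $\mathcal X$, again uniformly in $\pi$.

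The main obstacle is obtaining the Lipschitz bound $L_T$ on $T(\cdot, \pi_t)$ uniformly in $\pi \in \Pi$, which is precisely why Assumption~\ref{ass:picont} is indispensable: without a common Lipschitz constant, even a pointwise-continuous policy may fail to turn a small $W_1$-perturbation of the mean field into a small $W_1$-perturbation of the induced action law, and uniformity over $\Pi$ would break. Concretely, one takes an optimal coupling for $W_1(\mu, \mu')$ and pushes it forward successively through $P^y$ (Lipschitz in both $x$ and $\mu$), $\pi_t$ (Lipschitz in $y$ with constant $L_\Pi$) and $P$, collecting $L_P, L_{P^y}, L_\Pi$ multiplicatively into $L_T$. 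In the alternative branch of Assumption~\ref{ass:picont}, finite $\mathcal X$ together with observations depending only on the agent state renders $\mathcal Y$ effectively finite, so any policy is trivially $\operatorname{diam}(\mathcal U)$-Lipschitz and the argument goes through without restricting $\Pi$; this also recovers the classical finite-state MFC convergence as a special case.
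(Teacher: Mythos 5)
Your proof has the same skeleton as the paper's: induction on $t$, with the one-step error split into a conditional law-of-large-numbers fluctuation term $W_1(\mu^N_{t+1}, T(\mu^N_t,\pi_t))$ and a propagation term controlled by (uniform-in-$\pi$) continuity of $\mu \mapsto T(\mu,\pi_t)$, the latter established exactly as in the paper via the Lipschitz estimates on $P^y$, $\pi_t$, $P$ from \crefrange{ass:pcont}{ass:picont}. Where you genuinely diverge is the LLN step: you aim for the stronger statement $\sup_\pi \E[W_1(\mu^N_t,\mu_t)]\to 0$ and therefore need a uniform bound on $\E[W_1]$ between an empirical measure of \emph{independent but non-identically distributed} conditional samples and its average law, uniformly over all such collections of laws. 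The paper instead works with the auxiliary metric $d_\Sigma(\mu,\mu') = \sum_m 2^{-m}|\int f_m\,\mathrm d(\mu-\mu')|$, so that the fluctuation term reduces to a scalar variance bound $\E[|\frac1N\sum_i (f_m(x^i_{t+1})-\E[f_m(x^i_{t+1})\mid x^N_t])|]\le 2/\sqrt N$ per test function, at the cost of shuttling between the two (uniformly equivalent, on compact $\mathcal P(\mathcal X)$) metrizations via moduli of continuity and Jensen. Your route buys a cleaner, stronger intermediate statement; the paper's buys elementarity in the probabilistic step.

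Two points in your write-up need tightening. First, Fournier--Guillin is stated for i.i.d.\ samples on $\mathbb R^d$; for conditionally independent, non-identically distributed samples on a general compact metric space you must instead run a covering-number/chaining argument over the unit ball of $\mathrm{Lip}(1)$ functions (compact in sup norm by Arzel\`a--Ascoli), which does give a rate depending only on $N$ and the metric entropy of $\mathcal X$, uniformly over the conditioning and over $\pi$ --- but this is a nontrivial extension you should state and prove, not cite. Second, in the alternative branch of Assumption~\ref{ass:picont}, the mechanism is not that $\mathcal Y$ becomes effectively finite or that every policy becomes $\operatorname{diam}(\mathcal U)$-Lipschitz on $\mathcal Y$ (it need not be, for continuous $\mathcal Y$); rather, $P^y$ no longer depends on $\mu$ (killing one error term outright) and the composite map $x \mapsto \iint f(x,u)\,\pi_t(\mathrm du\mid y)P^y(\mathrm dy\mid x)$ is automatically Lipschitz because it is a function on the finite set $\mathcal X$. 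With these two repairs your argument is complete.
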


The approximation rate is $\mathcal O(1/\sqrt N)$ for finite state-actions, using equi-Lipschitz $\mathcal F$ (Appendix~\ref{app:mf_conv}). Hence, the easier Dec-POMFC simplifies otherwise hard Dec-POMDPs. Indeed, we later show that such optimal Lipschitz Dec-POMFC policies are guaranteed to exist via closedness of joint-measures under equi-Lipschitz kernels (Appendix~\ref{app:Hclosed}), see Propositions~\ref{prop:dec-pomfc-conv}, \ref{prop:dec-mfc-conv} and Theorem~\ref{thm:dpp} later.

\begin{corollary} \label{coro:epsopt}
Under \crefrange{ass:pcont}{ass:picont}, any optimal Dec-POMFC policy $\pi \in \argmax_{\pi' \in \Pi} J(\pi')$ is $\varepsilon$-optimal in the MFC-type Dec-POMDP, $J^N(\pi) \geq \sup_{\pi' \in \Pi} J^N(\pi') - \varepsilon$, with $\varepsilon \to 0$ as $N \to \infty$.
\end{corollary}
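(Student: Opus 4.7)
The plan is to reduce the corollary to a uniform approximation bound on the value functional across $\Pi$. Specifically, if we can establish
\[
\delta_N \coloneqq \sup_{\pi' \in \Pi} \bigl| J^N(\pi') - J(\pi') \bigr| \xrightarrow{N \to \infty} 0,
\]
then for any Dec-POMFC-optimal $\pi$ and arbitrary $\pi' \in \Pi$,
\[
J^N(\pi) \geq J(\pi) - \delta_N \geq J(\pi') - \delta_N \geq J^N(\pi') - 2\delta_N,
\]
where the middle inequality uses the Dec-POMFC optimality of $\pi$. Taking the supremum over $\pi' \in \Pi$ then gives the claim with $\varepsilon = 2\delta_N$.

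The main work is showing $\delta_N \to 0$. Using $J^N(\pi') = \E[\sum_t \gamma^t r(\mu^N_t)]$ and $J(\pi') = \sum_t \gamma^t r(\mu_t)$, the triangle inequality gives
\[
\bigl| J^N(\pi') - J(\pi') \bigr| \leq \sum_{t \in \mathcal T} \gamma^t \, \E\bigl[ |r(\mu^N_t) - r(\mu_t)| \bigr].
\]
Since $\mathcal X$ is compact, $\mathcal P(\mathcal X)$ is $W_1$-compact, and Lipschitz continuity of $r$ from Assumption \ref{ass:pcont} forces $\|r\|_\infty < \infty$. For any prescribed tolerance $\varepsilon > 0$, pick $T$ with $\tfrac{2\gamma^{T+1}}{1-\gamma}\|r\|_\infty < \varepsilon/2$; this controls the tail $\sum_{t > T}$ uniformly in $\pi'$. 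For the head $t \leq T$, apply Theorem \ref{thm:mf_conv} to the singleton family $\mathcal F = \{r\}$, which is trivially equicontinuous, yielding $\sup_{\pi' \in \Pi} \E[|r(\mu^N_t) - r(\mu_t)|] \to 0$ for each fixed $t$. Bounding $\sup_{\pi'} \sum_{t=0}^T \gamma^t \E[\cdot] \leq \sum_{t=0}^T \gamma^t \sup_{\pi'} \E[\cdot]$ preserves uniformity across the finitely many head terms, so $\delta_N < \varepsilon$ for $N$ large.

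I do not expect a genuine obstacle here: the corollary is essentially a packaging of Theorem \ref{thm:mf_conv} with the standard discounted tail-truncation trick, and the uniformity over $\Pi$ is delivered directly by the theorem. The only minor points meriting a line of justification are (i) that a Lipschitz function on a $W_1$-compact space of measures is bounded, so the tail truncation is legal, and (ii) that the existence of an optimal $\pi$ required to instantiate the comparison is supplied elsewhere (referenced via Propositions \ref{prop:dec-pomfc-conv} and \ref{prop:dec-mfc-conv} and Theorem \ref{thm:dpp}); otherwise the argument is pure bookkeeping.
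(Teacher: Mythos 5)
Your proposal is correct and follows essentially the same route as the paper: establish $\sup_{\pi' \in \Pi} |J^N(\pi') - J(\pi')| \to 0$ via the discounted tail truncation (using boundedness of $r$) plus Theorem~\ref{thm:mf_conv} on the head terms, then chain the three inequalities to transfer optimality. The only cosmetic difference is that you obtain the bound as $2\delta_N$ whereas the paper splits a prescribed $\varepsilon$ into two halves; the substance is identical.
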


\subsection{Rewriting policies with mean field observations}
Now introducing the next system for reduction to an MDP, writing $\bar \mu$, $\bar \pi$ etc., let policies depend also on $\mu_t$, i.e. policies "observe" the mean field. While we could reason that agents might observe the MF or use filtering to estimate it \citep{aastrom1965optimal}, more importantly, the limiting MF is \textit{deterministic}. Therefore, w.l.o.g. we obtain the decentralized mean field observable MFC (\textbf{Dec-MFC}) dynamics
\begin{align} \label{eq:dec-mfc}
    \bar \mu_{t+1} &= T(\bar \mu_t, \bar \pi_t(\bar \mu_t)) \coloneqq \iiint P(x, u, \mu_t) \bar \pi_t(\mathrm du \mid y, \bar \mu_t) P^y(\mathrm dy \mid x, \bar \mu_t) \bar \mu_t(\mathrm dx),
\end{align}
with shorthand $\bar \pi_t(\bar \mu_t) = \bar \pi_t(\cdot \mid \cdot, \bar \mu_t)$, initial $\bar \mu_0 = \mu_0$ and according objective $\bar J(\bar \pi) = \sum_{t=0}^{\infty} \gamma^t r(\bar \mu_t)$ to optimize over (now MF-dependent) policies $\bar \pi \in \bar \Pi \subseteq \mathcal P(\mathcal U)^{\mathcal Y \times \mathcal P(\mathcal X) \times \mathcal T}$. 

Deterministic open-loop control transforms optimal \textit{Dec-MFC} policies $\bar \pi \in \argmax_{\bar \pi' \in \bar \Pi} \bar J(\bar \pi')$ into optimal \textit{Dec-POMFC} policies $\pi \in \argmax_{\pi \in \Pi} J(\pi)$ with decentralized execution, and vice versa: For given $\bar \pi$, compute deterministic MFs $(\bar \mu_0, \bar \mu_1, \ldots)$ via \eqref{eq:dec-mfc} and let $\pi = \Phi(\bar \pi)$ by $\pi_t(\mathrm du \mid y) = \bar \pi(\mathrm du \mid y, \bar \mu_t)$. Analogously, represent $\pi \in \Pi$ by $\bar \pi \in \bar \Pi$ with constant $\bar \pi_t(\nu) = \pi_t$ for all $\nu$.
\begin{proposition} \label{prop:dec-pomfc-conv}
For any $\bar \pi \in \bar \Pi$, define $(\bar \mu_0, \bar \mu_1, \ldots)$ as in \eqref{eq:dec-mfc}. Then, for $\pi = \Phi(\bar \pi) \in \Pi$, we have $\bar J(\bar \pi) = J(\pi)$. Inversely, for any $\pi \in \Pi$, let $\bar \pi_t(\bar \nu) = \pi_t$ for all $\bar \nu$, then again $\bar J(\bar \pi) = J(\pi)$.
\end{proposition}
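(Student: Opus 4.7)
The plan is to show that in both directions, the two systems produce \emph{identical} mean-field trajectories, so that the equality of objectives $J(\pi) = \bar J(\bar\pi)$ follows immediately from $r(\mu_t) = r(\bar\mu_t)$ for every $t$. The essential point is that the Dec-POMFC evolution \eqref{eq:dec-pomfc} is already deterministic in the MF (it is the LLN-limit system), so feeding $\mu_t$ back as an argument of the policy is a ``free'' operation that does not alter the dynamics on the nominal trajectory.

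For the forward direction, fix $\bar\pi \in \bar\Pi$, generate the deterministic sequence $(\bar\mu_0,\bar\mu_1,\ldots)$ by \eqref{eq:dec-mfc}, and set $\pi_t(\mathrm du\mid y) \coloneqq \bar\pi_t(\mathrm du\mid y,\bar\mu_t)$. I then argue by induction on $t$ that the Dec-POMFC system \eqref{eq:dec-pomfc} driven by $\pi$ produces exactly the sequence $(\bar\mu_t)_t$. The base case $\mu_0 = \bar\mu_0$ holds by definition. For the inductive step, assuming $\mu_t = \bar\mu_t$, I substitute $\pi_t(\mathrm du\mid y) = \bar\pi_t(\mathrm du\mid y,\bar\mu_t)$ into the definition of $T(\mu_t,\pi_t)$ in \eqref{eq:dec-pomdp}--\eqref{eq:dec-pomfc} and observe that the resulting triple integral is literally the right-hand side of \eqref{eq:dec-mfc}, so $\mu_{t+1} = T(\bar\mu_t,\bar\pi_t(\bar\mu_t)) = \bar\mu_{t+1}$. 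Summing $\gamma^t r(\mu_t) = \gamma^t r(\bar\mu_t)$ over $t$ yields $J(\pi)=\bar J(\bar\pi)$.

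The reverse direction is even simpler: given $\pi \in \Pi$, define $\bar\pi_t(\cdot\mid\cdot,\bar\nu) \coloneqq \pi_t$ for every $\bar\nu \in \mathcal P(\mathcal X)$. Then $\bar\pi_t(\mathrm du\mid y,\bar\mu_t) = \pi_t(\mathrm du\mid y)$ identically, so the same induction gives $\bar\mu_t = \mu_t$ for all $t$ and hence $\bar J(\bar\pi)=J(\pi)$.

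I do not expect any genuine obstacle in the argument itself. The only subtlety is the bookkeeping check that the constructed policies actually lie in the respective classes $\Pi,\bar\Pi$: in the forward direction, for each fixed $t$ the map $y \mapsto \bar\pi_t(\cdot\mid y,\bar\mu_t)$ inherits whatever measurability and Lipschitz-in-$y$ estimate $\bar\pi$ possesses (since $\bar\mu_t$ is a fixed argument), so under \cref{ass:picont} (extended to $\bar\Pi$) we indeed have $\Phi(\bar\pi)\in\Pi$; in the reverse direction the constant-in-$\bar\nu$ lift is trivially admissible. With these checks in place, the proposition is immediate from the two induction arguments above.
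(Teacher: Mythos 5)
Your proposal is correct and follows essentially the same route as the paper's proof: an induction showing $\bar\mu_t = \mu_t$ at every time, with the inductive step being the direct substitution of $\pi_t(\mathrm du \mid y) = \bar\pi_t(\mathrm du \mid y, \bar\mu_t)$ into the transition integral, and an analogous argument for the reverse direction. The additional remark on checking that the constructed policies lie in $\Pi$ and $\bar\Pi$ is a reasonable (if minor) bookkeeping point that the paper leaves implicit.
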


\begin{corollary} \label{coro:dec-mfc-opt}
Optimal Dec-MFC policies $\bar \pi \in \argmax_{\bar \pi' \in \bar \Pi} \bar J(\bar \pi')$ yield optimal Dec-POMFC policies $\Phi(\bar \pi)$, i.e. $J(\Phi(\bar \pi)) = \sup_{\pi' \in \Pi} J(\pi')$.
\end{corollary}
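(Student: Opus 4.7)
My plan is to deduce the corollary from Proposition~\ref{prop:dec-pomfc-conv} by a two-sided sandwich on the suprema of $J$ and $\bar J$. The proposition supplies two value-preserving transfers between $\Pi$ and $\bar\Pi$: the \emph{decentralization} map $\Phi\colon \bar\Pi \to \Pi$ with $\bar J(\bar\pi) = J(\Phi(\bar\pi))$, and the \emph{constant-in-$\bar\nu$ embedding} $\iota\colon \Pi \to \bar\Pi$ defined by $\iota(\pi)_t(\bar\nu) \equiv \pi_t$, which satisfies $\bar J(\iota(\pi)) = J(\pi)$. These two maps are the only ingredients needed; no further propagation-of-chaos or Lipschitz estimation is required, since all such work has already been absorbed into the proposition.

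First I would use $\iota$ to bound one direction: for every $\pi \in \Pi$,
\[
\sup_{\bar\pi' \in \bar\Pi} \bar J(\bar\pi') \;\geq\; \bar J(\iota(\pi)) \;=\; J(\pi),
\]
and taking the supremum over $\pi \in \Pi$ gives $\sup_{\bar\Pi} \bar J \geq \sup_\Pi J$. Next I would use $\Phi$ for the other direction: for every $\bar\pi \in \bar\Pi$,
\[
\sup_{\pi' \in \Pi} J(\pi') \;\geq\; J(\Phi(\bar\pi)) \;=\; \bar J(\bar\pi),
\]
so $\sup_\Pi J \geq \sup_{\bar\Pi} \bar J$. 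Combining, the two suprema coincide. Finally, for any $\bar\pi \in \argmax_{\bar\pi' \in \bar\Pi} \bar J(\bar\pi')$, applying the proposition once more,
\[
J(\Phi(\bar\pi)) \;=\; \bar J(\bar\pi) \;=\; \sup_{\bar\pi' \in \bar\Pi} \bar J(\bar\pi') \;=\; \sup_{\pi' \in \Pi} J(\pi'),
\]
which is precisely the asserted optimality of $\Phi(\bar\pi)$ in $\Pi$.

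The only substantive points to check are that the two transfer maps actually land in the advertised policy classes: $\Phi(\bar\pi) \in \Pi$ requires $y \mapsto \bar\pi_t(\cdot \mid y, \bar\mu_t)$ to inherit the $L_\Pi$-Lipschitz property from $\bar\pi \in \bar\Pi$, and $\iota(\pi) \in \bar\Pi$ requires the trivially constant dependence on $\bar\nu$ to be admissible. Both are baked into the setup of Proposition~\ref{prop:dec-pomfc-conv} and its surrounding definitions, so I expect no real obstacle in this corollary; if anything hides difficulty, it would be the implicit existence of the $\argmax$, which the corollary statement takes as a hypothesis and which is handled separately via the compactness/closedness arguments deferred to Appendix~\ref{app:Hclosed} and Theorem~\ref{thm:dpp}.
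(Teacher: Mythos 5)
Your argument is correct and rests on exactly the same two value-preserving transfers from Proposition~\ref{prop:dec-pomfc-conv} that the paper uses; the only difference is that you phrase it as a direct sandwich establishing $\sup_{\bar\Pi}\bar J = \sup_\Pi J$, whereas the paper argues by contradiction. Your closing remarks on admissibility of $\Phi(\bar\pi)$ and $\iota(\pi)$ and on existence of the $\argmax$ are also consistent with how the paper defers those issues.
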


Knowing initial $\mu_0$ is often realistic, as deployment is commonly for well-defined problems of interest. Even then, knowing $\mu_0$ is not strictly necessary (Section~\ref{sec:exp}). In contrast to standard deterministic open-loop control, (i) agents have stochastic dynamics and observations, and (ii) agents randomize actions instead of playing a trajectory, still leading to quasi-deterministic MFs by the LLN.

\subsection{Reduction to Dec-MFC MDP}
Lastly, we reformulate as an MDP with more tractable theory and algorithms, writing $\hat \mu$, $\hat \pi$ etc. The recent MFC MDP \citep{pham2018bellman, carmona2019model, gu2019dynamic} reformulates \textit{fully observable} MFC as MDPs with higher-dimensional state-actions. Similarly, we reduce Dec-MFC to an MDP with joint state-observation-action distributions as its MDP actions. The \textbf{Dec-MFC MDP} has states $\hat \mu_t \in \mathcal P(\mathcal X)$ and actions $h_t \in \mathcal H(\hat \mu_t) \subseteq \mathcal P(\mathcal X \times \mathcal Y \times \mathcal U)$ in the set of joint $h_t = \hat \mu_t \otimes P^y(\hat \mu_t) \otimes \check \pi_t$ under any $L_\Pi$-Lipschitz policy $\check \pi_t \in \mathcal P(\mathcal U)^{\mathcal Y}$. Here, $\nu \otimes K$ is the product measure of measure $\nu$ and kernel $K$, and $\nu K$ is the measure $\nu K = \int K(\cdot \mid x) \nu(\mathrm dx)$. For $\check \pi_t \in \mathcal P(\mathcal U)^{\mathcal Y}$, $\mu_{xy} \in \mathcal P(\mathcal X \times \mathcal Y)$, we write $\mu_{xy} \otimes \check \pi_t$ by letting $\check \pi_t$ constant on $\mathcal X$. In other words, the desired joint $h_t$ results from all agents replacing the previous system's policy $\bar \pi_t$ by lower-level policy $\check \pi_t$, which may be reobtained from $h_t$ (Appendix~\ref{app:Hclosed}, disintegration \citep{kallenberg2021foundations}). Equivalently, identify $\mathcal H(\mu)$ with $\mu$ and classes of $\check \pi_t$ yielding the same joint, and in practice we parametrize $\check \pi_t$. Thus, we obtain the MDP dynamics
\begin{align} \label{eq:dec-pomfc-mdp}
    h_t \sim \hat \pi(\hat \mu_t), \quad \hat \mu_{t+1} = \hat T(\hat \mu_t, h_t) \coloneqq \iiint P(x, u, \hat \mu_t) h_t(\mathrm dx, \mathrm dy, \mathrm du)
\end{align}
for Dec-MFC MDP policy $\hat \pi \in \hat \Pi$ and objective $\hat J(\hat \pi) = \E \left[ \sum_{t=0}^{\infty} \gamma^t r(\hat \mu_t) \right]$. The Dec-MFC MDP policy $\hat \pi$ is "upper-level", as we sample $h_t$ from $\hat \pi$, to apply the lower-level policy $\check \pi_t[h_t]$ to all agents. 

\paragraph{Guidance by mean field dependence.}
Intuitively, the MF \textit{guides} policy search in potentially hard, decentralized problems, and reduces to a single-agent MDP where we make some existing theory compatible. First, we formulate a dynamic programming principle (DPP), i.e. exact solutions by Bellman's equation for the value function $V(\mu) = \sup_{h \in \mathcal H(\mu)} r(\mu) + \gamma V(\hat T(\mu, h))$ \citep{hernandez2012discrete}. Here, a central theoretical novelty is closedness of joint measures under equi-Lipschitz policies (Appendix~\ref{app:Hclosed}). Concomitantly, we obtain optimality of stationary deterministic $\hat \pi$. For technical reasons, only here we assume Hilbertian $\mathcal Y$ (e.g. finite or Euclidean) and finite $\mathcal U$.

\begin{assumption} \label{ass:hilbertfin}
The observations $\mathcal Y$ are a metric subspace of a Hilbert space. Actions $\mathcal U$ are finite.
\end{assumption}
\begin{theorem} \label{thm:dpp}
Under \crefrange{ass:pcont}{ass:picont} and \ref{ass:hilbertfin}, there exists an optimal stationary, deterministic policy $\hat \pi$ for the Dec-MFC MDP, with $\hat \pi(\mu) \in \argmax_{h \in \mathcal H(\mu)} r(\mu) + \gamma V(\hat T(\mu, h))$.
\end{theorem}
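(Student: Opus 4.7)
My plan is to cast the Dec-MFC MDP as a standard discounted MDP with compact metric state space $\mathcal S = \mathcal P(\mathcal X)$ (compact under $W_1$ since $\mathcal X$ is compact metric), ambient action space $\mathcal A = \mathcal P(\mathcal X \times \mathcal Y \times \mathcal U)$, admissible correspondence $\mathcal H \colon \mathcal S \rightrightarrows \mathcal A$, transition $\hat T$, bounded continuous reward $r$, and discount $\gamma \in (0,1)$, and then invoke a classical existence theorem for discounted MDPs on compact Borel spaces (Bertsekas--Shreve or Hern\'andez-Lerma--Lasserre type). To apply it, I need to verify (i) $\mathcal H$ is compact-valued and continuous as a correspondence, (ii) $\hat T$ is jointly continuous in $(\mu, h)$ on its graph, and (iii) $r$ is continuous and bounded. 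Condition (iii) is immediate from Assumption~\ref{ass:pcont} and compactness of $\mathcal S$, while (ii) follows from Lipschitzness of $P$ by testing $\hat T(\mu, h) = \int P(\cdot \mid x, u, \mu) h(\mathrm dx, \mathrm dy, \mathrm du)$ against bounded Lipschitz functions in the dual formulation of $W_1$.

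\textbf{Key step: compactness of $\mathcal H(\mu)$ and continuity of the correspondence.} The main obstacle, and the content of Appendix~\ref{app:Hclosed}, is (i): showing that
\[
\mathcal H(\mu) = \{\mu \otimes P^y(\mu) \otimes \check \pi : \check \pi \in \mathcal P(\mathcal U)^{\mathcal Y} \text{ is } L_\Pi\text{-Lipschitz}\}
\]
is compact in $W_1$ on $\mathcal P(\mathcal X \times \mathcal Y \times \mathcal U)$, and that $\mu \mapsto \mathcal H(\mu)$ is both upper and lower hemicontinuous. Since $\mathcal U$ is finite, $\mathcal P(\mathcal U)$ embeds isometrically into a Euclidean simplex; the class of $L_\Pi$-Lipschitz maps $\check \pi \colon \mathcal Y \to \mathcal P(\mathcal U)$ is then pointwise bounded and equicontinuous, so Arzel\`a--Ascoli yields a compact family under uniform convergence on the compact domain $\mathcal Y$. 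The map $(\mu, \check \pi) \mapsto \mu \otimes P^y(\mu) \otimes \check \pi$ is jointly continuous (Lipschitzness of $P^y$ handles the first two factors, and uniform control of $\check \pi$ transfers to the joint via the bounded-Lipschitz dual of $W_1$). Hence $\mathcal H(\mu)$ is a continuous image of a compact set, and continuity in $\mu$ of this image gives hemicontinuity of the correspondence. The Hilbertian assumption on $\mathcal Y$ enters when one must, given a weak limit $h \in \overline{\mathcal H(\mu)}$, disintegrate it back into a Lipschitz $\check \pi$ defined on the support of $P^y(\mu)$ and extend the extracted kernel to all of $\mathcal Y$ preserving the Lipschitz constant; this is exactly what Kirszbraun/Valentine extension makes possible when the domain sits inside a Hilbert space and the target embeds into Euclidean space.

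\textbf{Bellman equation and optimal stationary selector.} Granted (i)--(iii), Berge's maximum theorem guarantees that the Bellman operator
\[
(\mathcal B V)(\mu) = \sup_{h \in \mathcal H(\mu)} \left[ r(\mu) + \gamma V(\hat T(\mu, h)) \right]
\]
maps bounded continuous functions on $\mathcal P(\mathcal X)$ into itself, and is a $\gamma$-contraction in the sup-norm, so Banach's fixed-point theorem delivers a unique $V^* \in C(\mathcal P(\mathcal X))$ with $V^* = \mathcal B V^*$; standard arguments (e.g.\ Hern\'andez-Lerma--Lasserre) identify $V^*$ with the optimal value of the Dec-MFC MDP. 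A measurable selector $\hat \pi(\mu) \in \argmax_{h \in \mathcal H(\mu)} \{r(\mu) + \gamma V^*(\hat T(\mu, h))\}$ exists by the measurable maximum theorem, and by construction defines an optimal stationary deterministic policy. The lower-level $\check \pi_t[h_t]$ is then recovered by disintegration of $h_t$, as described in Appendix~\ref{app:Hclosed}. The genuinely hard step is the closedness/compactness of $\mathcal H(\mu)$ under equi-Lipschitz policies; once this is in hand, the DPP and existence of the selector follow by routine discounted-MDP machinery.
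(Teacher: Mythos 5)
Your proposal is correct and the overall architecture matches the paper's: both reduce to verifying the hypotheses of a classical discounted-MDP existence theorem on compact Borel spaces (the paper invokes Theorem~4.2 of Hern\'andez-Lerma's 1992 monograph; your Berge--Banach--measurable-selector chain is the same machinery), with continuity of $\hat T$ and of $r$ handled identically via the $W_1$ dual and \cref{ass:pcont}. Where you genuinely diverge is the key lemma on $\mathcal H(\mu)$. The paper proves \emph{closedness} of $\mathcal H(\mu)$ in $W_1$ directly (Lemma~\ref{lem:Hclosed}): given $\mu_{xy} \otimes \check \pi_n \to h$, it shows by contradiction (Cantor intersection, a carefully built sequence of Lipschitz test functions) that $\check \pi_n$ must converge $\mu_y$-a.e., upgrades this to uniform convergence off a small set via Egorov, and then extends the a.e.-defined limit to an $L_\Pi$-Lipschitz policy on all of $\mathcal Y$ by Kirszbraun--Valentine --- this last step being where \cref{ass:hilbertfin} is used --- and finally gets compactness as a closed subset of the compact ambient space. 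You instead obtain compactness \emph{directly}: the equi-Lipschitz class $\check \pi \colon \mathcal Y \to \mathcal P(\mathcal U)$ is compact under uniform convergence by Arzel\`a--Ascoli (equicontinuity from the common Lipschitz constant, pointwise relative compactness from finiteness of $\mathcal U$, closedness since Lipschitz constants survive uniform limits), and $\mathcal H(\mu)$ is its image under the continuous map $\check \pi \mapsto \mu \otimes P^y(\mu) \otimes \check \pi$; the same joint continuity in $(\mu, \check \pi)$ gives both hemicontinuities of the correspondence, whereas the paper only needs and only proves lower semicontinuity. Your route is shorter and more elementary, and --- a point worth making explicit --- it renders the disintegration-plus-Kirszbraun step you mention at the end of your key paragraph redundant: the limiting $\check \pi$ is produced directly as a uniform limit along a subsequence, already defined and $L_\Pi$-Lipschitz on all of $\mathcal Y$, so no extension from a full-measure set is required and the Hilbertian structure of $\mathcal Y$ is not needed for this step. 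What the paper's heavier argument buys is a standalone closedness statement for joint measures under equi-Lipschitz kernels (Appendix~\ref{app:Hclosed}) that it advertises as a separate theoretical contribution and reuses for the disintegration remarks elsewhere; your argument establishes the same set-theoretic conclusion for the purposes of Theorem~\ref{thm:dpp} with less work.
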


\paragraph{Decentralized execution.}
Importantly, guidance by MF is only for training and not execution. An optimal upper-level policy $\hat \pi \in \argmax_{\hat \pi' \in \hat \Pi} \hat J(\hat \pi)$ is optimal also for the initial system, if it is deterministic, and an optimal one exists by Theorem~\ref{thm:dpp}. The lower-level policies $\bar \pi_t \equiv \check \pi_t$ are obtained by inserting the sequence of MFs $\hat \mu_0, \hat \mu_1, \ldots$ into $\hat \pi$, and remain non-stationary stochastic policies.

\begin{proposition} \label{prop:dec-mfc-conv}
For deterministic $\hat \pi \in \hat \Pi$, let $\hat \mu_t$ as in \eqref{eq:dec-pomfc-mdp} and $\bar \pi = \Psi(\hat \pi)$ by $\bar \pi_t(\nu) = \check \pi_t$ for all $\nu$, then $\hat J(\hat \pi) = \bar J(\bar \pi)$. Inversely, for $\bar \pi \in \bar \Pi$, let $\hat \pi_t(\nu) = \nu \otimes P^y(\nu) \otimes \bar \pi_t(\nu)$ for all $\nu$, then $\hat J(\hat \pi) = \bar J(\bar \pi)$.
\end{proposition}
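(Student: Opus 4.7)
The plan is to prove both directions of this equivalence via a straightforward induction on time $t$, showing that under the stated correspondences between upper-level and mean-field-dependent policies the two mean field trajectories $(\hat\mu_t)_t$ and $(\bar\mu_t)_t$ agree pointwise. Since both objectives are discounted sums $\sum_t \gamma^t r(\cdot_t)$ of the \emph{same} reward functional, the identity $\hat J(\hat\pi) = \bar J(\bar\pi)$ then follows term by term; the expectation in $\hat J$ is vacuous because the assumed or constructed $\hat\pi$ is deterministic, so the entire MDP trajectory is deterministic.

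For the forward direction ($\hat\pi \to \bar\pi$), given deterministic $\hat\pi$, at each $t$ the upper-level action $h_t = \hat\pi(\hat\mu_t) \in \mathcal H(\hat\mu_t)$ decomposes, by definition of $\mathcal H(\hat\mu_t)$, as $\hat\mu_t \otimes P^y(\hat\mu_t) \otimes \check\pi_t$ for some $L_\Pi$-Lipschitz lower-level kernel $\check\pi_t$; set $\bar\pi = \Psi(\hat\pi)$ by $\bar\pi_t(\nu) \equiv \check\pi_t$ for all $\nu$. The induction starts from $\hat\mu_0 = \mu_0 = \bar\mu_0$, and the step from $\hat\mu_t = \bar\mu_t$ to $\hat\mu_{t+1} = \bar\mu_{t+1}$ follows by substituting $h_t = \hat\mu_t \otimes P^y(\hat\mu_t) \otimes \check\pi_t$ into the MDP transition \eqref{eq:dec-pomfc-mdp}, yielding
\begin{align*}
\hat T(\hat\mu_t, h_t) &= \iiint P(x,u,\hat\mu_t)\, \check\pi_t(\mathrm du \mid y)\, P^y(\mathrm dy \mid x,\hat\mu_t)\, \hat\mu_t(\mathrm dx) \\
&= T(\bar\mu_t, \bar\pi_t(\bar\mu_t)) = \bar\mu_{t+1},
\end{align*}
which is exactly \eqref{eq:dec-mfc}.

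For the inverse direction ($\bar\pi \to \hat\pi$), define $\hat\pi_t(\nu) \coloneqq \nu \otimes P^y(\nu) \otimes \bar\pi_t(\nu)$; this is a deterministic upper-level policy taking values in $\mathcal H(\nu)$ by construction, using that $\bar\pi_t(\nu)$ lies in the assumed $L_\Pi$-Lipschitz class. The same induction applies: starting from $\hat\mu_0 = \bar\mu_0$, plugging $h_t = \hat\pi_t(\hat\mu_t)$ into $\hat T$ collapses by Fubini to $T(\bar\mu_t, \bar\pi_t(\bar\mu_t))$, so $\hat\mu_t = \bar\mu_t$ for all $t$, and hence $\hat J(\hat\pi) = \bar J(\bar\pi)$.

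The only mildly delicate point I anticipate is the disintegration in the forward direction: the lower-level kernel $\check\pi_t$ extracted from $h_t$ is only defined up to $(\hat\mu_t \otimes P^y(\hat\mu_t))$-null sets, but since $\hat T$ integrates $\check\pi_t$ against exactly that product measure, any measurable representative suffices, and existence of an $L_\Pi$-Lipschitz representative is precisely the content of closedness of $\mathcal H(\mu)$ under equi-Lipschitz kernels (Appendix~\ref{app:Hclosed}). Measurability of $\nu \mapsto \hat\pi_t(\nu)$ in the inverse direction is routine from continuity of $P^y$ and measurability of $\bar\pi_t$ in its arguments, so that direction reduces to bookkeeping.
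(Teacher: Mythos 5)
Your proof is correct and follows essentially the same route as the paper: an induction showing $\hat\mu_t = \bar\mu_t$ for all $t$ by matching the MDP transition $\hat T(\hat\mu_t, h_t)$ with the Dec-MFC transition $T(\bar\mu_t, \bar\pi_t(\bar\mu_t))$, from which equality of the discounted objectives is immediate since $\hat\pi$ is deterministic. Your extra remarks on the null-set ambiguity of the disintegration and on measurability are sensible additional care that the paper's terser proof leaves implicit.
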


Note that the determinism of the \textit{upper-level policy} is strictly necessary: A simple counterexample is a problem where agents should choose to aggregate to one state. If the upper-level policy randomly chooses between moving all agents to either $A$ or $B$, then a corresponding random agent policy splits agents and fails to aggregate. At the same time, randomization of \textit{agent actions} remains necessary for optimality, as the problem of equally spreading would require uniformly random agent actions.

\paragraph{Complexity.}
Tractability of multi-agent control heavily depends on information structure \citep{mahajan2012information}. General Dec-POMDPs have doubly-exponential complexity (NEXP, \citet{bernstein2002complexity}) and are harder than fully observable control (PSPACE, \citet{papadimitriou1987complexity}). In contrast, Dec-POMFC surprisingly imposes little additional complexity over standard MFC, as the MFC MDP remains deterministic in the absence of common noise correlating agents \citep{carmona2016mean}. An analysis with common noise is possible, e.g. if observing the mean field, but out of scope.

\section{Dec-POMFC Policy Gradient Methods} \label{sec:RL}
All that remains is to solve Dec-MFC MDPs. As we obtain continuous Dec-MFC MDP states and actions even for finite $\mathcal X$, $\mathcal Y$, $\mathcal U$, and infinite-dimensional ones for continuous $\mathcal X$, $\mathcal Y$, $\mathcal U$, a value-based approach can be hard. Our policy gradient (PG) approach allows finding simple policies for collective behavior, with emergence of global intelligent behavior described by rewards $r$, under arbitrary (Lipschitz) policies. For generality, we use NN upper-level and kernel lower-level policies. While lower-level (Lipschitz, \citep{araujo2023a}) NNs policies could be considered akin to hypernetworks \citep{ha2016hypernetworks}, the resulting distributions over NN parameters as MDP actions are too high-dimensional and failed in our experiments. We directly solve finite-agent MFC-type Dec-POMDPs by solving the Dec-MFC MDP in the background. Indeed, the \textbf{theoretical optimality} of Dec-MFC MDP solutions is guaranteed over Lipschitz policies in $\Pi$.

\begin{corollary} \label{coro:finalopt}
Under \crefrange{ass:pcont}{ass:picont}, a deterministic Dec-MFC solution $\hat \pi \in \argmax_{\hat \pi'} \hat J(\hat \pi')$ is $\epsilon$-optimal in the Dec-POMDP, $J^N(\Phi(\Psi(\hat \pi))) \geq \sup_{\pi' \in \Pi} J^N(\pi') - \epsilon$, with $\epsilon \to 0$ as $N \to \infty$.
\end{corollary}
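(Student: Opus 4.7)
The plan is to chain the three reductions already established in the paper: from Dec-MFC MDP to Dec-MFC via $\Psi$, from Dec-MFC to Dec-POMFC via $\Phi$, and from Dec-POMFC to the finite MFC-type Dec-POMDP via mean-field convergence. Each link preserves (or approximately preserves) optimality, so composing them yields the desired guarantee.

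First, since $\hat \pi$ is assumed deterministic, the forward direction of Proposition \ref{prop:dec-mfc-conv} gives $\bar \pi \coloneqq \Psi(\hat \pi) \in \bar \Pi$ with $\bar J(\bar \pi) = \hat J(\hat \pi)$. To see that $\bar \pi$ is itself optimal over $\bar \Pi$, I would invoke the inverse direction of the same proposition: every $\bar \pi' \in \bar \Pi$ lifts to some $\hat \pi'$ with $\hat J(\hat \pi') = \bar J(\bar \pi')$, so $\sup_{\bar \pi' \in \bar \Pi} \bar J(\bar \pi') \leq \sup_{\hat \pi'} \hat J(\hat \pi') = \hat J(\hat \pi) = \bar J(\bar \pi)$.

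Next, Corollary \ref{coro:dec-mfc-opt} (equivalently, Proposition \ref{prop:dec-pomfc-conv}) yields $\pi \coloneqq \Phi(\bar \pi) = \Phi(\Psi(\hat \pi)) \in \Pi$ with $J(\pi) = \sup_{\pi' \in \Pi} J(\pi')$, i.e., optimality in Dec-POMFC. Finally, Corollary \ref{coro:epsopt} applied to this $\pi$ gives $J^N(\pi) \geq \sup_{\pi' \in \Pi} J^N(\pi') - \varepsilon$ with $\varepsilon \to 0$ as $N \to \infty$, which is exactly the claim with $\epsilon \coloneqq \varepsilon$.

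The only subtlety — and hence the main thing to verify carefully — is that the policy class $\Pi$ is preserved under the compositions $\Phi$ and $\Psi$, so that Corollary \ref{coro:epsopt} and the supremum on the right-hand side really range over the same $L_\Pi$-Lipschitz class. Here $\Psi$ extracts a lower-level $\check \pi_t$ from each $h_t \in \mathcal H(\hat \mu_t)$, which is $L_\Pi$-Lipschitz by construction of $\mathcal H(\cdot)$, and $\Phi$ merely substitutes the deterministic MF trajectory $(\hat \mu_0, \hat \mu_1, \ldots)$ into the MF-indexed policy, so Lipschitzness in $y$ is inherited. With this bookkeeping in place, no new analytic estimates are required beyond the chain of references.
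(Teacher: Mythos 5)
Your proposal is correct and follows essentially the same route as the paper: both chain Proposition~\ref{prop:dec-mfc-conv} (in both directions), Proposition~\ref{prop:dec-pomfc-conv}/Corollary~\ref{coro:dec-mfc-opt}, and Corollary~\ref{coro:epsopt}; the paper merely organizes the first two links as a single proof by contradiction, whereas you establish optimality of $\Psi(\hat\pi)$ in $\bar\Pi$ first and then invoke Corollary~\ref{coro:dec-mfc-opt}. Your closing remark on $\Phi$ and $\Psi$ preserving the $L_\Pi$-Lipschitz class is a reasonable piece of bookkeeping that the paper leaves implicit.
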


\paragraph{Histogram vs. kernel parametrizations.}
Except for linear-quadratic algorithms \citep{wang2021global, fu2019actor, carmona2019linear}, the only approach to learning MFC in continuous spaces $\mathcal X \subseteq \mathbb R^n$, $n \in \mathbb N$ (and here $\mathcal Y$) is by partitioning and "discretizing" \citep{carmona2019model, gu2021mean}.\footnote{Existing Q-Learning with kernel regression \citep{gu2021mean} is for \textit{finite} states $\mathcal X$ with kernels on $\mathcal P(\mathcal X)$, and learns on the MFC MDP. We allow \textit{continuous} $\mathcal Y$ by kernels on $\mathcal Y$ itself, and learn on the finite-agent system.} Unfortunately, partitions fail Lipschitzness and hence approximation guarantees, even in standard MFC. Instead, we use kernel representations for MFs $\mu^N_t$ and lower-level policies $\check \pi_t$.

We represent $\mathcal P(\mathcal X)$-valued MDP states $\mu^N_t$ not by counting agents in each bin, but instead mollify around each center $x_b \in \mathcal X$ of $M_{\mathcal X}$ bins $b \in [M_{\mathcal X}]$ using kernels. The result is Lipschitz and approximates histograms arbitrarily well \citep[Theorem~1]{miculescu2000approximation}. Hence, we obtain input logits $I_b = \int \kappa(x_b, \cdot) \mathrm d\mu^N_t = \frac 1 N \sum_{i \in [N]} \kappa(x_b, x^i_t)$ for some kernel $\kappa \colon \mathcal X \times \mathcal X \to \mathbb R$ and $b \in [M_{\mathcal X}]$. Output logits constitute mean and log-standard deviation of a diagonal Gaussian over parameter representations $\xi \in \Xi$ of $\check \pi_t$. We obtain Lipschitz $\check \pi_t$ by representing $\check \pi_t$ via $M_{\mathcal Y}$ points $y_b \in \mathcal Y$ such that $\check \pi_t(u \mid y) = \sum_{b \in [M_{\mathcal Y}]} \kappa(y_b, y) p_b(u) / \sum_{b \in [M_{\mathcal Y}]} \kappa(y_b, y)$. Here, we consider $L_{\lambda}$-Lipschitz maps $\lambda_b$ from parameters $\xi \in \Xi$ to distributions $p_b = \lambda_b(\xi) \in \mathcal P(\mathcal U)$ with compact parameter space $\Xi$, and for kernels choose RBF kernels $\kappa(x, y) = \exp(- \lVert x - y \rVert^2 / (2\sigma^2))$ with some bandwidth $\sigma^2 > 0$. 

\begin{proposition} \label{prop:rbfcont}
Under RBF kernels $\kappa$, for any $\xi$ and Euclidean $\mathcal Y$, lower-level policies $\Lambda(\xi)(\cdot \mid y) \coloneqq \sum_{b \in [M_{\mathcal Y}]} \kappa(y_b, y) \lambda_b(\xi) / \sum_{b \in [M_{\mathcal Y}]} \kappa(y_b, y)$ are $L_\Pi$-Lipschitz in $y$ as in Assumption~\ref{ass:picont}, whenever $\sigma^2 \exp^2 \left( -\frac{1}{2\sigma^2}  \operatorname{diam}(\mathcal Y)^2 \right) \geq \frac{1}{L_\Pi} \operatorname{diam}(\mathcal Y) \operatorname{diam}(\mathcal U)\max_{y \in \mathcal Y} \lVert y \rVert$, and such $\sigma^2 > 0$ always exists.
\end{proposition}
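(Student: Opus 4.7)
The plan is to reduce the Wasserstein Lipschitz estimate to a scalar Lipschitz bound on the RBF weights, to control those weights via the quotient rule together with a uniform lower bound on the normalizer, and finally to observe that the resulting Lipschitz constant can be made arbitrarily small by enlarging the bandwidth. Throughout I set $p_b \coloneqq \lambda_b(\xi)$, $S(y) \coloneqq \sum_{b'} \kappa(y_{b'}, y)$, and $w_b(y) \coloneqq \kappa(y_b, y)/S(y)$, so that $\Lambda(\xi)(\cdot \mid y) = \sum_b w_b(y)\, p_b$ is a convex mixture of fixed probability measures whose $y$-dependence is concentrated in the weights.

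For the reduction, the coupling that leaves the common mass $\sum_b \min(w_b(y), w_b(y'))\, p_b$ in place and transports the residual across $\mathcal U$ is valid because each $p_b$ is a probability measure and $\mathcal U$ is compact, which yields
\[W_1\bigl(\Lambda(\xi)(\cdot\mid y),\, \Lambda(\xi)(\cdot\mid y')\bigr) \;\leq\; \tfrac{1}{2}\operatorname{diam}(\mathcal U)\sum_b |w_b(y) - w_b(y')|.\]
It thus suffices to bound $\sum_b |w_b(y) - w_b(y')|$ by a multiple of $\lVert y - y' \rVert$.

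For the weight bound, I apply the quotient rule
\[w_b(y) - w_b(y') \;=\; \frac{\kappa(y_b, y)\bigl(S(y') - S(y)\bigr) + S(y)\bigl(\kappa(y_b, y) - \kappa(y_b, y')\bigr)}{S(y)\, S(y')}\]
and estimate each ingredient with the explicit Gaussian form. The gradient $\nabla_y \kappa(y_b, y) = -(y - y_b)\kappa(y_b, y)/\sigma^2$ combined with $\lVert y - y_b \rVert \leq 2\max_{y \in \mathcal Y}\lVert y \rVert$ gives a Lipschitz constant on each $\kappa(y_b, \cdot)$, and hence on $S$, of order $\max \lVert y \rVert/\sigma^2$, explaining the $\max \lVert y \rVert$ factor in the hypothesis. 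For the denominator, $\lVert y_b - y \rVert \leq \operatorname{diam}(\mathcal Y)$ yields the uniform bound $\kappa(y_b, y) \geq \exp(-\operatorname{diam}(\mathcal Y)^2/(2\sigma^2))$, whence $S(y)\, S(y') \geq \exp^2(-\operatorname{diam}(\mathcal Y)^2/(2\sigma^2))$, matching the $\exp^2$ factor. Summing over $b$, using $\kappa \leq 1$ in the numerator and absorbing the surviving $M_{\mathcal Y}$-dependent terms via the partition-of-unity identity $\sum_b \kappa(y_b, y) = S(y)$ together with the multiplicative $\operatorname{diam}(\mathcal Y)$ factor, the resulting Lipschitz constant of $\Lambda(\xi)(\cdot\mid\cdot)$ takes the form $\operatorname{diam}(\mathcal Y)\operatorname{diam}(\mathcal U)\max\lVert y \rVert / [\sigma^2 \exp^2(-\operatorname{diam}(\mathcal Y)^2/(2\sigma^2))]$; the stated hypothesis is precisely the requirement that this quantity is at most $L_\Pi$.

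For the existence claim, the left-hand side $\sigma^2\exp^2(-\operatorname{diam}(\mathcal Y)^2/(2\sigma^2)) = \sigma^2 e^{-\operatorname{diam}(\mathcal Y)^2/\sigma^2}$ tends to $\infty$ as $\sigma^2 \to \infty$ (the exponential tends to $1$), while the right-hand side is constant in $\sigma^2$, so some $\sigma^2 > 0$ satisfies the inequality. I expect the main technical obstacle to lie in the quotient-rule step, specifically in organizing the bound carefully enough that the spurious factor of $M_{\mathcal Y}$ is eliminated via the partition-of-unity identity before lower-bounding $S$ in the denominator; beyond this, the argument reduces to routine Gaussian calculus and a standard transport coupling on the compact action space $\mathcal U$.
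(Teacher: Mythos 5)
Your proposal is correct and follows essentially the same route as the paper's proof: reduce the $W_1$ bound to $\tfrac12\operatorname{diam}(\mathcal U)$ times the $\ell_1$ distance of the normalized kernel weights (you via an explicit common-mass coupling, the paper via Kantorovich--Rubinstein duality with a centered test function), then control the weights by the quotient rule, the Lipschitz bound on each Gaussian kernel carrying the $\max_y\lVert y\rVert$ factor, and the uniform lower bound $\kappa(y_b,y)\geq\exp(-\operatorname{diam}(\mathcal Y)^2/(2\sigma^2))$ on the normalizer, with the $M_{\mathcal Y}$ factors cancelling exactly as you anticipate. The only caveat is bookkeeping: as stated, lower-bounding $S(y)S(y')$ by $\exp^2(\cdot)$ while upper-bounding the numerator termwise leaves a spurious $M_{\mathcal Y}^2$, so one must keep one factor of $S$ uncancelled until it meets its counterpart in the numerator (as you flag); done carefully this in fact yields a slightly sharper constant than the paper's, so the stated sufficient condition still follows.
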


Proposition~\ref{prop:rbfcont} ensures Assumption~\ref{ass:picont} if needed. To achieve optimality by Corollary~\ref{coro:finalopt}, deterministic policies commonly result from convergence of stochastic PGs, taking mean actions, or are guaranteed by deterministic PGs \citep{silver2014deterministic, lillicrap2016continuous}. Beyond allowing for (i) Lipschitz guarantees, and (ii) finer control over agent actions, another advantage of kernels is (iii) the improved complexity over histograms. Even a histogram with only $2$ bins per dimension requires $2^d$ bins in $d$-dimensional spaces, while kernel representations may place e.g. $2$ points per dimension, improving upon the otherwise necessarily exponential complexity, see also Appendix~\ref{app:app-exp-start} for empirical support.

\paragraph{Direct multi-agent reinforcement learning algorithm.}
Applying RL directly to the Dec-MFC MDP would be satisfactory only under known MFC models. Importantly, (i) we do not always have access to the model, and (ii) even if we do, parametrizing MFs in arbitrary compact $\mathcal X$ is hard. Instead, it is more practical and tractable to train on a finite system. Our direct MARL approach hence trains on a finite $N$-agent MFC-type Dec-POMDP of interest, in a model-free manner. In order to exploit the underlying MDP, our algorithm assumes \textit{during training} that (i) the MF is observed, and (ii) agents can correlate actions (e.g. centrally, or sharing seeds). Therefore, the finite system \eqref{eq:dec-pomdp} is adjusted for training by correlating agent actions on a single centrally sampled lower-level policy $\check \pi_t$. Now write $\hat \pi^\theta(\xi_t \mid \tilde \mu^N_t)$ as density over parameters $\xi_t \in \Xi$ under a base measure (discrete, Lebesgue). Substituting $\xi_t$ as actions parametrizing $h_t$ in the MDP \eqref{eq:dec-pomfc-mdp}, e.g. by using RBF kernels, yields the \textbf{centralized training} system as seen in Figure~\ref{fig:overview1} for stationary policy $\hat \pi^\theta$ parametrized by $\theta$,
\begin{align}
\begin{split} \label{eq:ctde}
    \check \pi_t &= \Lambda(\tilde \xi_t), \quad \tilde \xi_t \sim \hat \pi^\theta(\tilde \mu^N_t), \\
    \tilde y^i_t \sim P^y(\tilde y^i_t \mid \tilde x^i_t, \tilde \mu^N_t), \quad
    \tilde u^i_t &\sim \check \pi_t(\tilde u^i_t \mid \tilde y^i_t), \quad
    \tilde x^i_{t+1} \sim P(\tilde x^i_{t+1} \mid \tilde x^i_t, \tilde u^i_t, \tilde \mu^N_t), \quad \forall i \in [N].
\end{split}
\end{align}

\paragraph{Policy gradient approximation.}
Since we train on a finite system, it is not immediately clear whether centralized training really yields the PG for the underlying Dec-MFC MDP, also in existing literature for learning MFC. We will show this practically relevant fact up to an approximation. The general PG for stationary $\hat \pi^\theta$ \citep{sutton1999policy, peters2008natural} is
$\nabla_\theta J(\hat \pi^\theta) = (1-\gamma)^{-1} \E_{\mu \sim d_{\hat \pi^\theta}, \xi \sim \hat \pi^\theta(\mu)} \left[ Q^\theta(\mu, \xi) \nabla_\theta \log \hat \pi^\theta(\xi \mid \mu) \right]$
with $Q^\theta(\hat \mu, \xi) = \E [ \sum_{t=0}^{\infty} \gamma^t r(\hat \mu_t) \mid \hat \mu_0 = \mu, \xi_0 = \xi ]$ under parametrized actions $\xi_t$ in \eqref{eq:dec-pomfc-mdp}, and using sums $d_{\hat \pi^\theta} = (1-\gamma) \sum_{t \in \mathcal T} \gamma^t \mathcal L_{\hat \pi^\theta}(\hat \mu_t)$ of laws of $\hat \mu_t$ under $\hat \pi^\theta$. Our approximation motivates MFC for MARL by showing that the \textit{underlying background Dec-MFC MDP} is approximately solved under Lipschitz parametrizations, e.g. we normalize parameters $\xi$ to finite action probabilities, or use bounded diagonal Gaussian parameters.
\begin{assumption} \label{ass:ctde}
    The policy $\hat \pi^\theta(\xi \mid \mu)$ and its log-gradient $\nabla_\theta \log \hat \pi^\theta(\xi \mid \mu)$ are $L_{\hat \Pi}$, $L_{\nabla \hat \Pi}$-Lipschitz in $\mu$ and $\xi$ (or alternatively in $\mu$ for any $\xi$, and uniformly bounded). The parameter-to-distribution map is $\Lambda(\xi)(\cdot \mid y) \coloneqq \sum_{b} \kappa(y_b, y) \lambda_b(\xi)(\cdot) / \sum_{b} \kappa(y_b, y)$, with kernels $\kappa$ and $L_\lambda$-Lipschitz $\lambda_b \colon \Xi \to \mathcal P(\mathcal U)$.
\end{assumption}
\begin{theorem} \label{thm:ctde}
    Centralized training on system \eqref{eq:ctde} approximates the true gradient of the underlying Dec-MFC MDP, i.e. under RBF kernels $\kappa$ as in Proposition~\ref{prop:rbfcont}, \crefrange{ass:pcont}{ass:picont} and \ref{ass:ctde}, as $N \to \infty$,
    \begin{align*}
        \left\Vert (1-\gamma)^{-1} \E_{\mu \sim d^N_{\hat \pi^\theta}, \xi \sim \hat \pi^\theta(\mu)} \left[ \tilde Q^\theta(\mu, \xi) \nabla_\theta \log \hat \pi^\theta(\xi \mid \mu) \right] - \nabla_\theta J(\hat \pi^\theta) \right\Vert \to 0
    \end{align*}
    with $d^N_{\hat \pi^\theta} = (1-\gamma) \sum_{t \in \mathcal T} \gamma^t \mathcal L_{\hat \pi^\theta}(\tilde \mu^N_t)$ and $\tilde Q^\theta(\mu, \xi) = \E \left[ \sum_{t=0}^{\infty} \gamma^t r(\tilde \mu^N_t) \innermid \mu_0 = \mu, \xi_0 = \xi \right]$.
\end{theorem}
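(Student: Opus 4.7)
The plan is to split the gradient difference via the triangle inequality into a $Q$-function error (finite vs.\ limit, under the finite-system occupation measure) and an occupation-measure error (finite vs.\ limit, evaluated against the limit $Q$-function), then truncate the infinite time sum at a horizon $T$, and finally apply the uniform mean field convergence of Theorem~\ref{thm:mf_conv} to each of the finitely many terms before letting $T \to \infty$.

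Concretely, I would first rewrite both gradients in REINFORCE form, expressing the finite-system expression as $\sum_{t \ge 0} \gamma^t \E[\, r(\tilde \mu^N_t) \sum_{s \le t} \nabla_\theta \log \hat \pi^\theta(\tilde \xi_s \mid \tilde \mu^N_s)\,]$ and the Dec-MFC MDP gradient analogously with the deterministic $\mu_t$ from \eqref{eq:dec-pomfc-mdp}. Boundedness of $r$ on the compact $\mathcal P(\mathcal X)$ (from Lipschitzness plus compactness in Assumption~\ref{ass:pcont}) and boundedness of $\nabla_\theta \log \hat \pi^\theta$ from Assumption~\ref{ass:ctde} make the tail beyond $T$ of order $O(T\gamma^T/(1-\gamma))$, uniformly in $N$ and $\theta$, so it suffices to control the first $T$ summands.

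For each $t \le T$, the corresponding summand is of the form $\E[\,F_t(\tilde \mu^N_0, \ldots, \tilde \mu^N_t)\,]$ for a test functional $F_t \colon \mathcal P(\mathcal X)^{t+1} \to \mathbb R$ obtained by integrating out the $\tilde \xi_s$. I would prove by induction on $t$ that $\{F_0, F_1, \ldots, F_T\}$ is an equi-Lipschitz, uniformly bounded family: the RBF parametrization of Proposition~\ref{prop:rbfcont} combined with Assumption~\ref{ass:ctde} gives $\Lambda(\xi)$ that is $L_\Pi$-Lipschitz in $y$ uniformly in $\xi$ and $L_\lambda$-Lipschitz in $\xi$; together with Lipschitz $P$, $P^y$, $r$ this makes the one-step map $\hat T(\mu, h)$ Lipschitz in $(\mu, \xi)$, and Lipschitzness of $\hat \pi^\theta$ and $\nabla_\theta \log \hat \pi^\theta$ in $(\mu, \xi)$ lets the recursion preserve (equi-)Lipschitzness, with a constant growing at most geometrically in $t$. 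Because $\Lambda$ yields an equi-Lipschitz family of lower-level policies, Assumption~\ref{ass:picont} holds uniformly over realizations of $\xi$, so Theorem~\ref{thm:mf_conv} applies and yields $\E[F_t(\tilde \mu^N_0, \ldots, \tilde \mu^N_t)] \to F_t(\mu_0, \ldots, \mu_t)$ as $N \to \infty$, uniformly in $\theta$ and in $t \le T$.

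The main obstacle I expect is precisely this induction step, because each $F_t$ involves a nested expectation over sampled parameters $\xi_s$ whose conditional laws are themselves functions of the intermediate $\tilde \mu^N_s$; small perturbations of the mean-field trajectory therefore perturb both the integrand and the sampling distribution simultaneously. The key technical lever is the joint Lipschitzness in $(\mu, \xi)$ provided by Assumption~\ref{ass:ctde}, which (together with a standard $W_1$ coupling argument for the $\hat \pi^\theta(\cdot \mid \mu)$-valued Markov kernel) lets me dominate both effects by a constant depending only on $t$. Once the equi-Lipschitz family is in hand, summing the Theorem~\ref{thm:mf_conv} estimates over $t \le T$, then sending first $N \to \infty$ and then $T \to \infty$, closes the argument.
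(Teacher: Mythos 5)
Your overall skeleton matches the paper's proof: the same triangle-inequality split into a $Q$-function error and an occupation-measure error, the same truncation at a horizon $T$ with a geometric tail bound from boundedness of $r$ and $\nabla_\theta \log \hat\pi^\theta$, and the same inductive construction of equicontinuous test functionals (built from the Lipschitzness of the parametrized transition $\hat T$ in $(\mu,\xi)$, which the paper isolates as Lemma~\ref{lem:hatTmodcont}) so that a propagation-of-chaos statement can be applied termwise. However, two of your steps need repair. First, the REINFORCE rewriting $\sum_{t}\gamma^t \E[r(\tilde\mu^N_t)\sum_{s\le t}\nabla_\theta\log\hat\pi^\theta(\tilde\xi_s\mid\tilde\mu^N_s)]$ is not an exact identity for the expression in the theorem: $\tilde Q^\theta(\mu,\xi)$ is defined on a \emph{restarted} finite system with initial distribution $\mu$, not as the continuation value along the trajectory conditioned on the empirical measure $\tilde\mu^N_t=\mu$, and these differ at finite $N$. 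The paper avoids this by never invoking the policy-gradient identity for the finite-system expression; it instead bounds $\sup_{\mu,\xi}|\tilde Q^\theta(\mu,\xi)-Q^\theta(\mu,\xi)|$ directly via a separate conditional propagation-of-chaos argument (Lemma~\ref{lem:tildeQconv}) together with continuity of $Q^\theta$ in $(\mu,\xi)$ (Lemma~\ref{lem:Qcont}), both of which your $Q$-error term also requires but which you do not supply.

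Second, Theorem~\ref{thm:mf_conv} cannot be applied verbatim to the centralized training system \eqref{eq:ctde}: there, all agents share a single randomly sampled parameter $\tilde\xi_t\sim\hat\pi^\theta(\tilde\mu^N_t)$, which acts as common noise correlating the agents, whereas Theorem~\ref{thm:mf_conv} is proved for the Dec-POMDP in which agents act independently under a fixed policy. Your remark that Assumption~\ref{ass:picont} ``holds uniformly over realizations of $\xi$'' does not resolve this, because the conditional independence used in the weak LLN step of Theorem~\ref{thm:mf_conv} only holds after conditioning on $\tilde\xi_t$ as well as $\tilde x^N_t$, and the induction step must then propagate equicontinuity through the map $\mu\mapsto\int f(\hat T(\mu,\xi))\,\hat\pi^\theta(\xi\mid\mu)\,\mathrm d\xi$, which mixes the perturbation of the integrand with the perturbation of the sampling law. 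This is exactly the content of the paper's Lemma~\ref{lem:tildemuconv}; you correctly identify this coupling as the main obstacle and your proposed remedy (joint Lipschitzness from Assumption~\ref{ass:ctde} plus a $W_1$ coupling for the kernel $\hat\pi^\theta(\cdot\mid\mu)$) is in substance the right one, but as written your proposal leans on a theorem that does not cover the system at hand.
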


The value function $\tilde Q^\theta$ in the finite system is then substituted in actor-critic manner by on-policy and critic estimates. The Lipschitz conditions of $\hat \pi^\theta$ in Assumption~\ref{ass:ctde} are fulfilled by Lipschitz NNs \citep{pasztor2021efficient, mondal2022approximation, araujo2023a} and our parametrizations. The approximation is novel, building a foundation for MARL via MFC directly on a finite MARL problem. Our results also apply to fully observable MFC by $y_t = x_t$. Though gradient estimates allow convergence guarantees in finite MDPs (e.g. \citet[Theorem~5]{qu2020scalable}), Dec-MFC MDP state-actions are always non-finite. In practice, we use empirically more efficient proximal policy optimization (PPO, \citet{schulman2017proximal, yu2021surprising}) to obtain the \textit{decentralized partially observable mean field PPO} algorithm (Dec-POMFPPO, Algorithm~\ref{alg:pomfppo}).

\begin{algorithm}[t]
    \caption{Dec-POMFPPO (during centralized training)}
    \label{alg:pomfppo}
    \begin{algorithmic}[1]
        \FOR {iteration $n = 1, 2, \ldots$}
            \FOR {time $t = 0, \ldots, B_{\mathrm{len}}-1$}
                \STATE Sample central Dec-MFC MDP action $\check \pi_t = \Lambda(\xi_t)$, $\xi_t \sim \hat \pi^\theta(\tilde \mu^N_t)$.
                \FOR {agent $i = 1, \ldots, N$}
                    \STATE Sample per-agent action $\tilde u^i_t \sim \check \pi_t(\tilde u^i_t \mid \tilde y^i_t)$ for observation $\tilde y^i_t$.
                \ENDFOR
                \STATE Perform actions, observe reward $r(\tilde \mu^N_t)$, next MF $\tilde \mu^N_{t+1}$, termination flag $d_{t+1} \in \{0, 1\}$.
            \ENDFOR
            \FOR {updates $i = 1, \ldots, N_{\mathrm{PPO}}$}
                \STATE Sample mini-batch $b$, $|b| = b_{\mathrm{len}}$ from data $B \coloneqq ( (\tilde \mu^N_t, \xi_t, r^N_t, d_{t+1}, \tilde \mu^N_{t+1}) )_{t \geq 0}$.
                \STATE Update policy $\hat \pi^\theta$ via PPO loss $\nabla_\theta L_\theta$ on $b$, using GAE \citep{schulman2016high}.
                \STATE Update critic $V^{\theta'}$ via critic $L_2$-loss $\nabla_{\theta'} L_{\theta'}$ on $b$.
            \ENDFOR
        \ENDFOR
    \end{algorithmic}
\end{algorithm}

By Theorem~\ref{thm:ctde}, we may learn directly on the MFC-type Dec-POMDP system \eqref{eq:dec-pomdp}. During training, the algorithm (i) assumes to observe the MF, and (ii) samples only one centralized $h_t$. Knowledge of the MF during training aligns our framework with the popular \textit{centralized training, decentralized execution} (CTDE) paradigm. During execution, decentralized policies suffice for near-optimality by Corollary~\ref{coro:finalopt} without agents knowing the MF or coordinating centrally. Decentralized training can also be achieved, if the MF is observable and all agents use the same seed to correlate their actions.

\section{Evaluation} \label{sec:exp}
In this section, we empirically evaluate our algorithm, comparing against \textit{independent} and \textit{multi-agent PPO} (IPPO, MAPPO) with state-of-the-art performance \citep{yu2021surprising, papoudakis2021benchmarking}. For comparison, we share hyperparameters and architectures between algorithms, see \crefrange{app:app-exp-start}{app-exp-end}.

\paragraph{Problems.}
In the \textbf{Aggregation} problem we consider a typical continuous single integrator model, commonly used in the study of swarm robotics \citep{soysal2005report, bahgeci2005report}. Agents observe their own position noisily and should aggregate. 
The classical \textbf{Kuramoto} model is used to study synchronization of coupled oscillators, finding application not only in physics, including quantum computation and laser arrays \citep{acebron2005survey}, but also in diverse biological systems, such as neuroscience and pattern formation in self-organizing systems \citep{breakspear2010generative, kruk2020traveling}. 
Here, via partial observability, we consider a version where each oscillator can see the distribution of relative phases of its neighbors. Finally, we implement the Kuramoto model on a random geometric graph (e.g. \citep{diaz2009synchronization}) via omitting movement in its independent generalization, the \textbf{Vicsek} model \citep{vicsek1995novel, vicsek2012collective}. Agents $j$ have two-dimensional position $p^j_t$ and current headings $\phi^j_t$, to be controlled by their actions. The key metric of interest for both Kuramoto and Vicsek is polarization via the \textit{polar order parameter} $R = | \frac 1 N \sum_{j} \exp(i\phi^j_t) |$. Here, $R$ ranges from $0$ -- fully unsynchronized -- to $1$ -- perfect alignment of agents. 
Experimentally, we consider various environments, such as the torus, Möbius strip, projective plane and Klein bottle. Importantly, agents only observe relative headings of others.

\paragraph{Training results.} In Figure \ref{fig:training} it is evident that the training process of MFC for many agents is relatively stable by guidance via MF and reduction to single-agent RL. In Appendix~\ref{app:app-exp-start}, we also see similar results with significantly fewer agents and comparable to the results obtained with a larger number of agents. This observation highlights that the training procedure yields satisfactory outcomes, even in scenarios where the mean field approximation may not yet be perfectly exact. These findings underscore the generality of the proposed framework and its ability to adapt across different regimes. On the same note, we see by comparison with Figure~\ref{fig:MARL_training}, that our method is usually on par with state-of-the-art IPPO and MAPPO for many agents, e.g. here $N=200$. 

\begin{figure}
    \centering
    \includegraphics[width=0.85\linewidth]{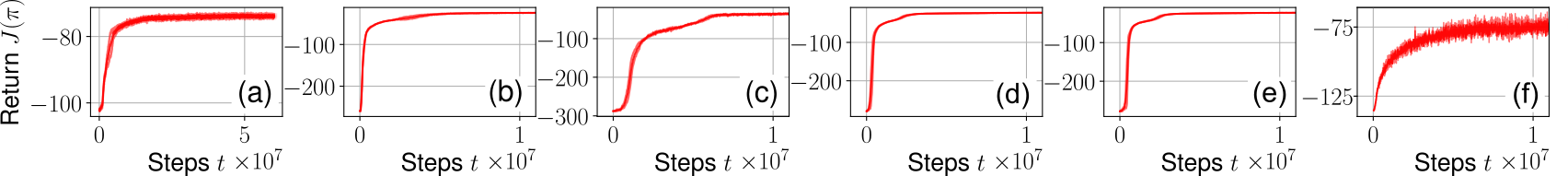}
    \caption{Dec-POMFPPO training curves (episode return) with shaded standard deviation over $3$ seeds for $N=200$ in (a) Aggregation; Vicsek on a (b): torus; (c): Möbius strip; (d): projective plane; (e): Klein bottle; and (f) Kuramoto on a torus.}
    \label{fig:training}
\end{figure}

\begin{figure}
    \centering
    \includegraphics[width=0.85\linewidth]{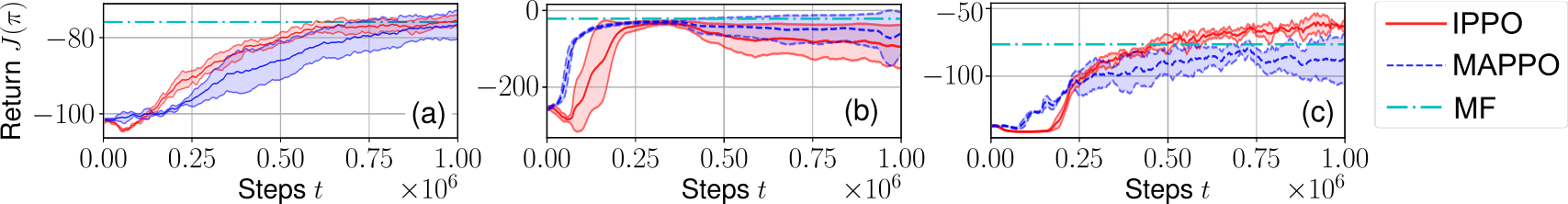}
    \caption{Training curves (episode return) with shaded standard deviation over $3$ seeds and $N=200$, in (a) Aggregation (box), (b) Vicsek (torus), (c) Kuramoto (torus). For comparison, we also plot the best return averaged over $3$ seeds for Dec-POMFPPO in Figure~\ref{fig:training} (MF).}
    \label{fig:MARL_training}
\end{figure}

\paragraph{Verification of theory.}
In Figure~\ref{fig:convergence}, as the number of agents rises, the performance quickly tends to its limit, i.e. the objective converges, supporting Theorem~\ref{thm:mf_conv} and Corollary~\ref{coro:epsopt}, as well as applicability to arbitrarily many agents. Analogously, conducting open-loop experiments on our closed-loop trained system in Figure~\ref{fig:open} demonstrates the robust generality of learned collective behavior with respect to the randomly sampled initial agent states, supporting Theorem~\ref{coro:finalopt} and Corollary~\ref{coro:dec-mfc-opt}.

\begin{figure}
    \centering
    \includegraphics[width=0.85\linewidth]{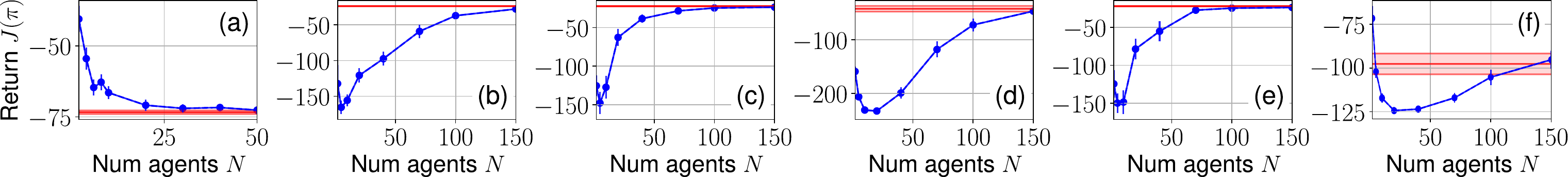}
    \caption{The performance of the best of $3$ Dec-POMFPPO policies transferred to $N$-agent systems (in blue, error bars for $95\%$ confidence interval), averaged over $50$ episodes, and compared against the performance in the training system (in red). Problems (a)-(f) and training are as in Figure~\ref{fig:training}.}
    \label{fig:convergence}
\end{figure}

\begin{figure}
    \centering
    \includegraphics[width=0.8\linewidth]{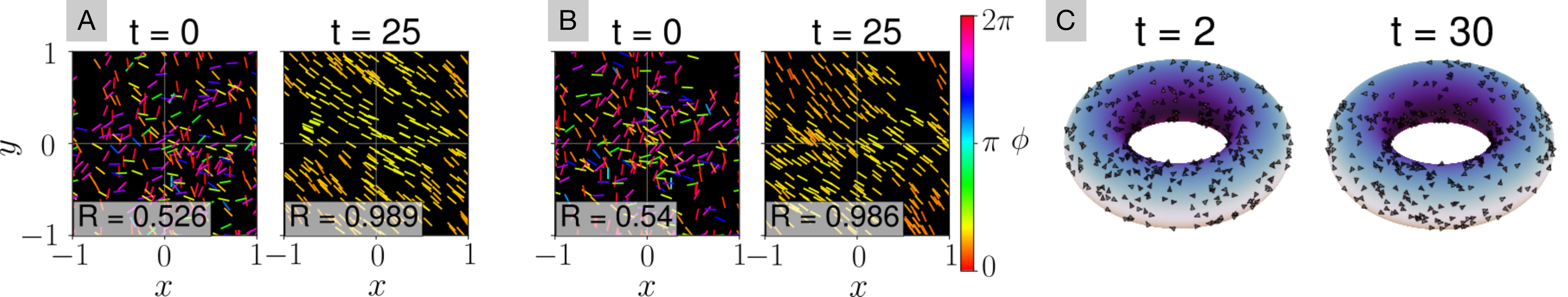}
    \caption{A, B: For the Vicsek (torus) problem with forward velocity control, the open-loop behavior (B) shows little difference in performance of agents (rods, color indicating heading) over the closed-loop behavior (A). C: Visualization of agents (triangles) under the Vicsek model on the torus.}
    \label{fig:open}
\end{figure}

\paragraph{Qualitative analysis.}
In the Vicsek model, as seen exemplarily in Figure~\ref{fig:open} and Appendix~\ref{app:app-exp-start}, the algorithm learns to align in various topological spaces. In all considered topologies, the polar order parameter surpasses $0.9$, with the torus system even reaching a value close to $0.99$. As for the angles at different iterations of the training process, as depicted in Figure~\ref{fig:qualitativeManifolds}, the algorithm gradually learns to form a concentrated cluster of angles. Note that the cluster center angle is not fixed, but rather changes over time. This behavior can not be observed in the classical Vicsek model, though extensions using more sophisticated equations of motion for angles have reported similar results \citep{kruk2020traveling}. For more details and further experiments or visualizations, we refer the reader to \crefrange{app:app-exp-start}{app-exp-end}.
Figure~\ref{fig:qualitativeManifolds} and additional figures, with similar results for other topologies in Appendix~\ref{app:app-exp-start}, e.g. \crefrange{fig:abl1}{fig:vcontrol}, illustrate the qualitative behavior observed across the different manifolds. Agents on the continuous torus demonstrate no preference for a specific direction across consecutive training runs. Conversely, agents trained on other manifolds exhibit a tendency to avoid the direction that leads to an angle flip when crossing the corresponding boundary. Especially for the projective plane topology, the agents tend to aggregate more while aligning, even without adding another reward for aggregation. For Aggregation in Figure~\ref{fig:misalignment}, we also find successful aggregation of agents in the middle. In practice, one may define any objective of interest. For example, we can achieve misalignment in Figure~\ref{fig:misalignment}, resulting in polar order parameters on the order of magnitude of $10^{-2}$, and showing the generality of the framework. 

\begin{figure}
    \centering
    \includegraphics[width=0.8\linewidth]{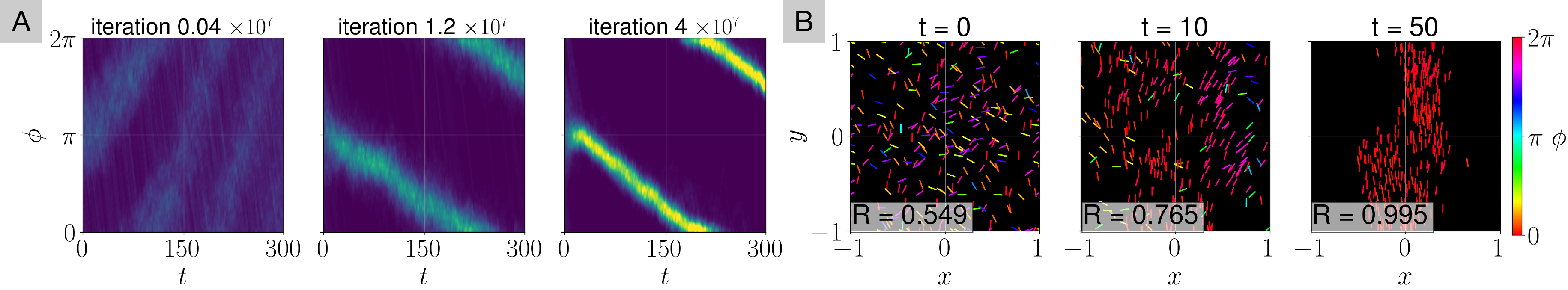}
    \caption{A: Agent angle alignment in the Vicsek model on the torus, plotted as density over time; B: Alignment of agents in the Vicsek model on the projective plane, as in Figure~\ref{fig:open}.}
    \label{fig:qualitativeManifolds}
\end{figure}

\begin{figure}
    \centering
    \includegraphics[width=0.8\linewidth]{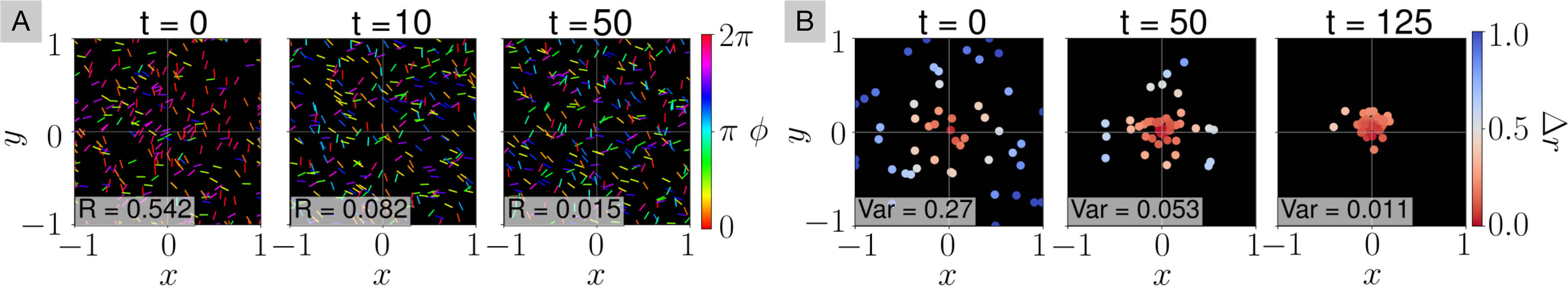}
    \caption{A: Qualitative behavior for misalignment of agents in the Vicsek (torus) problem. B: The two-dimensional Aggregation problem, with agent distances to mean as colors.}
    \label{fig:misalignment}
\end{figure}

\paragraph{Additional experiments.}
Some other experiments are discussed in Appendix~\ref{app:app-exp-start}, including the generalization of our learned policies to different starting conditions, a comparison of the Vicsek model trained or transferred to different numbers of agents, additional interpretative visualizations, similar success for the Kuramoto model, and a favorable comparison between RBFs and histograms for higher dimensions, showing the generality of the framework and supporting our claims.

\section{Conclusion and Discussion}
Our framework provides a novel methodology for engineering artificial collective behavior in a rigorous and tractable manner, whereas existing scalable learning frameworks often focus on competitive or fully observable models \citep{guo2023mfglib, zheng2018magent}. We hope our work opens up new applications of partially-observable swarm systems. 
Our method could be of interest due to (i) its theoretical optimality guarantees while covering a large class of problems, and (ii) its surprising simplicity in rigorously reducing complex Dec-POMDPs to MDPs, with same complexity as MDPs from fully observable MFC, thus allowing analysis of Dec-POMDPs via a tractable MDP.

The current theory remains limited to non-stochastic MFs, which in the future could be analyzed for stochastic MFs via common noise \citep{perrin2020fictitious, cui2023multi, dayanikli2023deep}. Further, sample efficiency could be analyzed \citep{huang2023statistical}, and parametrizations for history-dependent policies using more general NNs could be considered, e.g. via hypernetworks \citep{ha2016hypernetworks, li2023populationsizeaware}. Lastly, extending the framework to consider additional practical constraints and sparser interactions, such as physical collisions or via graphical decompositions, may be fruitful.

\subsubsection*{Acknowledgments}
This work has been co-funded by the LOEWE initiative (Hesse, Germany) within the emergenCITY center and the FlowForLife project, and the Hessian Ministry of Science and the Arts (HMWK) within the projects "The Third Wave of Artificial Intelligence - 3AI" and hessian.AI. The authors acknowledge the Lichtenberg high performance computing cluster of the TU Darmstadt for providing computational facilities for the calculations of this research.

\bibliography{references}
\bibliographystyle{iclr2024_conference}

\newpage
\appendix
\bookmarksetup{numbered}
\bookmarksetup{depth=3}

\section{Additional Experiments} \label{app:app-exp-start}
In this section, we give additional details on experiments. The mathematical description of problems can be found in Appendix~\ref{app:prob}. 

We use the manifolds as depicted in Figure~\ref{fig:manifolds} and as described in the following. Here, we visualize the qualitative results as in the main text for the remaining topologies. Due to technical limitations, all agents are drawn, including the ones behind a surface. To indicate where an agent belongs, we colorize the inside of the agent with the color of its corresponding surface.

\begin{figure}[ht]
    \centering
    \includegraphics[width=0.99\linewidth]{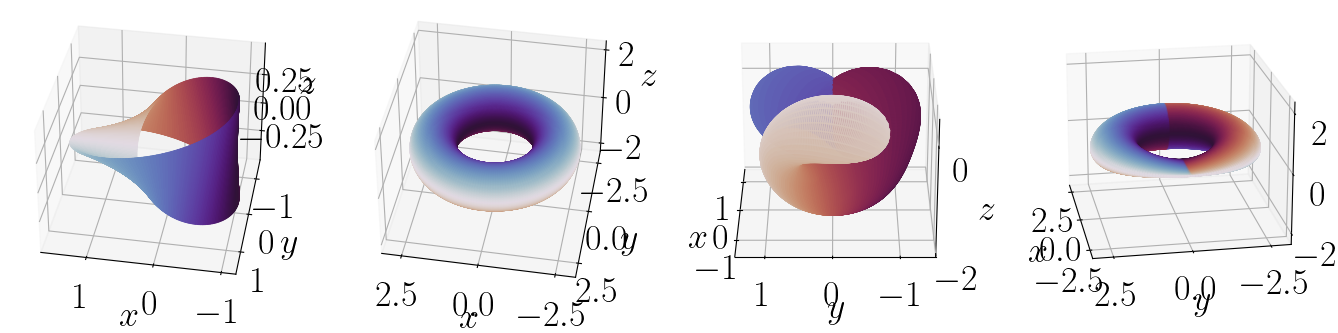}
   \caption{Two-dimensional manifolds visualized in three-dimensional space. In order from left to right: Möbius strip, torus, projective plane (Boy's surface), and Klein bottle (pinched torus).}
    \label{fig:manifolds}
\end{figure}

\paragraph{Torus manifold.} The (flat) torus manifold is obtained from the square $[-1, 1]^2$ by identifying $(x, -1) \sim (x, 1)$ and $(-1, y) \sim (1, y)$ for all $x, y \in [-1, 1]$. For the metric, we use the toroidal distance inherited from the Euclidean distance $d_2$, which can be computed as
\begin{align*}
    d(x, y) = \min_{t_1, t_2 \in \{ -1, 0, 1 \}} d_2(x, y + 2 t_1 e_1 + 2 t_2 e_2)
\end{align*}
where $e_1, e_2$ denote unit vectors. In Figure~\ref{fig:manifolds}, we visualize the torus by mapping each point $(x, y) \in [-1, 1]^2$ to a point $(X, Y, Z)$ in 3D space, given by
\begin{align*}
    X &= \left( 2 + 0.75 \cos \left( \pi (x + 1) \right) \right) \cos \left( \pi (y + 1) \right), \\
    Y &= \left( 2 + 0.75 \cos \left( \pi (x + 1) \right) \right) \sin \left( \pi (y + 1) \right), \\
    Z &= 0.75 \sin \left( \pi (x + 1) \right).
\end{align*}

The results have been described in the main text in Figure~\ref{fig:qualitativeManifolds}. Here, also note that the torus -- by periodicity and periodic boundary conditions -- can essentially be understood as the case of an \textit{infinite plane}, consisting of infinitely many copies of the square $[-1, 1]^2$ laid next to each other.

\paragraph{Möbius strip.} The Möbius strip is obtained from the square $[-1, 1]^2$ by instead only identifying $(-1, -y) \sim (1, y)$ for all $y \in [-1, 1]$, i.e. only the top and bottom side of the square, where directions are flipped. We then use the inherited distance
\begin{align*}
    d(x, y) = \min_{t_2 \in \{ -1, 0, 1 \}} d_2(x, (1 - 2 \cdot \mathbf 1_{t_2 \neq 0}, 1)^T \odot y + 2 t_2 e_2)
\end{align*}
where $\odot$ denotes the elementwise (Hadamard) product.

We visualize the Möbius strip in Figure~\ref{fig:manifolds} by mapping each point $(x, y) \in [-1, 1]^2$ to
\begin{align*}
    X &= \left( 1 + \frac x 2 \cos \left( \frac \pi 2 (y + 1) \right) \right) \cos \left( \pi (y + 1) \right), \\
    Y &= \left( 1 + \frac x 2 \cos \left( \frac \pi 2 (y + 1) \right) \right) \sin \left( \pi (y + 1) \right), \\
    Z &= \frac x 2 \sin \left( \frac \pi 2 (y + 1) \right).
\end{align*}
As we can see in Figure~\ref{fig:manifoldsplus1}, the behavior of agents is learned as expected: Agents learn to align along one direction on the Möbius strip.

\begin{figure}[ht]
    \centering
    \includegraphics[width=0.85\linewidth]{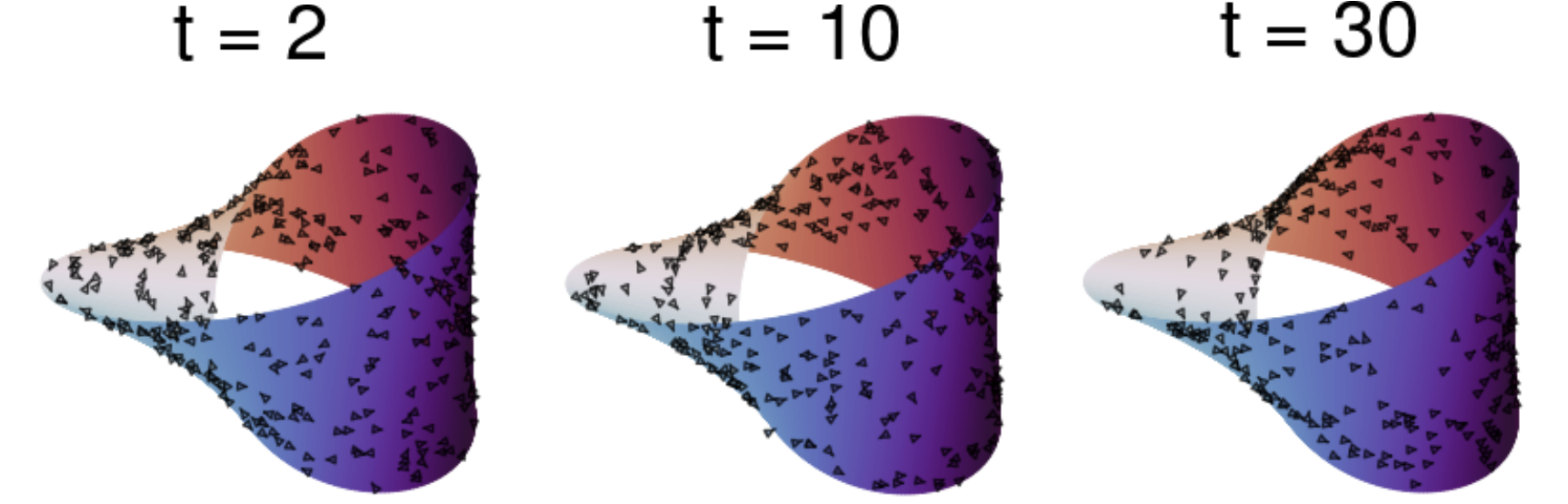}
   \caption{Qualitative visualization of Vicsek behavior on the Möbius strip manifold for uniform initialization. The $300$ agents (triangles) align into one direction on the Möbius strip.}
    \label{fig:manifoldsplus1}
\end{figure}

\paragraph{Projective plane.} Analogously, the projective plane is obtained by identifying and flipping both sides of the square $[-1, 1]^2$, i.e. $(-x, -1) \sim (x, 1)$ and $(-1, -y) \sim (1, y)$ for all $x, y \in [-1, 1]$. We use the inherited distance
\begin{align*}
    d(x, y) = \min_{t_1, t_2 \in \{ -1, 0, 1 \}} d_2(x, (1 - 2 \cdot \mathbf 1_{t_2 \neq 0}, 1 - 2 \cdot \mathbf 1_{t_1 \neq 0})^T \odot y + 2 t_1 e_1 + 2 t_2 e_2)
\end{align*}
and though an accurate visualization in less than four dimensions is difficult, we visualize in Figure~\ref{fig:manifoldsplus2} by mapping each point $(x, y) \in [-1, 1]^2$ to a point $(X, Y, Z)$ on the so-called Boy's surface, with
\begin{align*}
    z &= \frac{x + 1}{2} \exp \left( i \pi (y + 1) \right), \\
    g_1 &= - \frac 3 2 \operatorname{Im} \left( \frac{z (1 - z^4)}{z^6 + \sqrt 5 z^3 - 1} \right), \\
    g_2 &= - \frac 3 2 \operatorname{Re} \left( \frac{z (1 - z^4)}{z^6 + \sqrt 5 z^3 - 1} \right), \\
    g_3 &= \operatorname{Im} \left( \frac{1 + z^6}{z^6 + \sqrt 5 z^3 - 1} \right) - \frac 1 2 , \\
    X &= \frac{g_1}{g_1^2 + g_2^2 + g_3^2}, \\
    Y &= \frac{g_2}{g_1^2 + g_2^2 + g_3^2}, \\
    Z &= \frac{g_2}{g_1^2 + g_2^2 + g_3^2}.
\end{align*}
As we can see in Figure~\ref{fig:manifoldsplus2}, under the inherited metric and radial parametrization, agents tend to gather at the bottom of the surface.

\begin{figure}[ht]
    \centering
    \includegraphics[width=0.85\linewidth]{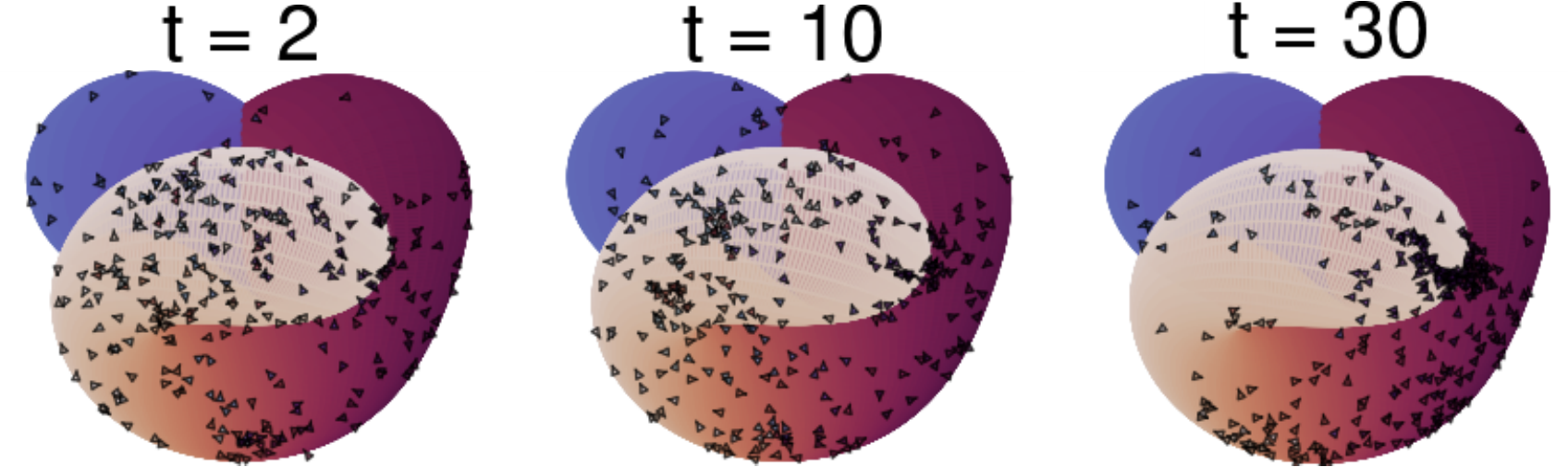}
   \caption{Qualitative visualization of Vicsek behavior on the projective plane manifold for uniform initialization. The $300$ agents (triangles) align over time by gathering at the bottom right.}
    \label{fig:manifoldsplus2}
\end{figure}

\paragraph{Klein bottle.} Similarly, the Klein bottle is obtained by identifying both sides of the square $[-1, 1]^2$ and flipping one side, i.e. $(x, -1) \sim (x, 1)$ and $(-1, -y) \sim (1, y)$ for all $x, y \in [-1, 1]$. We use the inherited distance
\begin{align*}
    d(x, y) = \min_{t_1, t_2 \in \{ -1, 0, 1 \}} d_2(x, (1 - 2 \cdot \mathbf 1_{t_2 \neq 0}, 1)^T \odot y + 2 t_1 e_1 + 2 t_2 e_2)
\end{align*}
and visualize in Figure~\ref{fig:manifoldsplus3} by the pinched torus, i.e. mapping each $(x, y) \in [-1, 1]^2$ to a point $(X, Y, Z)$ with
\begin{align*}
    X &= \left( 2 + 0.75 \cos \left( \pi (x + 1) \right) \right) \cos \left( \pi (y + 1) \right), \\
    Y &= \left( 2 + 0.75 \cos \left( \pi (x + 1) \right) \right) \sin \left( \pi (y + 1) \right), \\
    Z &= 0.75 \sin \left( \pi (x + 1) \right)) \cos \left( \frac \pi 2 (y + 1) \right).
\end{align*}
As we can see in Figure~\ref{fig:manifoldsplus3}, agents may align by aggregating on the inner and outer ring, such that they may avoid switching sides at the pinch.

\begin{figure}[ht]
    \centering
    \includegraphics[width=0.85\linewidth]{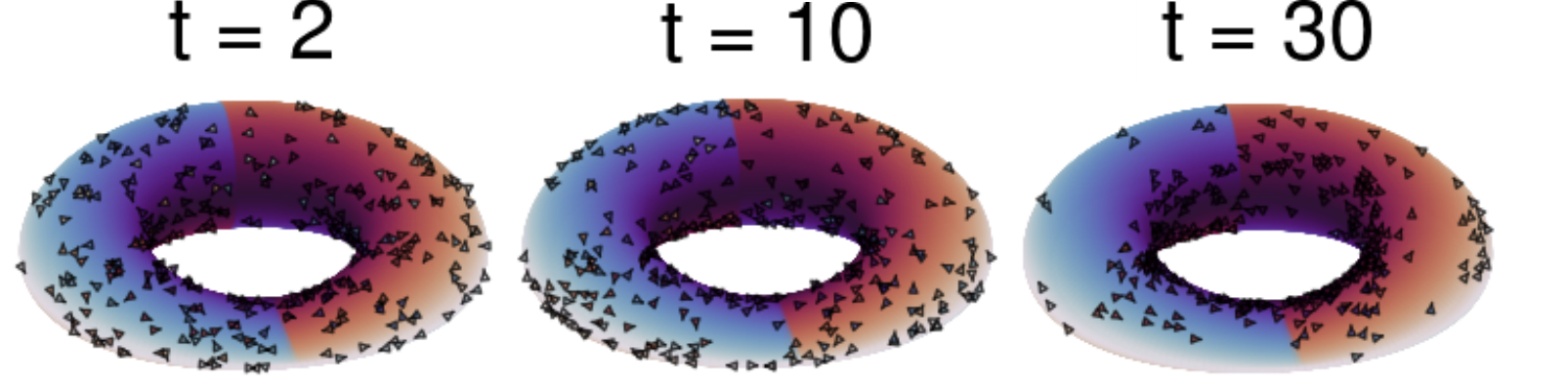}
    \caption{Qualitative visualization of behavior on Klein bottle topology for uniform initialization: The visualization in 3D is limited. Here, we use a pinched torus visualization that inverts itself at the flat pinch (i.e. there is no "connection" between blue and red surfaces at the bottom). Over time, $300$ agents (triangles) sometimes align by aggregating on the inner, avoiding side switches at the pinch.}
    \label{fig:manifoldsplus3}
\end{figure}

\paragraph{Box.} Lastly, the box manifold is the square $[-1, 1]^2$ equipped with the standard Euclidean topology, i.e. distances between two points $x, y \in [-1, 1]^2$ are given by 
\begin{align*}
    d(x, y) = \sqrt{(x_1 - y_1)^2 + (x_2 - y_2)^2},
\end{align*}
while the sides of the square are not connected to anything else. We use the box manifold for the following experiments in Aggregation, and mention it here for sake of completeness.

\paragraph{Ablation on number of agents.}
As seen in Figures~\ref{fig:lessagents1} and ~\ref{fig:lessagents2}, we can successfully train on various numbers of agents, despite the inaccuracy of the mean field approximation for fewer agents as inferred from Figure~\ref{fig:convergence}. This indicates that our algorithm is general and -- at least in the considered problems -- scales to arbitrary numbers of agents.

\begin{figure}[ht]
    \centering
    \includegraphics[width=0.999\linewidth]{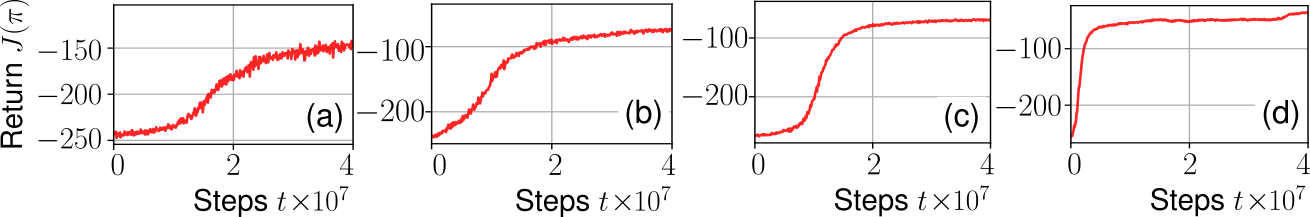}
   \caption{Training curves for Vicsek (torus), using RBF-based or discretization-based solutions. (a): RBF, $N=25$; (b): Discretization, $N=25$; (c): RBF, $N=50$; (d) : Discretization, $N=50$.}
    \label{fig:lessagents1}
\end{figure}

\begin{figure}[ht]
    \centering
    \includegraphics[width=0.999\linewidth]{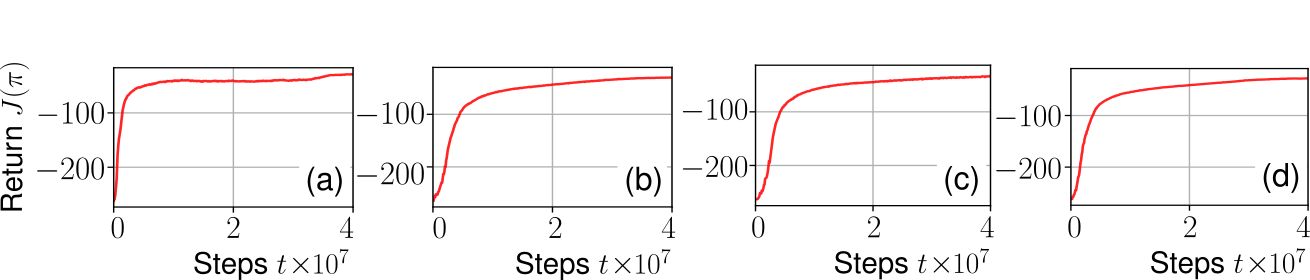}
   \caption{Training curves for Vicsek (torus), using RBF-based or discretization-based solutions. (a): RBF, $N=100$; (b): Discretization, $N=100$; (c): RBF, $N=150$; (d) : Discretization, $N=150$.}
    \label{fig:lessagents2}
\end{figure}

\paragraph{Qualitative results for Kuramoto.}
The Kuramoto model, see Figure~\ref{fig:kuramotoqualitative}, demonstrates instability during training and subsequent lower-grade qualitative behavior compared to the Vicsek model. This disparity persists even when considering more intricate topologies, despite being a specialization of the Vicsek model. One explanation is that the added movement makes it easier to align agents over time. Another general explanation could be that, despite initially distributing agents uniformly across the region of interest, the learned policy causes the agents to aggregate into a few or even a single cluster (though we do not observe such behavior in Figure~\ref{fig:kuramotoqualitative}). The closer particles are to each other, the greater the likelihood that they perceive a similar or identical mean field, prompting alignment only in local clusters. A similar behavior is observed in the classical Vicsek model, where agents tend to move in the same direction after interaction. Consequently, they remain within each other's interaction region and have the potential to form compounds provided there are no major disturbances. These can come from either other particles or excessively high levels of noise \citep{barberis2017cluster}. Although agents are able to align, the desired alignment remains to be improved, either via more parameter tuning or improved algorithms.

\begin{figure}[ht]
    \centering
    \includegraphics[width=0.7\linewidth]{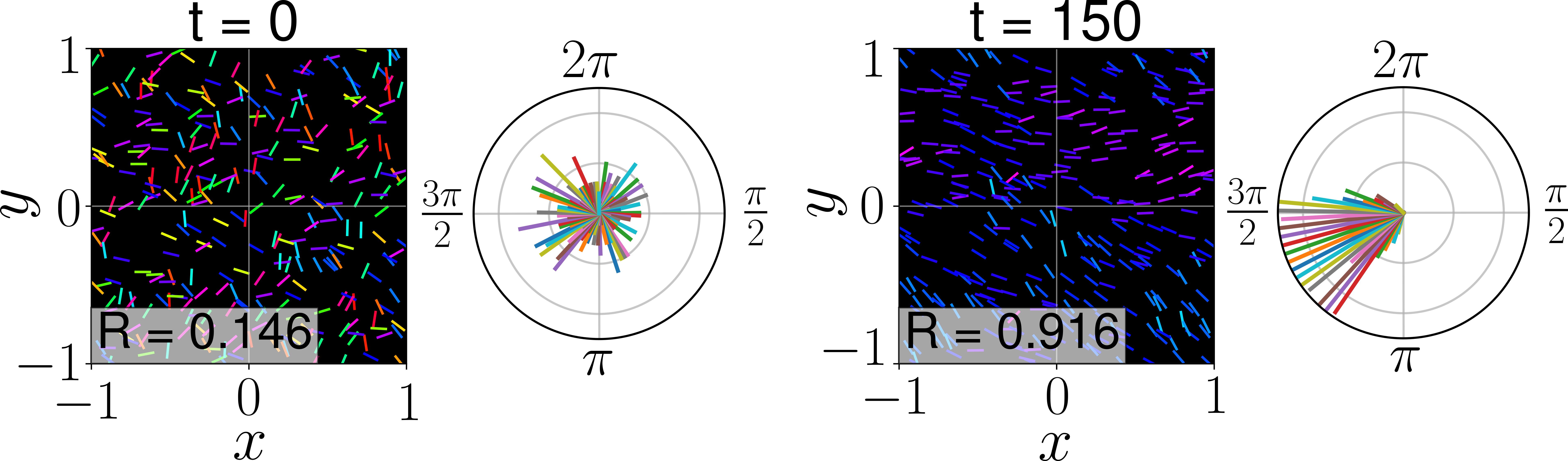}
    \caption{Qualitative behavior of the learned behavior in the Kuramoto model with histogram over angles, where in contrast to Vicsek, agents remain fixed in their initial position.}
    \label{fig:kuramotoqualitative}
\end{figure}

\paragraph{Effect of kernel method.}
While for low dimensions, the effect of kernel methods is not as pronounced and mostly ensure theoretical guarantees, in Figures~\ref{fig:highdim1} and \ref{fig:highdim2} we can see that training via our RBF-based methods outperforms discretization-based methods for dimensions higher than $3$ as compared to a simple gridding of the probability simplex with associated histogram representation of the mean field. Here, for the RBF method in Aggregation, we place $5$ equidistant points $y_b$ on the axis of each dimension. This is also the reason for why the discretization-based approach is better for low dimensions $d=2$ or $d=3$, as more points will have more control over actions of agents, and can therefore achieve better results, in exchange for tractability in high dimensions. This shows the advantage of RBF-based methods in more complex, high-dimensional problems. While the RBF-based method continues to learn even for higher dimensions up to $d=5$, the discretization-based solution eventually stops learning due to very large action spaces leading to increased noise on the gradient. The advantage is not just in terms of sample complexity, but also in terms of wall clock time, as the computation over exponentially many bins also takes more CPU time as shown in Table~\ref{tab:highdim}. 

\begin{figure}[ht]
    \centering
    \includegraphics[width=0.999\linewidth]{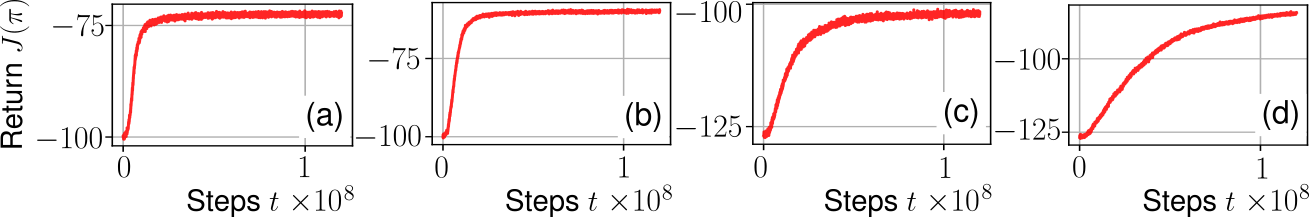}
    \caption{Training curves for $d$-dimensional Aggregation, using RBF-based vs. discretization-based solutions. (a): RBF, $d=2$; (b): Discretization, $d=2$; (c): RBF, $d=3$; (d) : Discretization, $d=3$.}
    \label{fig:highdim1}
\end{figure}

\begin{figure}[ht]
    \centering
    \includegraphics[width=0.999\linewidth]{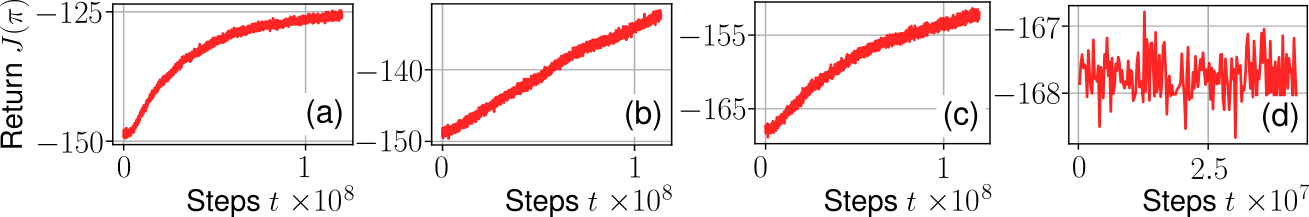}
    \caption{Training curves for $d$-dimensional Aggregation, using RBF-based vs. discretization-based solutions. (a): RBF, $d=4$; (b): Discretization, $d=4$; (c): RBF, $d=5$; (d) : Discretization, $d=5$.}
    \label{fig:highdim2}
\end{figure}

\begin{table}[ht]
    \centering
    \caption{Wall clock time for one training step averaged over 500 iterations in $d$-dimensional Aggregation, for $50$ agents.}
    \label{tab:highdim}
    \begin{tabular}{@{}cccc@{}}
    \toprule
    Dimensionality $d$  & RBF MFC [s] & Discretization MFC [s] & MARL (IPPO) [s]  \\ \midrule
    $2$ & 5.64 & 5.69 & 146.58\\
    $3$ & 6.16 & 7.96 & 147.03\\
    $4$ & 6.97 & 17.26 & 147.29\\
    $5$ & 8.31 & 76.33 & 146.91\\
    \bottomrule
    \end{tabular}
\end{table}

\paragraph{Ablations on time dependency and starting conditions.}
As discussed in the main text, we also verify the effect of using a non-time-dependent open-loop sequence of lower-level policies, and also an ablation over different starting conditions. In particular, for starting conditions, to begin we will consider the \textbf{uniform} initialization as well as the \textbf{beta-1}, \textbf{beta-2} and \textbf{beta-3} initializations with a beta distribution over each dimension of the states, using $\alpha = \beta = 0.5$, $\alpha = \beta = 0.25$ and $\alpha = \beta = 0.75$ respectively.

As we can see in Figure~\ref{fig:abl1}, the behavior learned for the Vicsek problem on the torus with $N=200$ agents allows for using the first lower-level policy $\check \pi_0$ at all times $t$ under the Gaussian initialization used in training to nonetheless achieve alignment. This validates the fact that a time-variant open-loop sequence of lower-level policies is not always needed, and the results even hold for slightly different initial conditions from the ones used in training.

\begin{figure}[ht]
    \centering
    \includegraphics[width=0.999\linewidth]{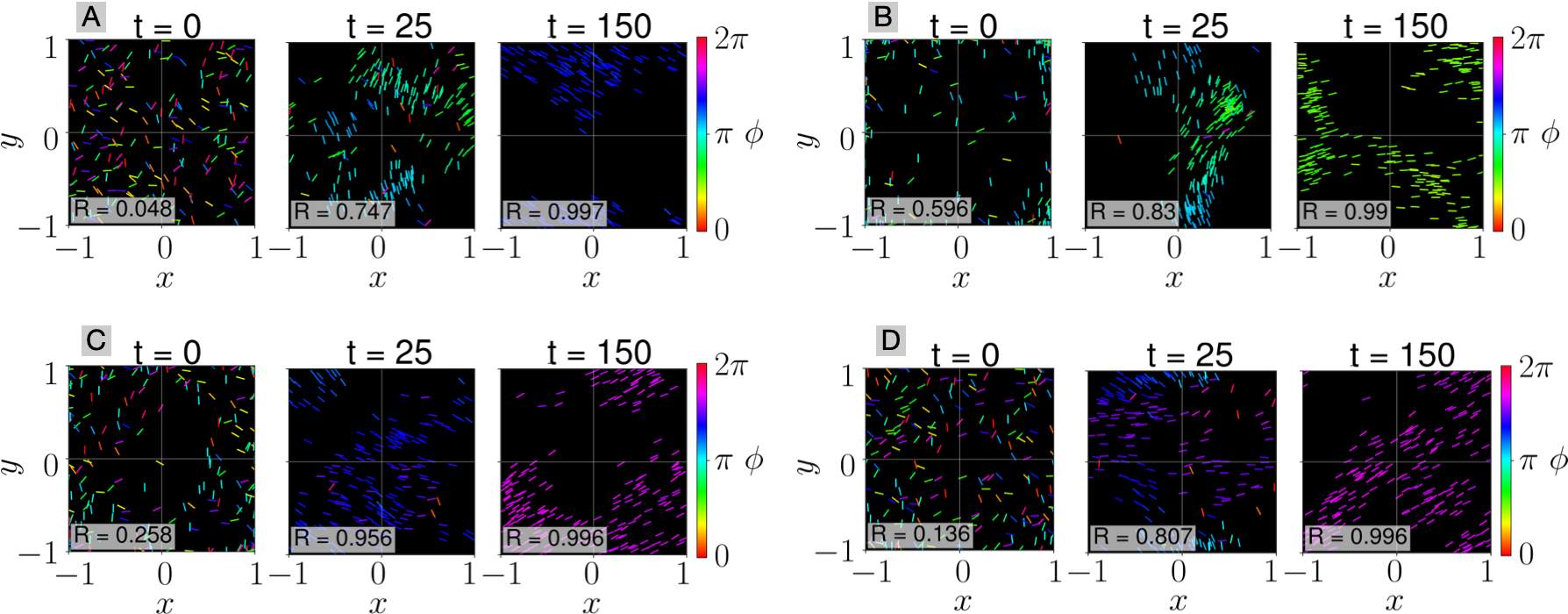}
    \caption{Open-loop behavior by using the lower-level decision policy at time $t=0$ for all times, on Vicsek (torus) with $N=200$ agents and various initializations. A: uniform initialization; B: beta-2 initialization; C: beta-1 initialization; D: beta-3 initialization.}
    \label{fig:abl1}
\end{figure}

Analogously, we consider some more strongly concentrated and heterogeneous initializations: The \textbf{peak-normal} initialization is given by a more concentrated zero-centered diagonal Gaussian with covariance $\sigma^2 = 0.1$. The \textbf{squeezed-normal} is the same initialization as in training, except for dividing the variance in the $y$-axis by $10$. The \textbf{multiheaded-normal} initialization is a mixture of two equisized Gaussian distributions in the upper-right and lower-left quadrant, where in comparison to the training initialization, position variances are halved. Finally, the \textbf{bernoulli-multiheaded-normal} additionally changes the weights of two Gaussians to $0.75$ and $0.25$ respectively.

As seen in Figure~\ref{fig:abl2}, the lower-level policy $\check \pi_0$ for Gaussian initialization from training easily transfers and generalizes to more complex initializations. However, the behavior may naturally be more suboptimal due to the training process likely never seeing more strongly concentrated and heterogeneous distributions of agents. For example, in the peak-normal initialization in Figure~\ref{fig:abl2}, we see that the agents begin relatively aligned, but will first misalign in order to align again, as the learned policy was trained to handle only the wider Gaussian initialization. 

\begin{figure}[ht]
    \centering
    \includegraphics[width=0.999\linewidth]{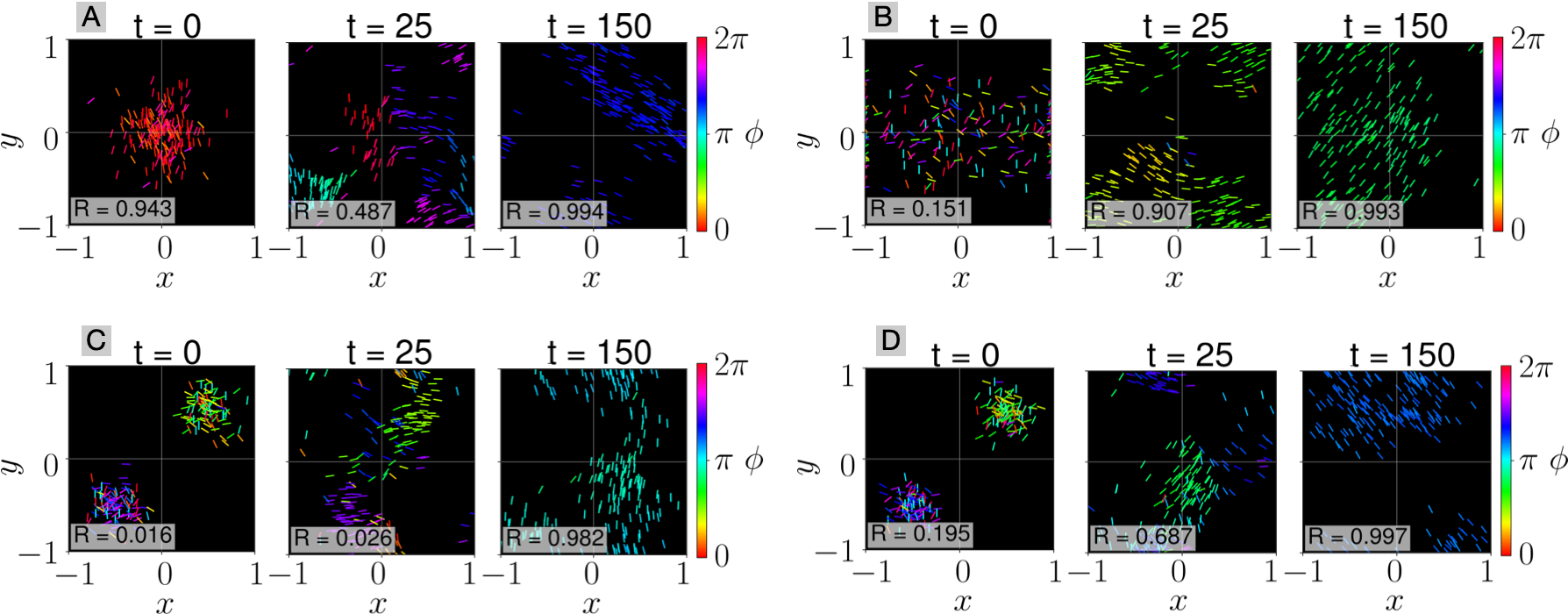}
    \caption{Open-loop behavior by using the lower-level decision policy at time $t=0$ for all times, on Vicsek (torus) with $N=200$ agents and various initializations. A: peak-normal initialization; B: squeezed-normal initialization; C: multiheaded-normal initialization; D: bernoulli-multiheaded-normal initialization.}
    \label{fig:abl2}
\end{figure}

\paragraph{No observations.}
As an additional verification of the positive effect of mean field guidance on PG training, we also perform experiments for training PPO without any RL observations, as in the previous paragraph we verified the applicability of learned behavior even without observing the MF that is observed by RL during training. In Figure~\ref{fig:noobs} we see that PPO is unable to learn useful behavior, despite the existence of such a time-invariant lower-level policy from the preceding paragraph, underlining the empirical importance of mean field guidance that we derived.

\begin{figure}[ht]
    \centering
    \includegraphics[width=.475\linewidth]{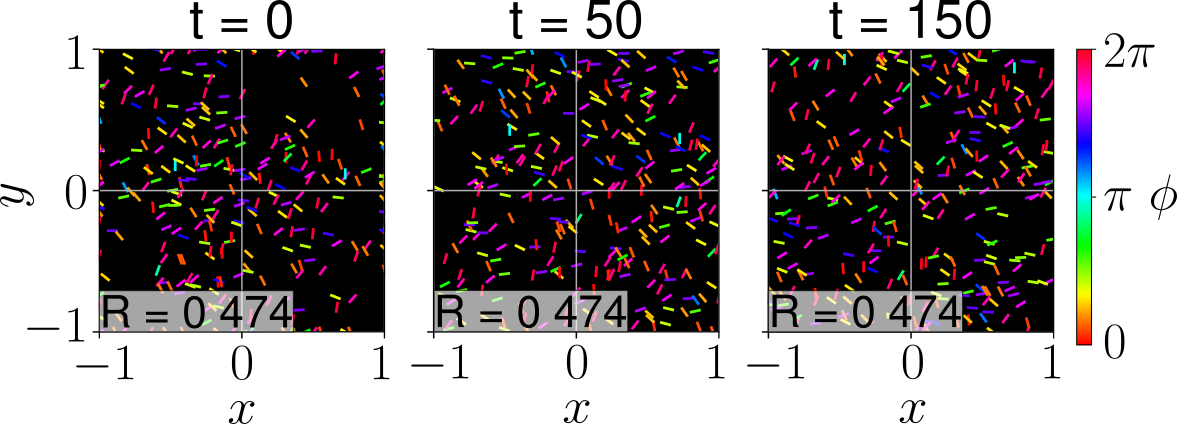}
    \caption{Qualitative behavior after training \textit{without observations} for Vicsek (torus).}
    \label{fig:noobs}
\end{figure}

\paragraph{Transfer to differing agent counts.}
In Figure~\ref{fig:txfer}, we see qualitatively that the behavior learned for $N=200$ agents transfers to different, lower numbers of agents as well. The result is congruent with the results shown in the main text, such as in Figure~\ref{fig:convergence}, and further supports the fact that our method scales to nearly arbitrary numbers of agents.

\begin{figure}[ht]
    \centering
    \includegraphics[width=0.999\linewidth]{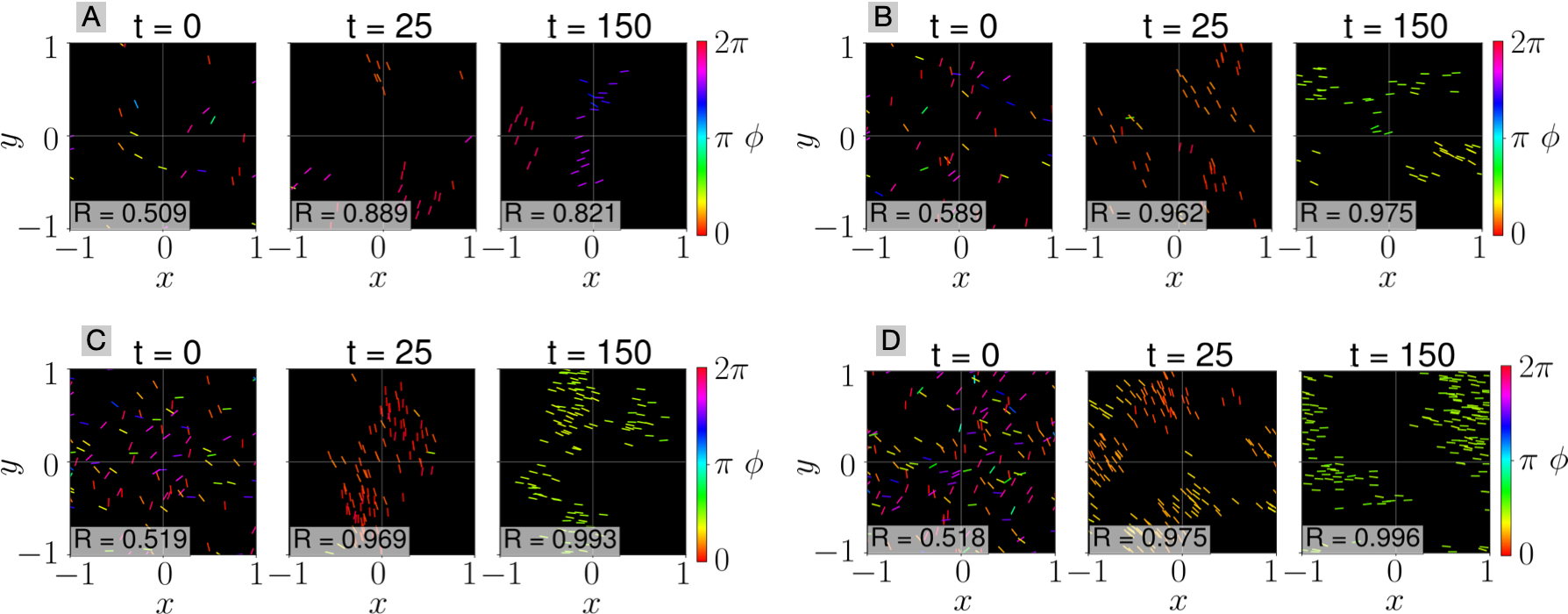}
    \caption{Qualitative behavior of the policy learned for $N=200$ on Vicsek (torus), transferred to different numbers of agents $N$. A: $N=25$; B: $N=50$; C: $N=100$; D: $N=150$.}
    \label{fig:txfer}
\end{figure}

\paragraph{Forward velocity control.}
We also allow agents to alternatively control their maximum velocity in the range $[0, v_0]$. Forward velocity can similarly be controlled, and allows for more uniform spreading of agents in contrast to the case where velocity cannot be controlled. This shows some additional generalization of our algorithm to variants of collective behavior problems. The corresponding final qualitative behavior is depicted in Figure~\ref{fig:vcontrol}.

\begin{figure}[ht]
    \centering
    \includegraphics[width=0.999\linewidth]{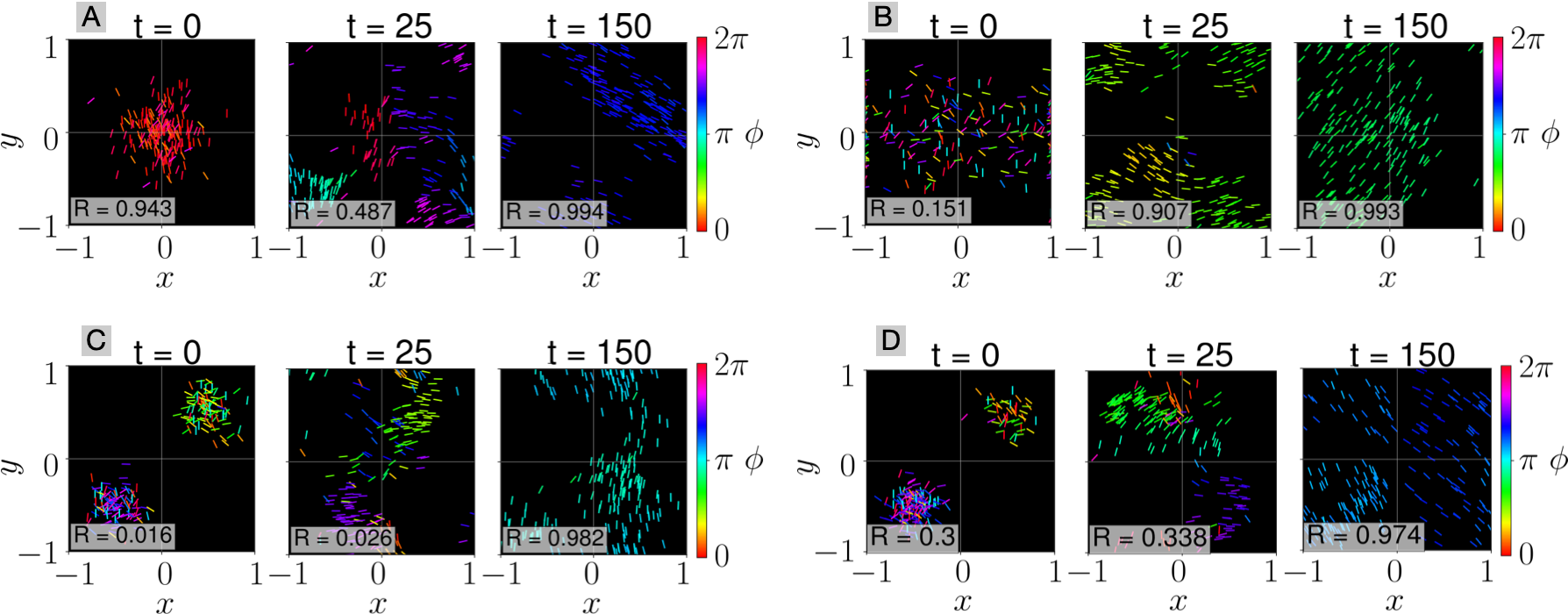}
    \caption{Qualitative behavior on Vicsek (torus) with $N=200$ agents, additional forward velocity control, and various initializations. A: peak-normal initialization; B: squeezed-normal initialization; C: multiheaded-normal initialization; D: bernoulli-multiheaded-normal initialization.}
    \label{fig:vcontrol}
\end{figure}

\begin{figure}
    \centering
    \includegraphics[width=0.99\linewidth]{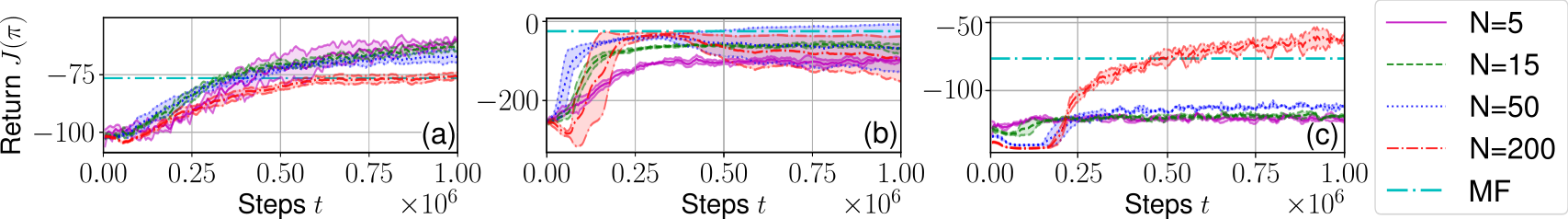}
    \caption{IPPO training curves (episode return) with shaded standard deviation over $3$ seeds and various $N$, in (a) Aggregation (box), (b) Vicsek (torus), (c) Kuramoto (torus). For comparison, we also plot the best return averaged over $3$ seeds for Dec-POMFPPO in Figure~\ref{fig:training} (MF).}
    \label{fig:IPPO_training}
\end{figure}

\begin{figure}
    \centering
    \includegraphics[width=0.99\linewidth]{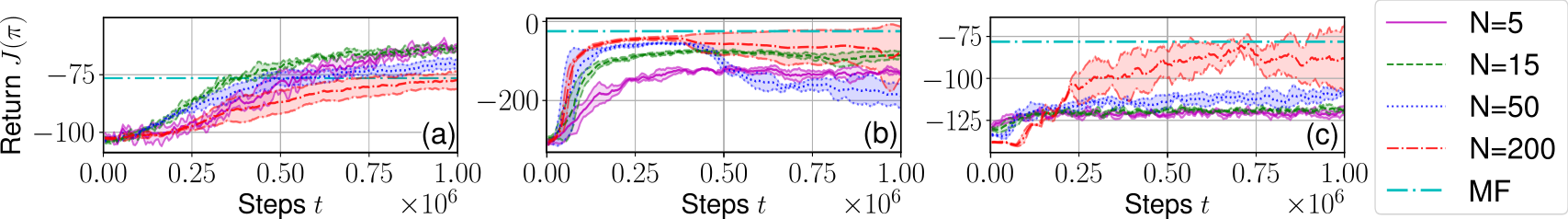}
    \caption{MAPPO training curves (episode return) with shaded standard deviation over $3$ seeds and various $N$, in (a) Aggregation (box), (b) Vicsek (torus), (c) Kuramoto (torus). For comparison, we also plot the best return averaged over $3$ seeds for Dec-POMFPPO in Figure~\ref{fig:training} (MF).}
    \label{fig:MAPPO_training}
\end{figure}

\paragraph{Comparison of IPPO and MAPPO for low numbers of agents.}
Lastly, for completeness we show the comparison of IPPO and MAPPO training results for various numbers of agents in Figures~\ref{fig:IPPO_training} and \ref{fig:MAPPO_training}. The overall achieved performances are overall comparable to the results of the Dec-POMFPPO method in Figure~\ref{fig:convergence}.

\section{Experimental Details}
We use the RLlib $2.0.1$ (Apache-2.0 license) \citep{liang2018rllib} implementation of PPO \citep{schulman2017proximal} for both MARL via IPPO, and our Dec-POMFPPO. For MAPPO, we used the MARLlib $1.0.0$ framework \citep{hu2023marllib}, which builds upon RLlib. For our experiments, we used no GPUs and around $60 \, 000$ Intel Xeon Platinum 9242 CPU core hours, and each training run usually took at most three days by training on up to $96$ CPU cores. Implementation-wise, for the upper-level policy NNs learned by PPO, we use two hidden layers with $256$ nodes and $\tanh$ activations, parametrizing diagonal Gaussians over the MDP actions $\xi \in \Xi$ (parametrizations of lower-level policies). 

In Aggregation, we define the parameters $\xi \in \Xi$ for continuous spaces $\mathcal X, \mathcal Y, \mathcal U \subseteq \mathbb R^d$ by values in $\Xi \coloneqq [-1, 1]^{2d}$. Each component of $\xi$ is then mapped affinely to mean in $\mathcal U$ or diagonal covariance in $[ \epsilon, 0.5 + \epsilon ]$ with $\epsilon = 10^{-10} / 4$, of each dimension. Meanwhile, in Vicsek and Kuramoto, we pursue a "discrete action space" approach, letting $\Xi \coloneqq [-1, 1]^{3}$. We then affinely map components of $\xi \in \Xi$ to $[ \epsilon, 0.5 + \epsilon ]$, which are normalized to constitute probabilities of actions in $\{-1, 0, 1\} \subseteq \mathcal U$.

For the kernel-based representation of mean fields in $d$-dimensional state spaces $\mathcal X$, we define points $x_b$ by the center points of a $d$-dimensional gridding of spaces via equisized ($M_{\mathcal X} = 5^d$ hypercubes) partitions. For the histogram, we similarly use the equisized hypercube partitions. For observation spaces $\mathcal Y$ and the kernel-based representation of lower-level policies, unless noted otherwise (e.g. in the high-dimensional experiments below, where we use less than exponentially many points), we do the same but additionally rescale the center points $\tilde y_b$ around zero, giving $y_b = c \tilde y_b$ for some $c > 0$ and $M_{\mathcal Y} = 5^d$. We use $c = 0.75$ for Aggregation and $c=0.1$ for Vicsek and Kuramoto. For the (diagonal) bandwidths of RBF kernels, in Aggregation we use $\sigma = 0.12 / \sqrt{M_{\mathcal X}}$ for states and $\sigma = 0.12 c$ for observations. In Vicsek and Kuramoto, we use $\sigma = 0.12 / \sqrt{2}$ for state positions, $\sigma = 0.12 \pi$ for state angles, and $\sigma = 0.06 c$ or $\sigma = 0.12 \pi c$ for the first or second component of observations respectively. For IPPO and MAPPO, we observe the observations $y_t$ directly. For hyperparameters of PPO, see Table~\ref{tab:hyperparams}.  

\begin{table}[ht]
    \centering
    \caption{PPO hyperparameter values.}
    \label{tab:hyperparams}
    \begin{tabular}{@{}ccc@{}}
    \toprule
    Hyperparameter          & Value     \\ \midrule
    Discount factor $\gamma$ &   $0.99$\\
    GAE lambda &  $1$\\
    KL coefficient & $0.03$ \\
    Clip parameter & $0.2$ \\
    Learning rate & $0.00005$ \\
    Training batch size $B_{\mathrm{len}}$ &  $4000$ \\
    Mini-batch size $b_{\mathrm{len}}$ &  $1000$ \\
    Steps per batch $N_{\mathrm{PPO}}$ &  $5$ \\ \bottomrule
    \end{tabular}
\end{table}

\section{Problem Details} \label{app:prob} \label{app-exp-end}
In this section, we will discuss in more detail the problems considered in our experiments.

\paragraph{Aggregation.}
The Aggregation problem is a problem where agents must aggregate into a single point. Here, $\mathcal X = \mathcal Y = [-1, 1]^d \subseteq \mathbb R^d$ for some dimensionality parameter $d \in \mathbb N$, and analogously $\mathcal U = [-1, 1]^d$ for per-dimension movement actions. Observations are the own, noisily observed position, and movements are similarly noisy, using Gaussian noise. Overall, the dynamics are given by
\begin{align*}
    y_t &\sim \mathcal N \left( x_t, \mathrm{diag}(\sigma^2_y, \ldots \sigma^2_y) \right), \\
    x_{t+1} &\sim \mathcal N \left( x_t + v_0 \frac{u_t}{\max(1, \lVert u_t \rVert_2)}, \mathrm{diag}(\sigma^2_x, \ldots \sigma^2_x) \right)
\end{align*}
for some velocity $v_0$, where additionally, observations and states that are outside of the box $[-1, 1]^d$ are projected back to the box. 

The reward function for aggregation of agents is defined as 
\begin{align*}
    r(\mu_t) = - c_d \iint \lVert x - y \rVert_1 \mu_t(\mathrm dx) \mu_t(\mathrm dy) - c_u \iint \left\lVert \frac{u}{\max(1, \lVert u \rVert_2)} \right\rVert_1 \pi_t(\mathrm du \mid x) \mu_t(\mathrm dx),
\end{align*}
for some disaggregation cost $c_d > 0$ and action cost $c_u > 0$, where we allow the dependence of rewards on actions as well: Note that our framework still applies to the above dependence on actions, as discussed in Section~\ref{sec:theo}, by rewriting the system in the following way. At any even time $2t$, the agents transition from state $x \in \mathcal X$ to state-actions $(x, u) \in \mathcal X \times \mathcal U$, which will constitute the states of the new system. At the following odd times $2t+1$, the transition is sampled for the given state-actions. In this way, the mean field is over $\mathcal X \cup (\mathcal X \times \mathcal U)$ and allows description of dependencies on the state-action distributions instead of only the state distribution.

For the experiments, we use $\sigma^2_x = 0.04$, $\sigma^2_y = 0.04$ and $v_0 = 0.1$. The initial distribution of agent positions is a Gaussian centered around zero, with variance $0.4$. The cost coefficients are $c_d = 1$ and $c_u = 0.1$. For simulation purposes, we consider episodes with length $T=100$.

\paragraph{Vicsek.}  In classical Vicsek models, each agent is coupled to every other agent within a predefined interaction region. The agents have a fixed maximum velocity $v_0 > 0$, and attempt to align themselves with the neighboring particles within their interaction range $D > 0$. The equations governing the dynamics of the $i$-th agent in the classical Vicsek model are given in continuous time by
\begin{align*}
    \mathrm dp^i & = (v_0 \sin(\phi^i), v_0 \cos(\phi^i))^T \mathrm dt \\
    \mathrm d\phi^i& = \frac{1}{|N_i|}\sum_{j \in N_i} \sin \left( \phi^j - \phi^i \right) \mathrm dt + \sigma \mathrm dW
\end{align*}
for all agents $i$, where $N_i$ denotes the set of agents within the interaction region, $N_i \coloneqq \{ j \in [N] \mid d(x^i, x^j) \leq D \}$, and $W$ denotes Brownian motion.

We consider a discrete-time variant where agents may \textit{control} independently how to adjust their angles in order to achieve a global objective (e.g. alignment, misalignment, aggregation). For states $x_t \equiv (p_t, \phi_t)$, actions $u_t$ and observations $y_t$, we have 
\begin{align*}
    (\bar x, \bar y)^T &= \left( \iint \sin(\phi - \phi_t) \mathbf 1_{d(p_t, p) \leq D} \mu_t(\mathrm dp, \mathrm d\phi), \iint \cos(\phi - \phi_t) \mathbf 1_{d(p_t, p) \leq D} \mu_t(\mathrm dp, \mathrm d\phi) \right)^T, \\
    y_t &= \left( \lVert (\bar x, \bar y)^T \rVert_2, \operatorname{atan2} \left( \bar x, \bar y \right) \right)^T, \\
    p_{t+1} &= (p_{t,1} + v_0 \sin(\phi_t), p_{t,2} + v_0 \cos(\phi_t))^T, \\
    \phi_{t+1} &\sim \mathcal N(\phi_t + \omega_0 u_t, \sigma_\phi^2)
\end{align*}
for some maximum angular velocity $\omega_0 > 0$ and noise covariance $\sigma_\phi^2 > 0$, where $\operatorname{atan2}(x,y)$ is the angle from the positive $x$-axis to the vector $(x,y)^T$. Therefore, we have $\mathcal X = [-1, 1]^2 \times [0, 2\pi)$, where positions are equipped with the corresponding topologies discussed in Appendix~\ref{app:app-exp-start}, and standard Euclidean spaces $\mathcal Y = [-1, 1]^2$ and $\mathcal U = [-1, 1]$. Importantly, agents only observe the relative headings of other agents within the interaction region. As a result, it is impossible to model such a system using standard MFC techniques.

As cost functions, we consider rewards via the polarization, plus action cost as in Aggregation. Defining polarization similarly to e.g. \citep{gershenson2016performance},
\begin{align*}
    \mathrm{pol}_t &\coloneqq \iint \angle (x, \bar x_t) \mu_t(\mathrm dp, \mathrm d\phi), \\
    \angle (x, y) &\coloneqq \arccos \left( (\cos(\phi), \sin(\phi))^T \cdot \frac{y}{\lVert y \rVert_2} \right), \\
    \bar x_t &\coloneqq \iint (\cos(\phi), \sin(\phi))^T \mu_t(\mathrm dp, \mathrm d\phi)
\end{align*}
where high values of $\mathrm{pol}_t$ indicate misalignment, we define the rewards for alignment
\begin{align*}
    r(\mu_t) = - c_a \mathrm{pol}_t - c_u \iint |u| \pi_t(\mathrm du \mid x) \mu_t(\mathrm dx),
\end{align*}
and analogously for misalignment
\begin{align*}
    r(\mu_t) = + c_a \mathrm{pol}_t - c_u \iint |u| \pi_t(\mathrm du \mid x) \mu_t(\mathrm dx).
\end{align*}

For our training, unless noted otherwise, we let $D = 0.25$, $v_0 = 0.075$, $\omega_0 = 0.2$, $\sigma_\phi = 0.02$ and $\mu_0$ as a zero-centered (clipped) diagonal Gaussian with variance $0.4$. The cost coefficients are $c_a = 1$ and $c_u = 0.1$. For simulation purposes, we consider episodes with length $T=200$.

\paragraph{Kuramoto.} The Kuramoto model can be obtained from the Vicsek model by setting the maximal velocity $v_0$ of the above equations to zero. Hence, we obtain a random geometric graph, where agents see only their neighbor's state distribution within the interaction region, and the neighbors are static per episode. For parameters, we let $D = 0.25$, $v_0 = 0$, $\omega_0 = 0.2$, $\sigma_\phi = 0$ and $\mu_0$ as a zero-centered (clipped) Gaussian with variance $0.4$. The cost coefficients are $c_a = 1$ and $c_u = 0.1$. For simulation purposes, we consider episodes with length $T=200$.

\section{Propagation of Chaos} \label{app-start} \label{app:mf_conv}
\begin{proof}[Proof of Theorem~\ref{thm:mf_conv}]
As in the main text, we usually equip $\mathcal P(\mathcal X)$ with the $1$-Wasserstein distance. In the proof, however, it is useful to also consider the uniformly equivalent metric $d_\Sigma(\mu, \mu') \coloneqq \sum_{m=1}^\infty 2^{-m} | \int f_m \, \mathrm d(\mu - \mu') |$ instead. Here, $(f_m)_{m \geq 1}$ is a fixed sequence of continuous functions $f_m \colon \mathcal X \to [-1, 1]$, see e.g. \citep[Theorem~6.6]{parthasarathy2005probability} for details.

First, let us define the measure $\zeta^{\pi, \mu}_t$ on $\mathcal X \times \mathcal U$, defined for any measurable set $A \times B \subseteq \mathcal{X} \times \mathcal{U}$ by $\zeta^{\pi, \mu}_t(A \times B) \coloneqq \int_A \int_{\mathcal Y} \int_B \pi_t(\mathrm du \mid y) P^y(\mathrm dy \mid x, \mu_t) \mu_t(\mathrm dx)$. For notational convenience we define the MF transition operator $\tilde{T}$ such that
\begin{align}
    \tilde{T}(\mu_t, \zeta^{\pi, \mu}_t) &\coloneqq \iint P(\cdot \mid x, u, \mu_t) \zeta^{\pi, \mu}_t (\mathrm dx, \mathrm du) = \mu_{t+1}.
\end{align}

Continuity of $\tilde{T}$ follows immediately from Assumption~\ref{ass:pcont} and \cite[Lemma~2.5]{cui2023multi} which we recall here for convenience. 

\begin{proposition}[\citep{cui2023multi}, Lemma 2.5] \label{prop:Kcont}
Under Assumption~\ref{ass:pcont}, $(\mu_n, \zeta_n) \to (\mu, \zeta)$ implies $\tilde{T}(\mu_n, \zeta_n) \to \tilde{T}(\mu, \zeta)$. 
\end{proposition}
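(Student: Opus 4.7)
The plan is to verify the claimed continuity of $\tilde T$ by testing against bounded Lipschitz functions and exploiting Kantorovich--Rubinstein duality, since $\mathcal X$ is a compact metric space and hence $W_1$-convergence on $\mathcal P(\mathcal X)$ is equivalent to weak convergence. It suffices to show that for every $1$-Lipschitz $f \colon \mathcal X \to \mathbb R$ with $\lVert f \rVert_\infty \leq 1$,
\begin{align*}
\int f \, \mathrm d\tilde T(\mu_n, \zeta_n) - \int f \, \mathrm d\tilde T(\mu, \zeta) \to 0.
\end{align*}

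The key device is the auxiliary function $g^\nu(x,u) \coloneqq \int f(x') P(\mathrm dx' \mid x, u, \nu)$, which packages the outer expectation against $P$. Under Assumption~\ref{ass:pcont}, $P$ is $L_P$-Lipschitz jointly in $(x, u, \nu)$ with the codomain equipped with $W_1$. By Kantorovich--Rubinstein duality applied to the $1$-Lipschitz $f$, this transfers directly to two quantitative estimates. First, for fixed $\nu$, $|g^\nu(x,u) - g^\nu(x',u')| \leq L_P \cdot d_{\mathcal X \times \mathcal U}((x,u),(x',u'))$, so $g^\nu$ is $L_P$-Lipschitz on $\mathcal X \times \mathcal U$ with constant independent of $\nu$. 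Second, $\lVert g^{\mu_n} - g^\mu \rVert_\infty \leq L_P \, W_1(\mu_n, \mu)$.

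Then a standard triangle inequality splits the difference as
\begin{align*}
\int f \, \mathrm d\tilde T(\mu_n, \zeta_n) - \int f \, \mathrm d\tilde T(\mu, \zeta) = \int (g^{\mu_n} - g^\mu) \, \mathrm d\zeta_n + \int g^\mu \, \mathrm d(\zeta_n - \zeta).
\end{align*}
The first term is bounded by $\lVert g^{\mu_n} - g^\mu \rVert_\infty \leq L_P \, W_1(\mu_n,\mu) \to 0$ by hypothesis. The second term, because $g^\mu$ is $L_P$-Lipschitz, is bounded by $L_P \, W_1(\zeta_n, \zeta) \to 0$ again by Kantorovich--Rubinstein duality on the compact product space $\mathcal X \times \mathcal U$. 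Taking the supremum over $1$-Lipschitz $f$ with the right bound on both sides yields $W_1(\tilde T(\mu_n,\zeta_n), \tilde T(\mu,\zeta)) \to 0$.

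The only mildly delicate point is checking that Lipschitzness of $P$ in the $W_1$-sense on the codomain really gives the two estimates on $g^\nu$ with the stated constants; this is immediate from the duality $W_1(\nu_1,\nu_2) = \sup_{\mathrm{Lip}(\phi) \leq 1} |\int \phi \, \mathrm d(\nu_1 - \nu_2)|$. One also needs $\mathcal X \times \mathcal U$ to be compact (so that $W_1$-convergence is metrized by integration against bounded Lipschitz test functions), which follows from compactness of $\mathcal X$ and $\mathcal U$ assumed in Section~\ref{sec:theo}. No other machinery is required.
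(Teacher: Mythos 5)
Your argument is correct, and it is worth noting that the paper never proves Proposition~\ref{prop:Kcont} in-house: it is imported verbatim from \citep[Lemma~2.5]{cui2023multi} ``for convenience.'' Your route --- testing against $1$-Lipschitz $f$, packaging the inner integral into $g^\nu(x,u) = \int f(x')\,P(\mathrm dx' \mid x, u, \nu)$, and splitting $\int g^{\mu_n}\,\mathrm d\zeta_n - \int g^{\mu}\,\mathrm d\zeta = \int (g^{\mu_n}-g^{\mu})\,\mathrm d\zeta_n + \int g^{\mu}\,\mathrm d(\zeta_n-\zeta)$ --- is exactly the decomposition-plus-Kantorovich--Rubinstein pattern the paper deploys in its analogous in-house computations (the continuity of $\mu \mapsto \zeta^{\pi,\mu}_t$ inside the proof of Theorem~\ref{thm:mf_conv}, the lower-semicontinuity step in Theorem~\ref{thm:dpp}, and the four-term split in Lemma~\ref{lem:hatTmodcont}), and both estimates on $g^\nu$ do follow from duality together with the $L_P$-Lipschitzness of $P$ in Assumption~\ref{ass:pcont}. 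Your proof even delivers more than bare continuity: $W_1\bigl(\tilde T(\mu_n,\zeta_n), \tilde T(\mu,\zeta)\bigr) \leq L_P\bigl(W_1(\mu_n,\mu) + W_1(\zeta_n,\zeta)\bigr)$, i.e.\ $\tilde T$ is jointly $L_P$-Lipschitz, which is precisely the quantitative fact the paper invokes without proof when it asserts that ``$\tilde T$ is uniformly Lipschitz'' in the $\mathcal O(1/\sqrt N)$ rate discussion following Theorem~\ref{thm:mf_conv}. One cosmetic slip: the class $\{f \colon \mathrm{Lip}(f) \leq 1,\ \lVert f \rVert_\infty \leq 1\}$ defines the bounded-Lipschitz (Dudley) metric rather than $W_1$; either drop the sup-norm constraint (Kantorovich--Rubinstein duality needs only $\mathrm{Lip}(f) \leq 1$), or observe that since $\int f\,\mathrm d(\nu_1 - \nu_2)$ is invariant under adding constants to $f$, on compact $\mathcal X$ one may normalize $\lVert f \rVert_\infty \leq \operatorname{diam}(\mathcal X)/2$ at no cost --- on a compact space both metrics metrize weak convergence, so the conclusion is unaffected either way.
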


The rest of the proof is similar to \cite[Theorem~2.7]{cui2023multi} -- though we remark that we strengthen the convergence statement from weak convergence to convergence in $L_1$ uniformly over $f \in \mathcal F$ -- by showing via induction over $t$ that
\begin{equation} \label{eq:mf_conv}
    \sup_{\pi \in \Pi} \sup_{f \in \mathcal F}  \E \left[ \left| f(\mu^N_{t}) - f(\mu_{t}) \right| \right]  \to 0.
\end{equation}
Note that the induction start can be verified by a weak LLN argument which is also leveraged in the subsequent induction step. 
For the induction step we assume that \eqref{eq:mf_conv} holds at time $t$. At time $t+1$ we have
\begin{align}
    &\sup_{\pi \in \Pi} \sup_{f \in \mathcal F}  \E \left[ \left| f(\mu^N_{t+1}) - f(\mu_{t+1}) \right| \right]  \nonumber \\
    &\quad \leq \sup_{\pi \in \Pi} \sup_{f \in \mathcal F}  \E \left[\left| f(\mu^N_{t+1}) - f \left( \tilde{T} \left( \mu^N_t, \zeta^{\pi, \mu^N}_t \right) \right) \right| \right]  \label{eq:first_mf_conv} \\
    &\qquad + \sup_{\pi \in \Pi} \sup_{f \in \mathcal F}  \E \left[ \left| f \left( \tilde{T} \left( \mu^N_t, \zeta^{\pi, \mu^N}_t \right) \right) - f(\mu_{t+1}) \right| \right] . \label{eq:second_mf_conv}
\end{align}

We start by analyzing the first term and recall that a modulus of continuity $\omega_{\mathcal F}$ of $\mathcal{F}$ is defined as a function $\omega_{\mathcal F} \colon [0, \infty) \to [0, \infty)$ with both $\lim_{x \to 0} \omega_{\mathcal F}(x) = 0$ and $|f(\mu) - f(\nu)| \leq \omega_{\mathcal F}(W_1(\mu, \nu)), \forall f \in \mathcal F$. By \cite[Lemma~6.1]{devore1993constructive}, such a non-concave and decreasing modulus $\omega_{\mathcal F}$ exists for $\mathcal F$ because it is uniformly equicontinuous due to the compactness of $\mathcal P(\mathcal X)$. Analogously, we have that $\mathcal F$ is uniformly equicontinuous in the space $(\mathcal P(\mathcal X), d_\Sigma)$ as well. Recalling that $\mathcal P(\mathcal X)$ is compact and the topology of weak convergence is metrized by both $d_\Sigma$ and $W_1$, we know that the identity map $\mathrm{id} \colon (\mathcal P(\mathcal X), d_\Sigma) \to (\mathcal P(\mathcal X), W_1)$ is uniformly continuous. Leveraging the above findings, we have that for the identity map there exists a modulus of continuity $\tilde \omega$  such that
\begin{align*}
    |f(\mu) - f(\nu)| \leq \omega_{\mathcal F}(W_1(\mathrm{id} \, \mu, \mathrm{id} \, \nu)) \leq \omega_{\mathcal F}(\tilde \omega(d_\Sigma(\mu, \nu)))
\end{align*}
holds for all $\mu, \nu \in (\mathcal P(\mathcal X), d_\Sigma)$. By \cite[Lemma~6.1]{devore1993constructive}, we can use the least concave majorant of $\tilde \omega_{\mathcal F} \coloneqq \omega_{\mathcal F} \circ \tilde \omega$ instead of $\tilde \omega_{\mathcal F}$ itself. Then, \eqref{eq:first_mf_conv} can be bounded by
\begin{align*}
    \E \left[ \left| f(\mu^N_{t+1}) - f \left( \tilde{T} \left( \mu^N_t, \zeta^{\pi, \mu^N}_t \right) \right) \right| \right] 
    &\leq \E \left[ \tilde \omega_{\mathcal F}\left(d_\Sigma \left( \mu^N_{t+1}, \tilde{T} \left( \mu^N_t, \zeta^{\pi, \mu^N}_t \right) \right) \right) \right] \\
    &\leq \tilde \omega_{\mathcal F}\left( \E \left[ d_\Sigma \left( \mu^N_{t+1}, \tilde{T} \left( \mu^N_t, \zeta^{\pi, \mu^N}_t \right) \right) \right] \right)
\end{align*}
irrespective of both $\pi$ and $f$ by the concavity of $\tilde \omega_{\mathcal F}$ and Jensen's inequality. For notational convenience, we define $x^N_t \coloneqq (x^{i, N}_t)_{i \in [N]}$, and arrive at
\begin{align*}
    \E \left[ d_\Sigma \left( \mu^N_{t+1},\tilde{T} \left( \mu^N_t, \zeta^{\pi, \mu^N}_t \right) \right) \right]
    &= \sum_{m=1}^\infty 2^{-m} \E \left[ \left| \int f_m \, \mathrm d \left( \mu^N_{t+1} - \tilde{T} \left( \mu^N_t, \zeta^{\pi, \mu^N}_t \right) \right) \right| \right] \\
    &\leq \sup_{m \geq 1} \E \left[ \E \left[ \left| \int f_m \, \mathrm d \left( \mu^N_{t+1} - \tilde{T} \left( \mu^N_t, \zeta^{\pi, \mu^N}_t \right) \right) \right| \innermid x^N_t \right] \right].
\end{align*}
Finally, we require the aforementioned weak LLN argument which goes as follows
\begin{align*}
    &\E \left[ \left| \int f_m \, \mathrm d \left( \mu^N_{t+1} - \tilde{T} \left( \mu^N_t, \zeta^{\pi, \mu^N}_t \right) \right) \right| \innermid x^N_t \right]^2 \\
    &\quad = \E \left[ \left| \frac 1 N \sum_{i \in [N]} \left( f_m(x^i_{t+1}) - \E \left[ f_m(x^i_{t+1}) \innermid x^N_t \right] \right) \right| \innermid x^N_t \right]^2 \\
    &\quad \leq \E \left[ \left| \frac 1 N \sum_{i \in [N]} \left( f_m(x^i_{t+1}) - \E \left[ f_m(x^i_{t+1}) \innermid x^N_t \right] \right) \right|^2 \innermid x^N_t \right] \\
    &\quad = \frac{1}{N^2} \sum_{i \in [N]} \E \left[ \left( f_m(x^i_{t+1}) - \E \left[ f_m(x^i_{t+1}) \innermid x^N_t \right] \right)^2 \innermid x^N_t \right] \leq \frac{4}{N} \to 0.
\end{align*}
Here, we have used that $|f_m| \leq 1$, as well as the conditional independence of $x^i_{t+1}$ given $x^N_t$. In combination with the above results, the term \eqref{eq:first_mf_conv} thus converges to zero. Moving on to the remaining second term \eqref{eq:second_mf_conv}, we note that the induction assumption implies that
\begin{align*}
    &\sup_{\pi \in \Pi} \sup_{f \in \mathcal F}  \E \left[ \left| f \left( \tilde{T} \left( \mu^N_t, \zeta^{\pi, \mu^N}_t \right) \right) - f(\mu_{t+1}) \right| \right]  \\
    &\quad = \sup_{\pi \in \Pi} \sup_{f \in \mathcal F}  \E \left[ \left| f \left( \tilde{T} \left( \mu^N_t, \zeta^{\pi, \mu^N}_t \right) \right) - f \left( \tilde{T} \left( \mu_t, \zeta^{\pi, \mu}_t \right) \right) \right| \right]  \\
    &\quad \leq \sup_{\pi \in \Pi} \sup_{g \in \mathcal G}  \E \left[ \left| g(\mu^N_t) - g(\mu_t) \right| \right]  \to 0
\end{align*}
using the function $g \coloneqq f \circ \tilde T^{\pi_t}_*$ which belongs to the class $\mathcal G$ of equicontinuous functions with modulus of continuity $\omega_{\mathcal G} \coloneqq \omega_{\mathcal F} \circ \omega_{\tilde{T}}$. 
Here, $\omega_{\tilde{T}}$ is the uniform modulus of continuity over all policies $\pi$ of $\mu_t \mapsto \tilde T^{\pi_t}_*(\mu_t) \coloneqq \tilde{T}(\mu_t, \zeta^{\pi, \mu}_t)$. The equicontinuity of $\{ \tilde T^{\pi_t}_* \}_{\pi \in \Pi}$ is a consequence of Lemma~\ref{prop:Kcont} as well as the equicontinuity of functions $\mu_t \mapsto \zeta^{\pi, \mu}_t$ which in turn follows from the uniform Lipschitzness of $\Pi$. The validation of this claim is provided in the next lines. Note that this also completes the induction and thereby the proof. 
For a sequence of $\mu_n \to \mu \in \mathcal P(\mathcal X)$ we can write
\begin{align*}
    &\sup_{\pi \in \Pi} W_1(\zeta^{\pi, \mu_n}_t, \zeta^{\pi, \mu}_t)  \\
    &\qquad \leq \sup_{\pi \in \Pi} \sup_{\lVert f' \rVert_{\mathrm{Lip}} \leq 1} \left| \iiint f'(x,u) \pi_t(\mathrm d u \mid y) (P^y(\mathrm dy \mid x, \mu_n) - P^y(\mathrm dy \mid x, \mu)) \mu_n(\mathrm d x) \right| \\
    &\qquad \quad + \sup_{\pi \in \Pi} \sup_{\lVert f' \rVert_{\mathrm{Lip}} \leq 1} \left| \iiint f'(x,u) \pi_t(\mathrm d u \mid y) P^y(\mathrm dy \mid x, \mu) (\mu_n(\mathrm d x) - \mu(\mathrm d x)) \right|.
\end{align*}
Starting with the first term, we apply \crefrange{ass:pcont}{ass:picont} to arrive at
\begin{align*}
    &\sup_{\pi \in \Pi} \sup_{\lVert f' \rVert_{\mathrm{Lip}} \leq 1} \left| \iiint f'(x,u) \pi_t(\mathrm d u \mid y) (P^y(\mathrm dy \mid x, \mu_n) - P^y(\mathrm dy \mid x, \mu)) \mu_n(\mathrm d x) \right| \\
    &\quad \leq \sup_{\pi \in \Pi} \sup_{\lVert f' \rVert_{\mathrm{Lip}} \leq 1} \int \left| \iint f'(x,u) \pi_t(\mathrm d u \mid y) (P^y(\mathrm dy \mid x, \mu_n) - P^y(\mathrm dy \mid x, \mu)) \right| \mu_n(\mathrm dx) \\
    &\quad \leq \sup_{\pi \in \Pi} \sup_{\lVert f' \rVert_{\mathrm{Lip}} \leq 1} \sup_{x \in \mathcal X} \left| \iint f'(x,u) \pi_t(\mathrm d u \mid y) (P^y(\mathrm dy \mid x, \mu_n) - P^y(\mathrm dy \mid x, \mu)) \right| \\
    &\quad \leq L_{\Pi} \sup_{x \in \mathcal X} W_1(P^y( \cdot \mid x, \mu_n), P^y( \cdot \mid x, \mu)) \\
    &\quad \leq L_{\Pi} L_{P^y} W_1(\mu_n, \mu) \to 0
\end{align*}
with Lipschitz constant $L_{\Pi}$ corresponding to the Lipschitz function $y \mapsto \int f'(x,u) \pi_t(\mathrm d u \mid y)$. Alternatively, if $P^y$ is assumed independent of the mean field in Assumption~\ref{ass:picont}, the term is zero. 

In a similar fashion, we point out the $1$-Lipschitzness of $x \mapsto \iint \frac {f'(x, u)} {L_\Pi L_{P^y} + 1} \pi_t(\mathrm d u \mid y) P^y(\mathrm dy \mid x, \mu)$, as
\begin{align*}
    &\left| \iint \frac {f'(z, u)} {L_\Pi L_{P^y} + 1} \pi_t(\mathrm d u \mid y) P^y(\mathrm dy \mid z, \mu) - \iint \frac {f'(x, u)} {L_\Pi L_{P^y} + 1} \pi_t(\mathrm d u \mid y) P^y(\mathrm dy \mid x, \mu) \right| \\
    &\quad \leq \left| \iint \frac {f'(z, u) - f'(x,u)} {L_\Pi L_{P^y} + 1} \pi_t(\mathrm d u \mid y) P^y(\mathrm dy \mid z, \mu) \right| \\
    &\qquad + \left| \iint \frac {f'(x, u)} {L_\Pi L_{P^y} + 1} \pi_t(\mathrm d u \mid y) ( P^y(\mathrm dy \mid z, \mu) - P^y(\mathrm dy \mid x, \mu) ) \right| \\
    &\quad \leq \frac{1}{L_\Pi L_{P^y} + 1} d(z, x) + \frac{L_\Pi}{L_\Pi L_{P^y} + 1} W_1(P^y(\mathrm dy \mid z, \mu), P^y(\mathrm dy \mid x, \mu)) \\
    &\quad \leq \left( \frac{1}{L_\Pi L_{P^y} + 1} + \frac{L_\Pi L_{P^y}}{L_\Pi L_{P^y} + 1} \right) d(x, y) = d(x, y)
\end{align*}
for $z \neq x$. Alternatively, if the state space is assumed finite in Assumption~\ref{ass:picont}, the Lipschitzness follows directly. 

This eventually yields the convergence of the second term, i.e.
\begin{align*}
    &\sup_{\pi \in \Pi} \sup_{\lVert f' \rVert_{\mathrm{Lip}} \leq 1} \left| \iiint f'(x,u) \pi_t(\mathrm d u \mid y) P^y(\mathrm dy \mid x, \mu) (\mu_n(\mathrm d x) - \mu(\mathrm d x)) \right| \\
    &\quad = \sup_{\pi \in \Pi} \sup_{\lVert f' \rVert_{\mathrm{Lip}} \leq 1} (L_{P^y} L_\Pi + 1) \left| \iint \frac{f'(x, u)}{L_{P^y} L_\Pi + 1} \pi_t(\mathrm d u \mid y) P^y(\mathrm dy \mid x, \mu) (\mu_n(\mathrm d x) - \mu(\mathrm d x)) \right| \\
    &\quad \leq (L_{P^y} L_\Pi + 1) W_1(\mu_n, \mu) \to 0
\end{align*}
and thus completes the proof.
\end{proof}

In the special case of finite states and actions, the approximation rate can also be quantified to $\mathcal O(1/\sqrt N)$ by considering equi-Lipschitz families of functions $\mathcal F$ with constant $L_f$. Then, there is no need to consider the two different metrizations $d_\Sigma$ and $W_1$, as they are Lipschitz equivalent, and one can simply use the $L_1$ distance. The convergence in the first term \eqref{eq:first_mf_conv} is then directly via the weak LLN at rate $\mathcal O(1 / \sqrt N)$ by 
\begin{align*}
    &\sup_{\pi \in \Pi} \sup_{f \in \mathcal F}  \E \left[\left| f(\mu^N_{t+1}) - f \left( \tilde{T} \left( \mu^N_t, \zeta^{\pi, \mu^N}_t \right) \right) \right| \right] \\
    &\quad \leq \sup_{\pi \in \Pi} L_f \E \left[ \sum_{x \in \mathcal X} \left| \mu^N_{t+1}(x) - \tilde{T} \left( \mu^N_t, \zeta^{\pi, \mu^N}_t \right)(x) \right| \right] \\
    &\quad = \sup_{\pi \in \Pi} L_f \sum_{x \in \mathcal X} \E \left[ \E \left[ \left| \frac 1 N \sum_{i=1}^N \mathbf 1_x(x^{i,N}_{t+1}) - \E \left[ \frac 1 N \sum_{i=1}^N \mathbf 1_x(x^{i,N}_{t+1}) \innermid x^N_t \right] \right| \innermid x^N_t \right] \right] \\
    &\quad \leq L_f |\mathcal X| \sqrt{\frac{4}{N}},
\end{align*}
while for the second term \eqref{eq:second_mf_conv} we use the induction assumption, since $\tilde T$ is uniformly Lipschitz.

\section{Agents with Memory and History-Dependence} \label{app:history}
For agents with bounded memory, we note that such memory can be analyzed by our model by adding the memory state to the usual agent state, and manipulations on the memory either to the actions or transition dynamics. 

For example, let $z^i_t \in \mathcal Q \coloneqq \{ 0, 1 \}^Q$ be the $Q$-bit memory of an agent at any time. Then, we may consider the new $\mathcal X \times \mathcal Q$-valued state $(x^i_t, z^i_t)$, which remains compact, and the new $\mathcal U \times \mathcal Q$-valued actions $(u^i_t, w^i_t)$, where $w^i_t$ is a write action that can arbitrarily rewrite the memory, $z^i_{t+1} = w^i_t$. Theoretical properties are preserved by discreteness of added states and actions.

Analogously, extending transition dynamics to include observations $y$ also allows for description of history-dependent policies. This approach extends to infinite-memory states, by adding observations $y$ also to the transition dynamics, and considering histories for states and observations. Define the observation space of histories $\mathcal Y' \coloneqq \mathcal Y \times \bigcup_{i=0}^\infty (\mathcal Y \times \mathcal U)^i$, and the according state space $\mathcal X' \coloneqq \mathcal X \times \bigcup_{i=0}^\infty (\mathcal Y \times \mathcal U)^i$. The new mean fields $\mu^N_t, \mu_t$ are thus $\mathcal P(\mathcal X')$-valued. The new observation-dependent dynamics are then defined by 
\begin{align*}
    P'(\cdot \mid x, y, u, \mu) = P(\cdot \mid x_1, u, \operatorname{marg_1} \mu) \otimes \delta_{(\mathbf x_2, y, u)}
\end{align*}
where $\operatorname{marg_1}$ maps $\mu$ to its first marginal, $x_1$ is the first component of $x$, and $\mathbf x_2$ is the $(\mathcal Y \times \mathcal U)^t$-valued past history. Here, $(\mathbf x_2, y, u)$ defines the new history of an agent, which is observed by
\begin{align*}
    P^{y\prime}(\cdot \mid x, \mu) = P^y(\cdot \mid x_1, \operatorname{marg_1} \mu) \otimes \delta_{\mathbf x_2}.
\end{align*}

Clearly, Lipschitz continuity is preserved. Further, we obtain the mean field transition operator
\begin{align*}
    T'(\mu_t, h_t') \coloneqq \iiint P'(\cdot \mid x, y, u, \mu_t) h_t'(\mathrm dx, \mathrm dy, \mathrm du).
\end{align*}
using $\mathcal X' \times \mathcal Y' \times \mathcal U$-valued actions $h_t' = \mu_t \otimes P^y(\mu_t) \otimes \check \pi[h_t']$ for some Lipschitz $\check \pi[h_t'] \colon \mathcal Y' \to \mathcal P(\mathcal U)$. And in particular, the proof of e.g. Theorem~\ref{thm:mf_conv} extends to this new case. For example, the weak LLN argument still holds by
\begin{align*}
    &\E \left[ d_\Sigma \left( \mu^N_{t+1}, T' \left( \mu^N_t, h_t' \right) \right) \right] \\
    &\quad \leq \sup_{m \geq 1} \E \left[ \E \left[ \left| \int f_m \, \mathrm d \left( \mu^N_{t+1} - T' \left( \mu^N_t, h_t' \right) \right) \right| \innermid x^N_t \right] \right] \\
    &\quad \leq \sup_{m \geq 1} \E \left[ \left| \frac 1 N \sum_{i \in [N]} \left( f_m(x^i_{t+1}, y^i_0, u^i_0, \ldots, y^i_{t}, u^i_{t}) 
    \right.\right.\right.\nonumber\\&\hspace{3cm}\left.\left.\left.
    - \E \left[ f_m(x^i_{t+1}, y^i_0, u^i_0, \ldots, y^i_{t}, u^i_{t}) \innermid x^N_t \right] \right) \right|^2 \innermid x^N_t \right]^{\frac 1 2}  \leq \frac{4}{N} \to 0.
\end{align*}
for appropriate sequences of functions $(f_m)_{m \geq 1}$, $f_m \colon \mathcal X \times (\mathcal Y \times \mathcal U)^{t+1} \to [-1, 1]$ \citep{parthasarathy2005probability} and
\begin{multline*}
    \int f_m \, \mathrm dT' \left( \mu^N_t, h_t' \right) = \int f_m(x_{t+1}, y_0, u_0, \ldots, y_t, u_t) P'(\mathrm dx_{t+1} \mid x_t, y_t, u_t, \mu^N_t) \\
    \check \pi[h_t'](\mathrm du_t \mid y_t) P^{y\prime}(\mathrm dy_t \mid x_t, \mu^N_t) \mu^N_t(\mathrm dx_t, \mathrm dy_0, \mathrm du_0, \ldots, \mathrm dy_{t-1}, \mathrm du_{t-1}).
\end{multline*}

Analogously, we can see that the above is part of a set of equicontinuous functions, and again allows application of the induction assumption, completing the extension.

\section{Approximate MFC-type Dec-POMDP Optimality}
\begin{proof}[Proof of Corollary~\ref{coro:epsopt}]
The finite-agent discounted objective converges uniformly over policies to the MFC objective
\begin{equation} \label{eq:JNconv}
    \sup_{\pi \in \Pi} \left| J^N(\pi) - J(\pi) \right| \to 0 \quad \text{as $N \to \infty$,}
\end{equation}
since for any $\varepsilon > 0$, let $T \in \mathcal T$ such that $\sum_{t=T}^{\infty} \gamma^t \E \left| \left[ r(\mu^N_t) - r(\mu_t) \right] \right| \leq \frac{\gamma^T}{1-\gamma} \max_\mu 2 |r(\mu)| < \frac \varepsilon 2$, and further let $\sum_{t=0}^{T-1} \gamma^t \E \left| \left[ r(\mu^N_t) - r(\mu_t) \right] \right| < \frac \varepsilon 2$ by Theorem~\ref{thm:mf_conv} for sufficiently large $N$.

Therefore, approximate optimality is obtained by
\begin{align*}
    J^N(\pi) - \sup_{\pi' \in \Pi} J^N(\pi') &= \inf_{\pi' \in \Pi} (J^N(\pi) - J^N(\pi')) \\
    &\geq \inf_{\pi' \in \Pi} (J^N(\pi) - J(\pi))
    + \inf_{\pi' \in \Pi} (J(\pi) - J(\pi')) + \inf_{\pi' \in \Pi} (J(\pi') - J^N(\pi')) \\
    &\geq - \frac \varepsilon 2 + 0 - \frac \varepsilon 2 = - \varepsilon
\end{align*}
by the optimality of $\pi \in \argmax_{\pi' \in \Pi} J(\pi')$ and \eqref{eq:JNconv} for sufficiently large $N$.
\end{proof}

\section{Equivalence of Dec-POMFC and Dec-MFC}
\begin{proof}[Proof of Proposition~\ref{prop:dec-pomfc-conv}]
We begin by showing the first statement. The proof is by showing $\bar \mu_t = \mu_t$ at all times $t \in \mathcal T$, as it then follows that $\bar J(\bar \pi) = \sum_{t=0}^{\infty} \gamma^t r(\bar \mu_t) = \sum_{t=0}^{\infty} \gamma^t r(\mu_t) = J(\pi)$. At time $t=0$, we have by definition $\bar \mu_0 = \mu_0$. Assume $\bar \mu_t = \mu_t$ at time $t$, then at time $t+1$, by \eqref{eq:dec-pomfc} and \eqref{eq:dec-mfc}, we have
\begin{align}
    \bar \mu_{t+1} &= \iiint P(x, u, \mu_t) \bar \pi_t(\mathrm du \mid y, \bar \mu_t) P^y(\mathrm dy \mid x, \bar \mu_t) \bar \mu_t(\mathrm dx) \\
    &= \iiint P(x, u, \mu_t) \pi_t(\mathrm du \mid y) P^y(\mathrm dy \mid x, \mu_t) \mu_t(\mathrm dx) = \mu_{t+1}
\end{align}
which is the desired statement. An analogous proof for the second statement completes the proof.
\end{proof}

\section{Optimality of Dec-MFC Solutions}
\begin{proof}[Proof of Corollary~\ref{coro:dec-mfc-opt}]
Assume $J(\Phi(\bar \pi)) < \sup_{\pi' \in \Pi} J(\pi')$. Then there exists $\pi' \in \Pi$ such that $J(\Phi(\bar \pi)) < J(\pi')$. But by Proposition~\ref{prop:dec-pomfc-conv}, there exists $\bar \pi' \in \bar \Pi$ such that $\bar J(\bar \pi') = J(\pi')$ and hence $\bar J(\bar \pi) = J(\Phi(\bar \pi)) < J(\pi') = \bar J(\bar \pi')$, which contradicts $\bar \pi \in \argmax_{\bar \pi' \in \bar \Pi} \bar J(\bar \pi')$. Therefore, $\Phi(\bar \pi) \in \argmax_{\pi' \in \Pi} J(\pi')$.
\end{proof}

\section{Dynamic Programming Principle}
\begin{proof}[Proof of Theorem~\ref{thm:dpp}]
We verify the assumptions in \citep{hernandez1992discrete}. First, note the (weak) continuity of transition dynamics $\hat T$.
\begin{proposition} \label{prop:hatTcont}
Under Assumption~\ref{ass:pcont}, $\hat T(\mu_n, h_n) \to \hat T(\mu, h)$ for any sequence $(\mu_n, h_n) \to (\mu, h)$ of MFs $\mu_n, \mu \in \mathcal P(\mathcal X)$ and joint distributions $h_n \in \mathcal H(\mu_n)$, $h \in \mathcal H(\mu)$.
\end{proposition}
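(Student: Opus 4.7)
The plan is to show weak convergence of $\hat T(\mu_n, h_n)$ to $\hat T(\mu, h)$, which on the compact space $\mathcal P(\mathcal X)$ equipped with $W_1$ is equivalent to $W_1$-convergence. Hence it suffices to test against an arbitrary bounded Lipschitz function $f \colon \mathcal X \to \mathbb R$ and show
\begin{equation*}
\int f \, \mathrm d\hat T(\mu_n, h_n) \to \int f \, \mathrm d\hat T(\mu, h).
\end{equation*}

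First, I would introduce the telescoping decomposition
\begin{align*}
\int f \, \mathrm d\hat T(\mu_n, h_n) - \int f \, \mathrm d\hat T(\mu, h) = A_n + B_n,
\end{align*}
where $A_n$ replaces the mean field in $P$ but keeps the joint $h_n$,
\begin{equation*}
A_n = \iiint \int f(x')\,\bigl[P(\mathrm dx' \mid x,u,\mu_n) - P(\mathrm dx' \mid x,u,\mu)\bigr]\, h_n(\mathrm dx,\mathrm dy,\mathrm du),
\end{equation*}
and $B_n$ keeps the reference mean field $\mu$ but compares $h_n$ with $h$,
\begin{equation*}
B_n = \iiint g(x,y,u)\, h_n(\mathrm dx,\mathrm dy,\mathrm du) - \iiint g(x,y,u)\, h(\mathrm dx,\mathrm dy,\mathrm du),
\end{equation*}
with $g(x,y,u) \coloneqq \int f(x')\, P(\mathrm dx' \mid x, u, \mu)$ (independent of $y$ in its definition, but viewed on $\mathcal X \times \mathcal Y \times \mathcal U$).

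For $A_n$, Lipschitzness of $P$ in the MF argument (Assumption~\ref{ass:pcont}) together with the Kantorovich--Rubinstein duality yields $|A_n| \le \|f\|_{\mathrm{Lip}} L_P W_1(\mu_n, \mu)$, which vanishes since $\mu_n \to \mu$. For $B_n$, the same Lipschitzness of $P$ in $(x,u)$ implies that $g$ is Lipschitz, hence a bounded continuous function on the compact product space $\mathcal X \times \mathcal Y \times \mathcal U$. The weak convergence $h_n \to h$ (equivalently $W_1$-convergence on this compact space) therefore gives $B_n \to 0$ by the definition of weak convergence. Combining these two bounds proves the claim.

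The main technical step is really just the Lipschitzness of $g$, which follows directly from Assumption~\ref{ass:pcont}; the rest is a standard continuity-under-weak-convergence argument exploiting compactness of all state, observation, and action spaces. No obstacle beyond carefully checking that $g$ extends trivially (as a function constant in $y$) to a bounded continuous function on the full joint space so that weak convergence of $h_n$ on $\mathcal P(\mathcal X \times \mathcal Y \times \mathcal U)$ can be invoked.
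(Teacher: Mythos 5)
Your proof is correct and is in substance the same argument the paper uses: the paper marginalizes $h_n$ over $\mathcal Y$ and then cites the continuity of $(\mu,\zeta)\mapsto\iint P(\cdot\mid x,u,\mu)\,\zeta(\mathrm dx,\mathrm du)$ from \citet[Lemma~2.5]{cui2023multi}, and your telescoping into the $W_1(\mu_n,\mu)$-term and the weak-convergence-of-$h_n$ term is exactly the standard proof of that cited lemma. The only cosmetic difference is that you keep $h_n$ on the full product space with an integrand constant in $y$ rather than passing to the $(\mathcal X\times\mathcal U)$-marginal first; both steps are valid and equivalent.
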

\begin{subproof}
The convergence $h_n \to h$ also implies the convergence of its marginal $\int_{\mathcal Y} h_n(\cdot, \mathrm dy, \cdot) \to \int_{\mathcal Y} h(\cdot, \mathrm dy, \cdot)$. The proposition then follows immediately from Proposition~\ref{prop:Kcont}.
\end{subproof}

Furthermore, the reward is continuous and hence bounded by Assumption~\ref{ass:pcont}. It is inf-compact by 
\begin{align*}
    \{ h \in \mathcal H(\mu) \mid -r(\mu) \leq c \} = 
    \begin{cases}
        \mathcal H(\mu) \quad \text{if $-r(\mu) \leq c$,} \\
        \emptyset \quad \text{else,}
    \end{cases}
\end{align*}
where $\mathcal H(\mu)$ is closed by \citep[Appendix A.2]{cui2023multi}, and Lemma~\ref{lem:Hclosed} if considering equi-Lipschitz policies in Assumption~\ref{ass:picont}.

Further, by compactness of $\mathcal P(\mathcal X \times \mathcal Y \times \mathcal U)$, $\mathcal H(\mu)$ is compact as a closed subset of a compact set.

Lastly, lower semicontinuity of $\mu \mapsto \mathcal H(\mu)$ is given, since for any $\mu_n \to \mu$ and $h = \mu \otimes P^y(\mu) \otimes \check \pi \in \mathcal H(\mu)$, we can find $h_n \in \mathcal H(\mu_n)$: Let $h_n = \mu_n \otimes P^y(\mu_n) \otimes \check \pi$, then
\begin{align*}
    W_1(h_n, h) &= \sup_{f \in \mathrm{Lip}(1)} \iiint f(x, y, u) \check \pi(\mathrm du \mid y) \left( P^y(\mathrm dy \mid x, \mu_n) \mu_n(\mathrm dx) - P^y(\mathrm dy \mid x, \mu) \mu(\mathrm dx) \right) \\
    &\quad \leq \sup_{f \in \mathrm{Lip}(1)} \iiint f(x, y, u) \check \pi(\mathrm du \mid y) \left( P^y(\mathrm dy \mid x, \mu_n) - P^y(\mathrm dy \mid x, \mu) \right) \mu_n(\mathrm dx) \\
    &\qquad + \sup_{f \in \mathrm{Lip}(1)} \iiint f(x, y, u) \check \pi(\mathrm du \mid y) P^y(\mathrm dy \mid x, \mu) \left( \mu_n(\mathrm dx) - \mu(\mathrm dx) \right) \\
    &\quad \leq \sup_{f \in \mathrm{Lip}(1)} \int \left| \iint f(x, y, u) \check \pi(\mathrm du \mid y) \left( P^y(\mathrm dy \mid x, \mu_n) - P^y(\mathrm dy \mid x, \mu) \right) \right| \mu_n(\mathrm dx) \\
    &\qquad + \sup_{f \in \mathrm{Lip}(1)} \iiint f(x, y, u) \check \pi(\mathrm du \mid y) P^y(\mathrm dy \mid x, \mu) \left( \mu_n(\mathrm dx) - \mu(\mathrm dx) \right) \to 0
\end{align*}
since the integrands are Lipschitz by Assumption~\ref{ass:pcont} and analyzed as in the proof of Theorem~\ref{thm:mf_conv}. 

The proof concludes by \citep[Theorem 4.2]{hernandez1992discrete}.
\end{proof}

\section{Convergence Lemma}
\begin{lemma} \label{lem:conv}
Assume that $(X, d)$ is a complete metric space and that $(x_n)_{n \in \mathbb N}$ is a sequence of elements of $X$. Then, the convergence condition of the sequence $(x_n)_{n \in \mathbb N}$, i.e. that
\begin{align}
    \exists x \in X: \forall \varepsilon > 0: \exists N \in \mathbb{N}: \forall n \geq N: d (x, x_n) < \varepsilon \label{eq:conv_std}
\end{align}
holds, is equivalent to the statement
\begin{align}
    \forall \varepsilon > 0: \exists x \in X: \exists N \in \mathbb{N}: \forall n \geq N: d (x, x_n) < \varepsilon. \label{eq:conv_new}
\end{align}
\end{lemma}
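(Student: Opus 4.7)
The plan is to prove the two directions separately, noting that one direction is immediate while the other requires exploiting completeness. The direction (\ref{eq:conv_std}) $\Rightarrow$ (\ref{eq:conv_new}) is trivial: given the common witness $x \in X$ from (\ref{eq:conv_std}), for each $\varepsilon > 0$ the same $x$ (together with the associated $N$) serves as the witness required by (\ref{eq:conv_new}).

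For the nontrivial direction (\ref{eq:conv_new}) $\Rightarrow$ (\ref{eq:conv_std}), I would first show that the sequence $(x_n)_{n \in \mathbb N}$ is Cauchy. Fix $\varepsilon > 0$ and apply (\ref{eq:conv_new}) with $\varepsilon/2$: there exist $x' \in X$ and $N \in \mathbb N$ such that $d(x', x_n) < \varepsilon/2$ for all $n \geq N$. Then for any $m, n \geq N$, the triangle inequality gives
\begin{align*}
    d(x_m, x_n) \leq d(x_m, x') + d(x', x_n) < \varepsilon/2 + \varepsilon/2 = \varepsilon,
\end{align*}
establishing the Cauchy property. Note that the witness $x'$ depended on $\varepsilon$, but the resulting Cauchy bound does not.

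Next, by completeness of $(X, d)$, the Cauchy sequence $(x_n)_{n \in \mathbb N}$ admits a limit $x \in X$, that is, $d(x, x_n) \to 0$ as $n \to \infty$. This $x$ is then a single witness valid for all $\varepsilon > 0$: for any $\varepsilon > 0$ there exists $N \in \mathbb N$ with $d(x, x_n) < \varepsilon$ for all $n \geq N$, which is precisely (\ref{eq:conv_std}).

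There is no real obstacle here; the only subtlety is recognizing that the witnesses $x$ appearing in (\ref{eq:conv_new}) may depend on $\varepsilon$, so one cannot directly read off a single limit point from the hypothesis. Completeness is the ingredient that bridges this gap: the triangle inequality is what allows the $\varepsilon$-dependent witnesses to force the Cauchy property, and completeness then supplies the $\varepsilon$-independent limit.
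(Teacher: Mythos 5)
Your proof is correct, but it takes a genuinely different route from the paper's. You show that the original sequence $(x_n)_{n \in \mathbb N}$ is itself Cauchy: applying \eqref{eq:conv_new} with $\varepsilon/2$ yields a single witness $x'$ that is $\varepsilon/2$-close to all tail terms, so the triangle inequality bounds $d(x_m, x_n) < \varepsilon$ for $m, n$ large, and completeness then hands you the limit of $(x_n)$ directly, which is the $\varepsilon$-independent witness needed for \eqref{eq:conv_std}. The paper instead fixes a decreasing null sequence $(\varepsilon_i)$, collects the corresponding $\varepsilon_i$-dependent witnesses $(x_i)$, proves that this \emph{auxiliary sequence of witnesses} is Cauchy, extracts its limit $x$ by completeness, and finally uses one more triangle inequality to transfer the approximation back to $(x_n)$. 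Both arguments use exactly the same two ingredients (triangle inequality plus completeness), but yours is more economical: it avoids the auxiliary sequence and the final transfer step, applying completeness to $(x_n)$ itself rather than to the witnesses. The paper's version has the mild advantage of making the role of the $\varepsilon$-dependent witnesses explicit as a convergent object in their own right, but nothing in the rest of the paper depends on that, so your shorter argument would serve equally well.
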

\begin{proof}
\eqref{eq:conv_std} $\Rightarrow$ \eqref{eq:conv_new}: follows immediately.

\eqref{eq:conv_new} $\Rightarrow$ \eqref{eq:conv_std}: Choose some strictly monotonically decreasing, positive sequence of $(\varepsilon_i)_{i \in \mathbb{N}}$ with $\lim_{i \to \infty} \varepsilon_i = 0$. Then, by statement \eqref{eq:conv_new} we can define corresponding sequences $(x_i)_{i \in \mathbb{N}}$ and $(N_i)_{i \in \mathbb{N}}$ such that
\begin{align}
    \forall n \geq N_i: d (x_i, x_n) < \varepsilon_i.
\end{align}
Consider $i, i' \in \mathbb{N}$ and assume w.l.o.g. $i < i'$. We know by the triangle inequality
\begin{align}
    \forall n \geq \max \{N_{i}, N_{i'} \}: d (x_i, x_{i'}) \leq d (x_i, x_{n}) + d (x_n, x_{i'}) \leq 2 \varepsilon_i. \label{eq3:conv_equi}
\end{align}
Thus, the sequence $(x_i)_{i \in \mathbb{N}}$ is Cauchy and therefore converges to some $x \in X$ because $(X, d)$ is a complete metric space by assumption. Specifically, this is equivalent to
\begin{align}
    \exists x \in X: \forall \varepsilon > 0: \exists I \in \mathbb{N}: \forall i \geq I: d (x, x_i) < \varepsilon.\label{eq4:conv_equi}
\end{align}
Finally, statements \eqref{eq3:conv_equi}, \eqref{eq4:conv_equi}, and the triangle inequality yield
\begin{align*}
    \exists x \in X: \forall 2\varepsilon > 0: \exists N \in \mathbb{N}: \forall n \geq N: d (x, x_n) \leq d (x, x_i) + d (x_i, x_n) < 2 \varepsilon
\end{align*}
which implies the desired statement \eqref{eq:conv_std} and concludes the proof.
\end{proof}

\section{Closedness of Joint Measures under Equi-Lipschitz Kernels} \label{app:Hclosed}
\begin{lemma} \label{lem:Hclosed}
    Let $\mu_{xy} \in \mathcal P(\mathcal X \times \mathcal Y)$ be arbitrary. For any $h_n = \mu_{xy} \otimes \check \pi_n \to h \in \mathcal P(\mathcal X \times \mathcal Y \times \mathcal U)$ with $L_\Pi$-Lipschitz $\check \pi_n \in \mathcal P(\mathcal U)^{\mathcal Y}$, there exists $L_\Pi$-Lipschitz $\check \pi \in \mathcal P(\mathcal U)^{\mathcal Y}$ such that $h = \mu_{xy} \otimes \check \pi$.
\end{lemma}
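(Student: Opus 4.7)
Under Assumption~\ref{ass:hilbertfin}, $\mathcal Y$ is a compact metric space (a compact subset of a Hilbert space) and $\mathcal U$ is finite, so $\mathcal P(\mathcal U)$ is a compact simplex metrized by $W_1$. The plan is to extract a uniformly convergent subsequence of $\{\check\pi_n\}$ via Arzelà--Ascoli, inherit Lipschitzness in the limit, and then use uniqueness of weak limits to identify the disintegration of $h$.

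\emph{Step 1 (Arzelà--Ascoli).} The family $\{\check\pi_n\}_{n \in \mathbb N}$ consists of maps $\check\pi_n \colon \mathcal Y \to \mathcal P(\mathcal U)$ that are $L_\Pi$-Lipschitz in the $W_1$ metric, hence equicontinuous. Since $\mathcal P(\mathcal U)$ is compact (a finite-dimensional simplex), the family is also pointwise relatively compact. Because $\mathcal Y$ is compact, Arzelà--Ascoli yields a subsequence $\check\pi_{n_k}$ converging uniformly in $W_1$ to some function $\check\pi \colon \mathcal Y \to \mathcal P(\mathcal U)$. As a uniform limit of $L_\Pi$-Lipschitz functions, $\check\pi$ is itself $L_\Pi$-Lipschitz.

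\emph{Step 2 (Identification of the limit).} I would show $\mu_{xy}\otimes\check\pi_{n_k} \to \mu_{xy}\otimes\check\pi$ weakly. For any bounded continuous $f \colon \mathcal X\times\mathcal Y\times\mathcal U \to \mathbb R$,
\begin{align*}
    \left| \int f \, d(\mu_{xy}\otimes\check\pi_{n_k}) - \int f \, d(\mu_{xy}\otimes\check\pi) \right|
    \leq \sup_{(x,y)} \left| \int f(x,y,u)\bigl(\check\pi_{n_k}(du\mid y) - \check\pi(du\mid y)\bigr) \right|.
\end{align*}
Uniform continuity of $f$ on the compact domain together with uniform convergence $\check\pi_{n_k}\to\check\pi$ in $W_1$ forces the right-hand side to $0$. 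By hypothesis the same sequence $h_{n_k} = \mu_{xy}\otimes\check\pi_{n_k}$ also converges weakly to $h$, so uniqueness of weak limits gives $h = \mu_{xy}\otimes\check\pi$, as required.

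\emph{Main obstacle.} The substantive content is the Arzelà--Ascoli step, which is where Assumption~\ref{ass:hilbertfin} is essential: compactness of $\mathcal P(\mathcal U)$ requires finiteness of $\mathcal U$, and the compactness of $\mathcal Y$ (needed both to extract a uniformly convergent subsequence and to pass from pointwise to uniform convergence of $\int f(x,y,u)\,\check\pi_{n_k}(du\mid y)$) uses that $\mathcal Y$ sits as a compact metric subspace of a Hilbert space. Without these, the Arzelà--Ascoli argument collapses, which is precisely why Theorem~\ref{thm:dpp} strengthens the standing assumptions at this juncture. Once uniform convergence in $W_1$ is secured, transferring it through the outer integral against $\mu_{xy}$ is routine.
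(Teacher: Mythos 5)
Your proof is correct, but it follows a genuinely different and substantially shorter route than the paper's. The paper argues directly on the full sequence: it first establishes $\mu_y$-a.e.\ pointwise convergence of $\check\pi_n$ by contradiction (locating a point $y^*$ all of whose neighborhoods carry positive mass via Cantor's intersection theorem, then building explicit Lipschitz test functions to contradict $W_1$-convergence of the joint measures), upgrades this to uniform convergence off a small set via Egorov, extends the a.e.-defined Lipschitz limit to all of $\mathcal Y$ by the Kirszbraun--Valentine theorem (this is where the Hilbert structure of $\mathcal Y$ and finiteness of $\mathcal U$ are actually used), and finally verifies $h = \mu_y \otimes \check\pi$. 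You instead compactify: Arzel\`a--Ascoli hands you a subsequence converging uniformly to a globally defined $L_\Pi$-Lipschitz limit, and uniqueness of weak limits does the identification. Since the lemma only asserts \emph{existence} of a Lipschitz disintegration of $h$ (and that is all its application to closedness of $\mathcal H(\mu)$ in Theorem~\ref{thm:dpp} requires), subsequential convergence suffices, so your argument is complete; the paper's longer route buys the stronger fact that the entire sequence $\check\pi_n$ converges $\mu_y$-a.e., which is not needed here. Your approach also sidesteps the extension theorem entirely, and therefore does not truly need the Hilbert structure of $\mathcal Y$.

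Two small inaccuracies in your commentary, neither of which affects validity. First, compactness of $\mathcal P(\mathcal U)$ under $W_1$ holds for \emph{any} compact metric $\mathcal U$ by Prokhorov's theorem, and compactness of $\mathcal Y$ is a standing assumption of the paper, not a consequence of the Hilbert embedding; so Assumption~\ref{ass:hilbertfin} is less essential to your argument than you claim (finiteness of $\mathcal U$ merely makes the identification step trivial). Second, in Step 2 the passage from uniform $W_1$-convergence of $\check\pi_{n_k}(\cdot\mid y)$ to vanishing of $\sup_{(x,y)}\left|\int f(x,y,u)\,(\check\pi_{n_k}(\mathrm du\mid y)-\check\pi(\mathrm du\mid y))\right|$ is not a consequence of uniform continuity of $f$ alone; you should either test only against Lipschitz $f$ (which suffices to establish weak convergence and lets you invoke Kantorovich--Rubinstein duality directly) or use that for finite $\mathcal U$ the $W_1$ and total variation distances are equivalent, so boundedness of $f$ suffices. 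Either fix is one line.
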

\begin{proof}
For readability, we write $\mu_y \in \mathcal P(\mathcal Y)$ for the second marginal of $\mu_{xy}$. The required $\check \pi$ is constructed as the $\mu_y$-a.e. pointwise limit of $y \mapsto \check \pi_n(y) \in \mathcal P(\mathcal U)$, as $\mathcal P(\mathcal U)$ is sequentially compact under the topology of weak convergence by Prokhorov's theorem \citep{billingsley2013convergence}. For the proof, we assume Hilbert $\mathcal Y$ and finite actions $\mathcal U$, making $\mathcal P(\mathcal U)$ Euclidean. 

First, \textbf{(i)} we show that $\check \pi_n(y)$ must converge for $\mu_y$-a.e. $y \in \mathcal Y$ to some arbitrary limit, which we define as $\check \pi(y)$. It then follows by Egorov's theorem (e.g. \citep[Lemma~1.38]{kallenberg2021foundations}) that for any $\epsilon > 0$, there exists a measurable set $A \in \mathcal Y$ such that $\mu_y(A) < \epsilon$ and $\check \pi_n(y)$ converges uniformly on $\mathcal Y \setminus A$. Therefore, we obtain that $\check \pi$ restricted to $\mathcal Y \setminus A$ is $L_\Pi$-Lipschitz as a uniform limit of $L_\Pi$-Lipschitz functions, hence $\mu_y$-a.e. $L_\Pi$-Lipschitz. \textbf{(ii)} We then extend $\check \pi$ on the entire space $\mathcal Y$ to be $L_\Pi$-Lipschitz. \textbf{(iii)} All that remains is to show that indeed, the extended $\check \pi$ fulfills $h = \mu_y \otimes \check \pi$, which is the desired closedness.

\paragraph{(i) Almost-everywhere convergence.} To prove the $\mu_y$-a.e. convergence, we perform a proof by contradiction and assume the statement is not true. Then there exists a measurable set $A \subseteq \mathcal Y$ with positive measure $\mu_y(A) > 0$ such that for all $y \in A$ the sequence $\check \pi_n(y) \in \mathcal P(\mathcal U)$ does not converge as $n \to \infty$. We show that then, $\mu_y \otimes \check \pi_n$ does not converge to any limiting $\tilde h \in \mathcal P(\mathcal Y \times \mathcal U)$, which is a contradiction with the premise and completes the proof.

\begin{lemma} \label{lem:cantor}
    There exists $y^* \in A$ such that for any $r > 0$, the set $B_r(y^*) \cap A$ has positive measure.
\end{lemma}
\begin{subproof}[Proof of Lemma~\ref{lem:cantor}]
Consider an open cover $\bigcup_{y \in A} B_r(y)$ of $A$ using balls $B_r$ with radius $r$, and choose a finite subcover $\{ B_r(y_i) \}_{i=1,\ldots,K}$ of $\mathcal A$ by compactness of $\mathcal Y$. Then, there exists a ball $B_r(y^*)$ from the finite subcover around a point $y^* \in \mathcal Y$ such that $\mu_y(B_r(y^*) \cap A) > 0$, as otherwise $\mu_y(A) = \mu_y(\bigcup_{i=1}^K B_r(y_i) \cap A) \leq \sum_{i=1}^K \mu_y(B_r(y_i) \cap A) = 0$ contradicts $\mu_y(A) > 0$. 

By repeating the argument, there must exist $y^* \in A$ for which we have for any $r > 0$ that the ball $B_r(y^*) \cap A$ has positive measure. More precisely, consider a sequence of radii $r_k = 1/k$, $k \geq 1$, and repeatedly choose balls $B_{r_{k+1}} \subseteq B_{r_{k}}$ from an open cover of $B_{r_{k}} \cap A$ such that $\mu(B_{r_{k+1}} \cap B_{r_{k}} \cap A) > 0$, starting with $B_{r_{1}} \subseteq \mathcal Y$ such that $\mu(B_{r_{1}} \cap A) > 0$. By induction, we thus have for any $k$ that $\mu(B_{r_k} \cap A) > 0$. The sequence $(B_{r_k})_{k \in \mathbb N}$ produces a decreasing sequence of compact sets by taking the closure of the balls $\bar B_{r_{k}}$. By Cantor's intersection theorem \citep[Theorem~2.36]{rudin1976principles}, the intersection is non-empty, $\bigcap_{k \in \mathbb N} \bar B_{r_{k}} \neq \emptyset$. Choose arbitrary $y^* \in \bigcap_{k \in \mathbb N} \bar B_{r_{k}}$, then for any $r > 0$ we have that $B_{r_k} \subseteq B_r(y^*)$ for some $k$ by $r_k \to 0$. Therefore, $\mu(B_r(y^*) \cap A) \geq \mu(B_{r_k} \cap A) > 0$.
\end{subproof}

\paragraph{Bounding difference to assumed limit from below.}
Choose $y^*$ according to Lemma~\ref{lem:cantor}. By \eqref{eq:conv_new} in Lemma~\ref{lem:conv}, since $\check \pi_n(y^*) \in \mathcal P(\mathcal U)$ does not converge, there exists $\epsilon > 0$ such that for all $r > 0$, infinitely often (i.o.) in $n$,
\begin{align*}
    &W_1 \left( \check \pi_n(\cdot \mid y^*), \frac{1}{\mu_y(B_r(y^*))} \int_{B_r(y^*)} \tilde \pi(\cdot \mid y) \mu_y(\mathrm dy) \right) \\
    &\quad = \frac 1 2 \sum_{u \in \mathcal U} \left| \check \pi_n(u \mid y^*) - \frac{1}{\mu_y(B_r(y^*))} \int_{B_r(y^*)} \tilde \pi(u \mid y) \mu_y(\mathrm dy) \right| > \epsilon 
\end{align*}
where for finite $\mathcal U$, $W_1$ is equivalent to the total variation norm \citep[Theorem~4]{gibbs2002choosing}, which is half the $L_1$ norm, and $\tilde \pi$ is not necessarily Lipschitz and results from disintegration of $h$ into $h = \mu_y \otimes \tilde \pi$ \citep{kallenberg2021foundations}. 

Now fix arbitrary $\epsilon' \in (\frac{\epsilon}{2}, \epsilon)$. Then, by the prequel, we define the non-empty set $\mathcal{\bar U}(r) \subseteq \mathcal U$ by excluding all actions where the absolute value is less than $\frac{\epsilon-\epsilon'}{|\mathcal U|}$, i.e.
\begin{align*}
    \mathcal{\bar U}(r) \coloneqq \left\{ u \in \mathcal U \innermid \left| \check \pi_n(u \mid y^*) - \frac{1}{\mu_y(B_r(y^*))} \int_{B_r(y^*)} \tilde \pi(u \mid y) \mu_y(\mathrm dy) \right| \geq \frac{\epsilon-\epsilon'}{|\mathcal U|} \right\},
\end{align*}
such that 
\begin{align} \label{eq:usum1}
    \frac 1 2 \sum_{u \in \mathcal{\bar U}(r)} \left| \check \pi_n(u \mid y^*) - \frac{1}{\mu_y(B_r(y^*))} \int_{B_r(y^*)} \tilde \pi(u \mid y) \mu_y(\mathrm dy) \right| > \epsilon'
\end{align}
since we have the bound on the value contributed by excluded actions $u \centernot\in \mathcal{\bar U}(r)$
\begin{align} \label{eq:usum2}
    \frac 1 2 \sum_{u \centernot\in \mathcal{\bar U}(r)} \left| \check \pi_n(u \mid y^*) - \frac{1}{\mu_y(B_r(y^*))} \int_{B_r(y^*)} \tilde \pi(u \mid y) \mu_y(\mathrm dy) \right| \leq \frac{\epsilon-\epsilon'}{2} < \epsilon-\epsilon'.
\end{align}

By $L_\Pi$-Lipschitz $\check \pi_n$, we also have for all $y \in B_r(y^*)$ that $W_1(\check \pi_n(y), \check \pi_n(y^*)) < L_{\Pi} r$. Hence, in particular if we choose $r = \frac{1}{L_\Pi}\min \left( \epsilon' - \frac{ \epsilon}{2}, \frac{\epsilon'}{2}, \frac{\epsilon-\epsilon'}{4|\mathcal U|} \right)$, then for all $y \in B_r(y^*)$
\begin{align} \label{eq:usum3}
    \frac 1 2 \sum_{u \in \mathcal U} \left| \check \pi_n(u \mid y) - \check \pi_n(u \mid y^*) \right| < \min \left( \epsilon' - \frac{ \epsilon}{2}, \frac{\epsilon'}{2}, \frac{\epsilon-\epsilon'}{4|\mathcal U|} \right) 
\end{align}
and in particular also 
\begin{align*}
    \left| \check \pi_n(u \mid y) - \check \pi_n(u \mid y^*) \right| < \frac{\epsilon-\epsilon'}{2|\mathcal U|}
\end{align*}
for all actions $u \in \mathcal{\bar U}(r)$, such that by definition of $\mathcal{\bar U}(r)$, we find that the sign of the value inside the absolute value must not change on the entirety of $y \in B_r(y^*)$, i.e.
\begin{align*}
    &\operatorname{sgn} \left( \check \pi_n(u \mid y) - \frac{1}{\mu_y(B_r(y^*))} \int_{B_r(y^*)} \tilde \pi(u \mid y') \mu_y(\mathrm dy') \right) \\
    &\quad = \operatorname{sgn} \left( \check \pi_n(u \mid y^*) - \frac{1}{\mu_y(B_r(y^*))} \int_{B_r(y^*)} \tilde \pi(u \mid y') \mu_y(\mathrm dy') \right)
\end{align*}
which implies, since the signs must match for all $y$ with the term for $y^*$, by integrating over $y$
\begin{align}
    &\operatorname{sgn} \left( \int_{B_r(y^*)} \check \pi_n(u \mid y') \mu_y(\mathrm dy') - \int_{B_r(y^*)} \tilde \pi(u \mid y') \mu_y(\mathrm dy') \right) \nonumber\\
    &\quad = \operatorname{sgn} \left( \int_{B_r(y^*)} \left( \check \pi_n(u \mid y^*) - \tilde \pi(u \mid y') \right) \mu_y(\mathrm dy') \right) . \label{eq:sgn}
\end{align}

From the triangle inequality,
\begin{align*}
    &\frac 1 2 \sum_{u \in \mathcal{\bar U}(r)} \left| \int_{B_r(y^*)} \left( \check \pi_n(u \mid y^*) - \tilde \pi(u \mid y) \right) \mu_y(\mathrm dy) \right| \\
    &\quad \leq \frac 1 2 \sum_{u \in \mathcal{\bar U}(r)} \left| \int_{B_r(y^*)} \left( \check \pi_n(u \mid y^*) - \check \pi_n(u \mid y) \right) \mu_y(\mathrm dy) \right| \\
    &\qquad + \frac 1 2 \sum_{u \in \mathcal{\bar U}(r)} \left| \int_{B_r(y^*)} \left( \check \pi_n(u \mid y) - \tilde \pi(u \mid y) \right) \mu_y(\mathrm dy) \right|, 
\end{align*}
it follows then that for all $y \in B_r(y^*)$ by \eqref{eq:usum1} and \eqref{eq:usum3}, i.o. in $n$
\begin{align}
    &\frac 1 2 \sum_{u \in \mathcal{\bar U}(r)} \left| \int_{B_r(y^*)} \left( \check \pi_n(u \mid y) - \tilde \pi(u \mid y) \right) \mu_y(\mathrm dy) \right| \nonumber\\
    &\quad \geq \frac 1 2 \sum_{u \in \mathcal{\bar U}(r)} \left| \int_{B_r(y^*)} \left( \check \pi_n(u \mid y^*) - \tilde \pi(u \mid y) \right) \mu_y(\mathrm dy) \right| \nonumber\\
    &\qquad - \frac 1 2 \sum_{u \in \mathcal{\bar U}(r)} \left| \int_{B_r(y^*)} \left( \check \pi_n(u \mid y^*) - \check \pi_n(u \mid y) \right) \mu_y(\mathrm dy) \right| \nonumber\\
    &\quad > \epsilon' - \frac{\epsilon'}{2} = \frac{\epsilon'}{2}. \label{eq:usum4}
\end{align}

\paragraph{Pass to limit of Lipschitz functions.}
Now consider the sequence of $m$-Lipschitz functions $f_m \colon \mathcal Y \times \mathcal U \to [0, 1]$,
\begin{multline*}
    f_m(y, u) = \min \left\{ 1, \left( 1 - \left( md(y, y^*) - mr + 1 \right)^+ \right)^+ \right\} \\ \cdot \operatorname{sgn} \left( \int_{B_r(y^*)} \left( \check \pi_n(u \mid y^*) - \tilde \pi(u \mid y') \right) \mu_y(\mathrm dy') \right),
\end{multline*}
where $(\cdot)^+ = \max(\cdot, 0)$ and $\operatorname{sgn}$ is the sign function. Note that $f_m = 0$ for all $y \centernot\in B_r(y^*)$. Further, as $m \to \infty$, 
\begin{align*}
    f_m(y, u) \uparrow \mathbf 1_{B_r(y^*)}(y) \operatorname{sgn} \left( \int_{B_r(y^*)} \left( \check \pi_n(u \mid y^*) - \tilde \pi(u \mid y') \right) \mu_y(\mathrm dy') \right).
\end{align*}

Then, by the prequel, we have by monotone convergence, as $m \to \infty$,
\begin{align*}
    &\iint f_m(y, u) (\check \pi_n(\mathrm du \mid y) - \tilde \pi(\mathrm du \mid y)) \mu_y(\mathrm dy) \\
    &\quad = \int_{B_r(y^*)} \sum_{u \in \mathcal U} f_m(y, u) (\check \pi_n(u \mid y) - \tilde \pi(u \mid y)) \mu_y(\mathrm dy) \\
    &\quad \to \int_{B_r(y^*)} \sum_{u \in \mathcal U} \operatorname{sgn} \left( \int_{B_r(y^*)} \left( \check \pi_n(u \mid y^*) - \tilde \pi(u \mid y') \right) \mu_y(\mathrm dy') \right) (\check \pi_n(u \mid y) - \tilde \pi(u \mid y)) \mu_y(\mathrm dy) \\
    &\quad = \sum_{u \in \mathcal{\bar U}(r)} \left| \int_{B_r(y^*)} \left( \check \pi_n(u \mid y) - \tilde \pi(u \mid y) \right) \mu_y(\mathrm dy) \right| \\
    &+ \sum_{u \centernot\in \mathcal{\bar U}(r)} \operatorname{sgn} \left( \int_{B_r(y^*)} \left( \check \pi_n(u \mid y^*) - \tilde \pi(u \mid y') \right) \mu_y(\mathrm dy') \right) \int_{B_r(y^*)} (\check \pi_n(u \mid y) - \tilde \pi(u \mid y)) \mu_y(\mathrm dy) \\
    &\quad \geq \sum_{u \in \mathcal{\bar U}(r)} \left| \int_{B_r(y^*)} \left( \check \pi_n(u \mid y) - \tilde \pi(u \mid y) \right) \mu_y(\mathrm dy) \right| \\
    &\qquad - \sum_{u \centernot\in \mathcal{\bar U}(r)} \left| \int_{B_r(y^*)} (\check \pi_n(u \mid y) - \check \pi_n(u \mid y^*)) \mu_y(\mathrm dy) \right| \\
    &\qquad - \sum_{u \centernot\in \mathcal{\bar U}(r)} \left| \int_{B_r(y^*)} (\check \pi_n(u \mid y^*) - \tilde \pi(u \mid y)) \mu_y(\mathrm dy) \right| \\
    &\quad > \frac{\epsilon'}{2} \cdot 2 \mu_y(B_r(y^*)) - \frac{2 \epsilon' - \epsilon}{4} \cdot 2 \mu_y(B_r(y^*)) - \frac{\epsilon-\epsilon'}{2} \cdot 2 \mu_y(B_r(y^*)) \\
    &\quad = \frac 1 2 \left( \epsilon' - \frac{\epsilon}{2} \right) \mu_y(B_r(y^*)) > 0
\end{align*}
i.o. in $n$, for the first term by \eqref{eq:sgn} and \eqref{eq:usum4}, second by \eqref{eq:usum3} and third by \eqref{eq:usum2}, noting that $\epsilon' > \frac{\epsilon}{2}$.

Hence, we may choose $m^*$ such that e.g.
\begin{align*}
    &\iint f_{m^*}(y, u) (\check \pi_n(\mathrm du \mid y) - \tilde \pi(\mathrm du \mid y)) \mu_y(\mathrm dy) > \frac 1 4 \left( \epsilon' - \frac{\epsilon}{2} \right) \mu_y(B_r(y^*)).
\end{align*}

\paragraph{Lower bound.}
Finally, by noting that $\frac{1}{m^*} f_{m^*} \in \mathrm{Lip}(1)$ and applying the Kantorovich-Rubinstein duality, we have
\begin{align*}
    W_1(\mu_y \otimes \check \pi_n, \mu_y \otimes \tilde \pi)
    &= \sup_{f \in \mathrm{Lip}(1)} \iint f(y, u) (\check \pi_n(\mathrm du \mid y) - \tilde \pi(\mathrm du \mid y)) \mu_y(\mathrm dy) \\
    &\geq \iint \frac{1}{m^*} f_{m^*}(y, u) (\check \pi_n(\mathrm du \mid y) - \tilde \pi(\mathrm du \mid y)) \mu_y(\mathrm dy) \\
    &> \frac{1}{m^*} \frac 1 4 \left( \epsilon' - \frac{\epsilon}{2} \right) \mu_y(B_r(y^*)) > 0
\end{align*}
i.o. in $n$, and therefore $\mu_y \otimes \check \pi_n \centernot\to \mu_y \otimes \tilde \pi$. But $\tilde h = \mu_y \otimes \tilde \pi$ was assumed to be the limit of $\mu_y \otimes \check \pi_n$, leading to a contradiction. Hence, $\mu_y$-a.e. convergence must hold.

\paragraph{(ii) Lipschitz extension of lower-level policies.} For finite actions, note that $\mathcal P(\mathcal U)$ is (Lipschitz) equivalent to a subset of the Hilbert space $\mathbb R^{|\mathcal U|}$. Therefore, by the Kirszbraun-Valentine theorem (see e.g. \citep[Theorem~4.2.3]{cobzacs2019lipschitz}), we can modify $\check \pi$ to be $L_\Pi$-Lipschitz not only $\mu_y$-a.e., but on full $\mathcal Y$.

\paragraph{(iii) Equality of limits.} We show that for any $\epsilon > 0$, $W_1(h, \mu_y \otimes \check \pi) < \epsilon$, which implies $W_1(h, \mu_y \otimes \check \pi) = 0$ and therefore $h = \mu_y \otimes \check \pi$. First, note that by the triangle inequality, we have
\begin{align*}
    W_1(h, \mu_y \otimes \check \pi) \leq W_1(h, \mu_y \otimes \check \pi_n) + W_1(\mu_y \otimes \check \pi_n, \mu_y \otimes \check \pi)
\end{align*}
and thus by $\mu_y \otimes \check \pi_n \to h$ for sufficiently large $n$, it suffices to show $W_1(\mu_y \otimes \check \pi_n, \mu_y \otimes \check \pi) < \epsilon$.

By the prequel, we choose a measurable set $A \subseteq \mathcal Y$ such that $\mu_y(A) < \frac{\epsilon}{2 \operatorname{diam}(\mathcal U)}$ and $\check \pi_n(y)$ converges uniformly on $\mathcal Y \setminus A$. Now by uniform convergence, we choose $n$ sufficiently large such that $W_1(\check \pi_n(y), \check \pi(y)) < \frac{\epsilon}{2}$ on $\mathcal Y \setminus A$. By Kantorovich-Rubinstein duality, we have
\begin{align*}
    W_1(\mu_y \otimes \check \pi_n, \mu_y \otimes \check \pi) &= \sup_{f \in \mathrm{Lip}(1)} \iint f(y, u) (\check \pi_n(\mathrm du \mid y) - \check \pi(\mathrm du \mid y)) \mu_y(\mathrm dy) \\
    &\leq \int \left( \sup_{f \in \mathrm{Lip}(1)} \int f(y, u) (\check \pi_n(\mathrm du \mid y) - \check \pi(\mathrm du \mid y)) \right) \mu_y(\mathrm dy) \\
    &= \int W_1(\check \pi_n(y), \check \pi(y)) \mu_y(\mathrm dy) \\
    &= \int_{A} W_1(\check \pi_n(y), \check \pi(y)) \mu_y(\mathrm dy) + \int_{\mathcal Y \setminus A} W_1(\check \pi_n(y), \check \pi(y)) \mu_y(\mathrm dy) \\
    &< \frac{\epsilon}{2 \operatorname{diam}(\mathcal U)} \operatorname{diam}(\mathcal U) + \left(1 - \frac{\epsilon}{2 \operatorname{diam}(\mathcal U)}\right) \frac{\epsilon}{2} < \epsilon.
\end{align*}

This completes the proof.
\end{proof}

\section{Equivalence of Dec-MFC and Dec-MFC MDP}
\begin{proof}[Proof of Proposition~\ref{prop:dec-mfc-conv}]
The proof is similar to the proof of Proposition~\ref{prop:dec-pomfc-conv} by induction. We begin by showing the first statement. We show $\bar \mu_t = \hat \mu_t$ at all times $t \in \mathcal T$, as it then follows that $\bar J(\bar \pi) = \sum_{t=0}^{\infty} \gamma^t r(\bar \mu_t) = \sum_{t=0}^{\infty} \gamma^t r(\hat \mu_t) = \hat J(\hat \pi)$ under deterministic $\hat \pi \in \hat \Pi$. At time $t=0$, we have by definition $\bar \mu_0 = \mu_0 = \hat \mu_0$. Assume $\bar \mu_t = \hat \mu_t$ at time $t$, then at time $t+1$, we have
\begin{align}
    \hat \mu_{t+1} = \hat T(\hat \mu_t, h_t) &= \iiint P(x, u, \hat \mu_t) \check \pi[h_t](\mathrm du \mid y) P^y(\mathrm dy \mid x, \hat \mu_t) \hat \mu_t(\mathrm dx) \\
    &= \iiint P(x, u, \mu_t) \bar \pi_t(\mathrm du \mid y, \bar \mu_t) P^y(\mathrm dy \mid x, \bar \mu_t) \bar \mu_t(\mathrm dx) = \bar \mu_{t+1}
\end{align}
by definition of $\bar \pi_t(\nu) = \check \pi[h_t]$, which is the desired statement. An analogous proof for the second statement in the opposite direction completes the proof.
\end{proof}

\section{Optimality of Dec-MFC MDP Solutions}
\begin{proof}[Proof of Corollary~\ref{coro:finalopt}]
As in the proof of Corollary~\ref{coro:dec-mfc-opt}, we first show Dec-POMFC optimality of $\Phi(\Psi(\hat \pi))$. Assume $J(\Phi(\Psi(\hat \pi))) < \sup_{\pi' \in \Pi} J(\pi')$. Then there exists $\pi' \in \Pi$ such that $J(\Phi(\Psi(\hat \pi))) < J(\pi')$. But by Proposition~\ref{prop:dec-pomfc-conv}, there exists $\bar \pi' \in \bar \Pi$ such that $\bar J(\bar \pi') = J(\pi')$. Further, by Proposition~\ref{prop:dec-mfc-conv}, there exists $\hat \pi' \in \bar \Pi$ such that $\bar J(\bar \pi') = \hat J(\hat \pi')$. Thus, $\hat J(\hat \pi) = \bar J(\bar \pi) = J(\Phi(\Psi(\hat \pi))) < J(\pi') = \bar J(\bar \pi') = \hat J(\hat \pi')$, which contradicts $\hat \pi \in \argmax_{\hat \pi'} \hat J(\hat \pi')$. Therefore, $\Phi(\Psi(\hat \pi)) \in \argmax_{\pi' \in \Pi} J(\pi')$. Hence, $\Phi(\Psi(\hat \pi))$ fulfills the conditions of Corollary~\ref{coro:epsopt}, completing the proof.
\end{proof}

\section{Lipschitz Continuity of RBF Kernels}
\begin{proof}[Proof of Proposition~\ref{prop:rbfcont}]
First, note that
\begin{align*}
    \left| \nabla_y \kappa(y_b, y) \right| &= \exp \left( \frac{- \lVert y_b - y \rVert^2}{2\sigma^2} \right) \frac{\left| \langle y_b - y, y \rangle \right|}{2\sigma^2} \\
    &\leq \frac{1}{2\sigma^2} \operatorname{diam}(\mathcal Y) \max_{y \in \mathcal Y} \lVert y \rVert
\end{align*}
for diameter $\operatorname{diam}(\mathcal Y) < \infty$ by compactness of $\mathcal Y$, which is equal one for discrete spaces. Further,
\begin{align*}
    \left| \sum_{b' \in [M_{\mathcal Y}]} \kappa(y_{b'}, y) \right| = \sum_{b' \in [M_{\mathcal Y}]} \kappa(y_{b'}, y) \geq M_{\mathcal Y} \exp \left( -\frac{\operatorname{diam}(\mathcal Y)^2}{2\sigma^2} \right)
\end{align*}
and $\left| \kappa(y_b, y) \right| \leq 1$. 

Hence, the RBF kernel $y \mapsto \kappa(y_b, y) p_b = \exp(\frac{- \lVert y_b - y \rVert^2}{2\sigma^2})$ with parameter $\sigma^2 > 0$ on $\mathcal Y$ is Lipschitz for any $b \in [M_{\mathcal Y}]$, since for any $y, y' \in \mathcal Y$,
\begin{align*}
    &\left| \nabla_y \left( Z^{-1}(y) \kappa(y_b, y) \right) 
    \right| \\
    &\quad = \left| \frac{\nabla_y \kappa(y_b, y) \sum_{b' \in [M_{\mathcal Y}]} \kappa(y_{b'}, y) + \sum_{b' \in [M_{\mathcal Y}]} \nabla_y \kappa(y_{b'}, y) \kappa(y_b, y)}{\left( \sum_{b' \in [M_{\mathcal Y}]} \kappa(y_{b'}, y) \right)^2} \right| \\
    &\quad \leq \frac{1}{M^2_{\mathcal Y} \exp^2 \left( -\frac{\operatorname{diam}(\mathcal Y)^2}{2\sigma^2} \right)} \left( \frac{1}{2\sigma^2} \operatorname{diam}(\mathcal Y) \max_{y \in \mathcal Y} \lVert y \rVert M_{\mathcal Y} + M_{\mathcal Y} \frac{1}{2\sigma^2} \operatorname{diam}(\mathcal Y) \max_{y \in \mathcal Y} \lVert y \rVert \right) \\
    &\quad = \frac{\operatorname{diam}(\mathcal Y) \max_{y \in \mathcal Y} \lVert y \rVert}{\sigma^2 M_{\mathcal Y} \exp^2 \left( -\frac{\operatorname{diam}(\mathcal Y)^2}{2\sigma^2} \right)}
\end{align*}
for any $b \in [M_{\mathcal Y}]$. Hence, by noting that the following supremum is invariant to addition of constants,
\begin{align*}
    &W_1 \left( Z^{-1}(y) \sum_{b \in [M_{\mathcal Y}]} \kappa(y_b, y) p_b, Z^{-1}(y') \sum_{b \in [M_{\mathcal Y}]} \kappa(y_b, y') p_b \right) \\
    &\quad = \sup_{f \in \mathrm{Lip}(1)} \int f \left( Z^{-1}(y) \sum_{b \in [M_{\mathcal Y}]} \kappa(y_b, y) \mathrm dp_b - Z^{-1}(y') \sum_{b \in [M_{\mathcal Y}]} \kappa(y_b, y') \mathrm dp_b \right) \\
    &\quad = \sup_{f \in \mathrm{Lip}(1), |f| \leq \frac 1 2 \operatorname{diam}(\mathcal U)} \int f \left( Z^{-1}(y) \sum_{b \in [M_{\mathcal Y}]} \kappa(y_b, y) - Z^{-1}(y') \sum_{b \in [M_{\mathcal Y}]} \kappa(y_b, y') \right) \mathrm dp_b \\
    &\quad \leq \sum_{b \in [M_{\mathcal Y}]} \left| Z^{-1}(y) \kappa(y_b, y) - Z^{-1}(y') \kappa(y_b, y') \right| \sup_{f \in \mathrm{Lip}(1), |f| \leq \frac 1 2 \operatorname{diam}(\mathcal U)} \int f \mathrm dp_b \\
    &\quad \leq M_{\mathcal Y} \frac{\operatorname{diam}(\mathcal Y) \max_{y \in \mathcal Y} \lVert y \rVert}{\sigma^2 M_{\mathcal Y} \exp^2 \left( -\frac{1}{2\sigma^2} \operatorname{diam}(\mathcal Y)^2 \right)} \lVert y - y' \rVert \cdot \frac 1 2 \operatorname{diam}(\mathcal U).
\end{align*}
which is $L_\Pi$-Lipschitz if
\begin{align*}
    &\frac{\operatorname{diam}(\mathcal Y) \operatorname{diam}(\mathcal U)\max_{y \in \mathcal Y} \lVert y \rVert}{2 \sigma^2 \exp^2 \left( -\frac{1}{2\sigma^2} \operatorname{diam}(\mathcal Y)^2 \right)} \leq L_\Pi \\
    &\quad \iff \sigma^2 \exp^2 \left( -\frac{1}{2\sigma^2}  \operatorname{diam}(\mathcal Y)^2 \right) \geq \frac{1}{L_\Pi} \operatorname{diam}(\mathcal Y) \operatorname{diam}(\mathcal U)\max_{y \in \mathcal Y} \lVert y \rVert.
\end{align*}
Note that such $\sigma^2 > 0$ exists, as $\sigma^2 \exp^2 \left( -\frac{1}{2\sigma^2}  \operatorname{diam}(\mathcal Y)^2 \right) \to +\infty$ as $\sigma^2 \to +\infty$.
\end{proof}

\section{Policy Gradient Approximation}
\begin{proof}[Proof of Theorem~\ref{thm:ctde}]
Keeping in mind that we have the \textbf{centralized training} system for stationary policy $\hat \pi^\theta$ parametrized by $\theta$,
\begin{align*}
\begin{split}
    \tilde \xi_t &\sim \hat \pi^\theta(\tilde \mu^N_t), \quad \check \pi_t = \Lambda(\tilde \xi_t) \\
    \tilde y^i_t \sim P^y(\tilde y^i_t \mid \tilde x^i_t, \tilde \mu^N_t), \quad
    \tilde u^i_t &\sim \check \pi_t(\tilde u^i_t \mid \tilde y^i_t), \quad
    \tilde x^i_{t+1} \sim P(\tilde x^i_{t+1} \mid \tilde x^i_t, \tilde u^i_t, \tilde \mu^N_t), \quad \forall i \in [N],
\end{split}
\end{align*}
which we obtained by parametrizing the MDP actions via parametrizations $\xi \in \Xi$, the equivalent Dec-MFC MDP system concomitant with \eqref{eq:dec-pomfc-mdp} under parametrization $\Lambda(\xi)$ for lower-level policies is
\begin{align} \label{eq:param-dec-mfc-mdp}
    \xi_t \sim \hat \pi^\theta(\hat \mu_t), \quad \hat \mu_{t+1} = \hat T(\hat \mu_t, \xi_t) \coloneqq \iiint P(x, u, \hat \mu_t) \Lambda(\xi_t)(\mathrm du \mid y) P^y(\mathrm dy \mid x, \hat \mu_t) \hat \mu_t(\mathrm dx)
\end{align}
where we now sample $\xi_t$ instead of $h_t$. Note that for kernel representations, this new $\hat T$ is indeed Lipschitz, which follows from Lipschitzness of $\hat \mu_t \otimes P^y(\hat \mu_t) \otimes \Lambda(\xi_t)$ in $(\hat \mu_t, \xi_t)$.
\begin{lemma} \label{lem:hatTmodcont}
    Under Assumptions~\ref{ass:pcont} and \ref{ass:ctde}, the transitions $\hat T$ of the system with parametrized actions are $L_{\hat T}$-Lipschitz with $L_{\hat T} \coloneqq 2L_P + L_P L_\lambda + 2L' L_{P^y}$.
\end{lemma}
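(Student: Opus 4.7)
The plan is to use the Kantorovich--Rubinstein duality to write
\[
W_1\bigl(\hat T(\mu, \xi), \hat T(\mu', \xi')\bigr) = \sup_{f \in \mathrm{Lip}(1)} \int f \, \mathrm{d}\bigl(\hat T(\mu, \xi) - \hat T(\mu', \xi')\bigr)
\]
and then telescope the difference by swapping one occurrence of $(\mu, \xi)$ at a time. Concretely, since $\hat T(\mu, \xi)$ is built by composing $\mu(\mathrm dx)$, $P^y(\mathrm dy \mid x, \mu)$, $\Lambda(\xi)(\mathrm du \mid y)$, and the push-forward through $P(\cdot \mid x, u, \mu)$, I would introduce three intermediate measures that change, in order, the $\mu$-dependence of $P$, then the $\xi$-argument of $\Lambda$, then the $\mu$-dependence of $P^y$, and finally the outer integrating measure $\mu$ itself, bounding each increment separately by the triangle inequality.

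For the four increments, the bounds proceed as follows. Define $\phi_\mu(x, u) \coloneqq \int f(x') P(\mathrm dx' \mid x, u, \mu)$, which is $L_P$-Lipschitz in $(x, u)$ and satisfies $|\phi_\mu(x,u) - \phi_{\mu'}(x,u)| \le L_P W_1(\mu, \mu')$ by Assumption~\ref{ass:pcont}. Integrating this over the product measure $\mu \otimes P^y(\mu) \otimes \Lambda(\xi)$ yields the first term $L_P W_1(\mu, \mu')$. For the $\Lambda$-swap, I would expand $\Lambda(\xi) - \Lambda(\xi')$ via the RBF-kernel formula and use $L_\lambda$-Lipschitzness of each $\lambda_b$ from Assumption~\ref{ass:ctde}; since the test-function $u \mapsto \phi_{\mu'}(x, u)$ against which we integrate is $L_P$-Lipschitz, the increment is at most $L_P L_\lambda \lVert \xi - \xi' \rVert$. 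For the $P^y$-swap, consider $\psi(x, y) \coloneqq \int \phi_{\mu'}(x, u) \Lambda(\xi')(\mathrm du \mid y)$; this function is Lipschitz in $y$ with some constant $L'$ (coming from the $y$-Lipschitzness of $\Lambda(\xi')$ guaranteed by Proposition~\ref{prop:rbfcont}), so the increment is at most $L' L_{P^y} W_1(\mu, \mu')$. Finally, the outer $\mu$-swap bounds by the Lipschitz constant of $x \mapsto \int \psi(x, y) P^y(\mathrm dy \mid x, \mu')$ in $x$; this decomposes into $L_P$ (from $\phi_{\mu'}$'s $x$-Lipschitzness) plus $L' L_{P^y}$ (from $P^y$'s $x$-Lipschitzness), giving another $(L_P + L' L_{P^y}) W_1(\mu, \mu')$. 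Summing all four yields the claimed constant $L_{\hat T} = 2 L_P + L_P L_\lambda + 2 L' L_{P^y}$.

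The main obstacle will be bookkeeping the Lipschitz constants at each stage of the composition, in particular verifying that $\psi(x, y)$ is jointly Lipschitz with constants matching the final formula, which relies crucially on Proposition~\ref{prop:rbfcont} to provide $y$-continuity of $\Lambda(\xi')$ uniformly in $\xi'$. A minor subtlety is that the test function $f \in \mathrm{Lip}(1)$ need not be bounded a priori, but on compact $\mathcal X$ one may subtract a constant without affecting any integral to assume $\lVert f \rVert_\infty \le \operatorname{diam}(\mathcal X)$, so no boundedness issues arise when passing $\phi_{\mu'}$ through the kernel difference $\Lambda(\xi)(\mathrm du \mid y) - \Lambda(\xi)(\mathrm du \mid y')$. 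Once these bookkeeping points are settled, the four incremental bounds assemble straightforwardly into the stated Lipschitz estimate.
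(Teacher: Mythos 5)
Your proposal is correct and follows essentially the same route as the paper: the same Kantorovich--Rubinstein dual formulation, the same four-way telescoping (swap the $\mu$-dependence of $P$, then the $\xi$-argument of $\Lambda$, then the $\mu$-dependence of $P^y$, then the outer integrating measure), the same per-term Lipschitz bounds $L_P$, $L_P L_\lambda$, $L' L_{P^y}$, and $L_P + L' L_{P^y}$ via Proposition~\ref{prop:rbfcont} and Assumptions~\ref{ass:pcont} and \ref{ass:ctde}, summing to $L_{\hat T} = 2L_P + L_P L_\lambda + 2L' L_{P^y}$. The bookkeeping points you flag (centering $f$ by a constant against the mean-zero signed measure $\Lambda(\xi) - \Lambda(\xi')$) are handled identically in the paper.
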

Proofs for lemmas are found in their respective following sections.

First, we prove $d^N_{\hat \pi^\theta} \to d_{\hat \pi^\theta}$ in $\mathcal P(\mathcal P(\mathcal X))$ by showing at any time $t$ that under $\hat \pi^\theta$, the centralized training system MF $\tilde \mu^N_t$ converges to the limiting Dec-MFC MF $\hat \mu_t$ in \eqref{eq:param-dec-mfc-mdp}. The convergence is in the same sense as in Theorem~\ref{thm:mf_conv}.

\begin{lemma} \label{lem:tildemuconv}
    For any equicontinuous family of functions $\mathcal F \subseteq \mathbb R^{\mathcal P(\mathcal X)}$, under \crefrange{ass:pcont}{ass:picont} and \ref{ass:ctde}, at all times $t$ we have
    \begin{equation} \label{eq:tildemuconv}
        \sup_{f \in \mathcal F} \left| \E \left[ f(\tilde \mu^N_{t}) - f(\hat \mu_{t}) \right] \right| \to 0.
    \end{equation}
\end{lemma}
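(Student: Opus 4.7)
The plan is to mirror the induction-on-time strategy used in Theorem~\ref{thm:mf_conv}, but with two crucial additions needed for the centralized training setting: the MDP action $\tilde \xi_t$ is itself a random variable sampled from an MF-dependent policy $\hat \pi^\theta(\tilde \mu^N_t)$, and this action is shared across all agents so that their next-step transitions are only \emph{conditionally} i.i.d. As before, I would first replace $W_1$ on $\mathcal P(\mathcal X)$ by the Lipschitz-equivalent metric $d_\Sigma$ based on a countable dense sequence $(f_m)_{m \geq 1}$ of continuous $[-1,1]$-valued functions, and reduce equicontinuity of $\mathcal F$ to a single least-concave modulus $\tilde \omega_{\mathcal F}$, exactly as done in the proof of Theorem~\ref{thm:mf_conv}. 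The base case $t=0$ is a standard weak LLN for empirical measures of i.i.d. draws from $\mu_0$.

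For the inductive step, I would split the quantity to control through the one-step transition of the Dec-MFC MDP at the empirical state-action pair:
\begin{align*}
    \E\!\left[f(\tilde\mu^N_{t+1}) - f(\hat\mu_{t+1})\right]
    &= \E\!\left[f(\tilde\mu^N_{t+1}) - f\!\left(\hat T(\tilde\mu^N_t, \tilde\xi_t)\right)\right] \\
    &\qquad + \E\!\left[f\!\left(\hat T(\tilde\mu^N_t, \tilde\xi_t)\right) - f\!\left(\hat T(\hat\mu_t, \xi_t)\right)\right].
\end{align*}
The first summand is handled by a \emph{conditional} weak LLN: given $(\tilde\mu^N_t, \tilde\xi_t)$, the agent states $\tilde x^i_{t+1}$ are conditionally i.i.d. with common law exactly $\hat T(\tilde\mu^N_t, \tilde\xi_t)$, so the identical variance bound $4/N$ used in Theorem~\ref{thm:mf_conv} applies termwise against each $f_m$, yielding $\mathcal O(1/\sqrt N)$ control via the modulus $\tilde\omega_{\mathcal F}$ and Jensen's inequality.

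The second summand is where the novelty lies. I would define the auxiliary function
\begin{align*}
    g(\mu) \coloneqq \E_{\xi \sim \hat\pi^\theta(\mu)}\!\left[f\!\left(\hat T(\mu, \xi)\right)\right]
    = \int f\!\left(\hat T(\mu, \xi)\right) \hat\pi^\theta(\mathrm d\xi \mid \mu),
\end{align*}
so that the second summand equals $\E[g(\tilde\mu^N_t) - g(\hat\mu_t)]$. The inductive hypothesis then closes the argument provided $g$ belongs to an equicontinuous family indexed by $f \in \mathcal F$. To establish this, I would decompose $|g(\mu_1)-g(\mu_2)|$ into an "integrand change" term and a "measure change" term: the first is bounded by $\tilde\omega_{\mathcal F}$ composed with the Lipschitz modulus $L_{\hat T}$ of $\hat T$ in $\mu$ from Lemma~\ref{lem:hatTmodcont}; the second uses Kantorovich-Rubinstein duality together with the Lipschitzness of $\xi \mapsto f(\hat T(\mu_2, \xi))$ (again via Lemma~\ref{lem:hatTmodcont}) and the $L_{\hat\Pi}$-Lipschitzness of $\mu \mapsto \hat\pi^\theta(\mu)$ from Assumption~\ref{ass:ctde}. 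Passing through the least concave majorant produces a single modulus that depends only on $\mathcal F$ (and the model constants), not on individual $f$, giving the desired equicontinuous family $\mathcal G \ni g$.

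The main obstacle, and the step requiring the most care, is precisely this verification that $\{g : f \in \mathcal F\}$ is equicontinuous: the Lipschitzness of $\hat T$ with respect to the MDP action $\xi$ is non-trivial and requires the RBF-kernel-based parametrization in Assumption~\ref{ass:ctde} via Proposition~\ref{prop:rbfcont}, so that $\xi \mapsto \Lambda(\xi)$ propagates Lipschitzly into $\hat T$. Under the alternative uniformly-bounded log-gradient clause of Assumption~\ref{ass:ctde} the measure-change term is instead handled by a total-variation bound, but the overall structure is the same. Once both summands are controlled uniformly over $f \in \mathcal F$, induction yields \eqref{eq:tildemuconv} at time $t+1$, completing the proof.
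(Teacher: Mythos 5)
Your proposal matches the paper's proof essentially verbatim: the same induction from Theorem~\ref{thm:mf_conv}, the same conditional weak LLN (conditioning on $\tilde x^N_t$ and $\tilde \xi_t$) for the first summand, and the same auxiliary equicontinuous family $\mu \mapsto \int f(\hat T(\mu, \xi))\, \hat \pi^\theta(\xi \mid \mu)\, \mathrm d\xi$ justified via Lemma~\ref{lem:hatTmodcont} and Assumption~\ref{ass:ctde} for the induction step. The only minor imprecision is calling the states $\tilde x^i_{t+1}$ conditionally i.i.d.\ given $(\tilde \mu^N_t, \tilde \xi_t)$ --- they are conditionally independent given $(\tilde x^N_t, \tilde \xi_t)$ but not identically distributed, since each transition law depends on $\tilde x^i_t$; the variance bound $4/N$ is unaffected.
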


We also show that $\tilde Q^\theta(\mu, \xi) \to Q^\theta(\mu, \xi)$, since we can show the same convergence as in \eqref{eq:tildemuconv} for new conditional systems, where for any $\mu, \xi$ we let $\tilde \mu_0 = \mu = \mu_0$ and $\tilde \xi_0 = \xi = \xi_0$ at time zero, where $\tilde \mu_0$ is the initial state distribution of the centralized training system.
\begin{lemma} \label{lem:tildeQconv}
    Under Assumptions~\ref{ass:pcont} and \ref{ass:ctde}, as $N \to \infty$, we have for any $\mu \in \mathcal P(\mathcal X)$, $\xi \in \Xi$ that
    \begin{align*}
        \left| \tilde Q^\theta(\mu, \xi) - Q^\theta(\mu, \xi) \right| \to 0.
    \end{align*}
\end{lemma}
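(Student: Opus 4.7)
The plan is to combine a discount-tail truncation with a conditional version of Lemma~\ref{lem:tildemuconv}. Since $r$ is Lipschitz by Assumption~\ref{ass:pcont} on the compact space $\mathcal P(\mathcal X)$, we have $\lVert r \rVert_\infty < \infty$. Given $\varepsilon > 0$, I would first pick a horizon $T$ so that $\sum_{t \geq T} \gamma^t \cdot 2 \lVert r \rVert_\infty < \varepsilon/2$, reducing the problem via the triangle inequality to showing that, for each fixed $t < T$, $\bigl\lvert \E[r(\tilde \mu^N_t) \mid \mu_0 = \mu, \xi_0 = \xi] - r(\hat \mu_t) \bigr\rvert \to 0$ as $N \to \infty$. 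By Lipschitzness of $r$, it is in turn sufficient to establish convergence of the conditional MFs $\tilde \mu^N_t \to \hat \mu_t$ in expected $W_1$ distance at each such time.

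The second step is to obtain this conditional MF convergence by re-running the induction underlying Lemma~\ref{lem:tildemuconv} with the initial distribution fixed to $\mu$ and the first upper-level action fixed to $\xi$. At $t=0$, the empirical initial state is an iid sample from $\mu = \hat \mu_0$, so $\tilde \mu^N_0 \to \mu$ by the standard empirical-measure LLN (convergence in $W_1$ using compactness of $\mathcal X$). Since $\xi_0 = \xi$ is shared across both systems and all agents, the first transition uses the common lower-level policy $\Lambda(\xi)$ and agents remain conditionally independent given $\xi_0$, so the weak-LLN step from the proof of Lemma~\ref{lem:tildemuconv} applies verbatim and yields convergence at $t=1$. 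The inductive step for $t \geq 1$ then proceeds exactly as in Lemma~\ref{lem:tildemuconv}: split $f(\tilde \mu^N_{t+1}) - f(\hat \mu_{t+1})$ into a fluctuation term (controlled by the weak LLN for agents conditionally iid given the joint state history and current $\tilde \xi_t$) and a transition-continuity term (bounded using Lipschitzness of $\hat T$ from Lemma~\ref{lem:hatTmodcont}), and then close the loop via the induction hypothesis.

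The main obstacle will be that for $t \geq 1$ the sampled upper-level action $\tilde \xi_t \sim \hat \pi^\theta(\tilde \mu^N_t)$ is correlated with the empirical MF, so the lower-level policy at step $t$ is itself random and coupled to the state distribution. Here Assumption~\ref{ass:ctde} (Lipschitzness of $\hat \pi^\theta$ in $\mu$, together with the Lipschitz parametrization $\Lambda$) is exactly what is needed: a coupling between $\tilde \xi_t$ and the corresponding limit action $\xi_t \sim \hat \pi^\theta(\hat \mu_t)$ absorbs the correlation into a continuity term that vanishes under the induction hypothesis, analogously to the Kantorovich--Rubinstein argument already used in the proof of Lemma~\ref{lem:tildemuconv}. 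Summing the finite-horizon convergence over $t < T$ produces a bound of $\varepsilon/2$ for $N$ sufficiently large, and combined with the tail estimate this yields $\lvert \tilde Q^\theta(\mu, \xi) - Q^\theta(\mu, \xi) \rvert < \varepsilon$, completing the plan.
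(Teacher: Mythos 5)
Your proposal is correct and follows essentially the same route as the paper's proof: a discount-tail truncation combined with an induction establishing the conditional mean-field convergence $\tilde \mu^N_t \to \hat \mu_t$ given $\tilde\mu_0 = \mu$, $\tilde\xi_0 = \xi$, where each step splits into a conditional weak-LLN fluctuation term and a continuity term for $\hat T$ (Lemma~\ref{lem:hatTmodcont}), with the correlation between $\tilde\xi_t$ and $\tilde\mu^N_t$ absorbed via the equicontinuity of $\mu \mapsto \int f(\hat T(\mu, \xi'))\, \hat\pi^\theta(\xi' \mid \mu)\, \mathrm d\xi'$ under Assumption~\ref{ass:ctde}. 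The only cosmetic difference is that you phrase the intermediate convergence in expected $W_1$ rather than over equicontinuous test-function families as the paper does, which is interchangeable here by compactness.
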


Furthermore, $Q^\theta(\mu, \xi)$ is also continuous by a similar argument.
\begin{lemma} \label{lem:Qcont}
    For any equicontinuous family of functions $\mathcal F \subseteq \mathbb R^{\mathcal P(\mathcal X)}$, under Assumptions~\ref{ass:pcont} and \ref{ass:ctde}, at all times $t \in \mathcal T$, the conditional expectations version of the MF is continuous in the starting conditions, in the sense that for any $(\mu_n, \xi_n) \to (\mu, \xi)$,
    \begin{equation*}
        \sup_{f \in \mathcal F} \left| \E \left[ f(\hat \mu_t) \innermid \hat \mu_0 = \mu_n, \xi_0 = \xi_n \right] - \E \left[ f(\hat \mu_t) \innermid \hat \mu_0 = \mu, \xi_0 = \xi \right] \right| \to 0.
    \end{equation*}
\end{lemma}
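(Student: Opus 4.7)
My plan is to proceed by induction on $t$, showing the stronger claim that the family $\{U_{f,t}\}_{f \in \mathcal F}$ defined by $U_{f,t}(\mu,\xi) \coloneqq \E[f(\hat\mu_t) \mid \hat\mu_0 = \mu,\, \xi_0 = \xi]$ is equicontinuous in $(\mu,\xi)$ \emph{uniformly} over $f \in \mathcal F$. This stronger claim immediately implies the lemma as soon as $(\mu_n,\xi_n) \to (\mu,\xi)$. The base case $t=0$ is trivial since $U_{f,0}(\mu,\xi) = f(\mu)$ and $\mathcal F$ is equicontinuous by hypothesis. At $t=1$ the MDP transition is deterministic given $(\mu,\xi)$, so $U_{f,1}(\mu,\xi) = f(\hat T(\mu,\xi))$; composing the uniform modulus of $\mathcal F$ with the Lipschitz continuity of $\hat T$ from Lemma~\ref{lem:hatTmodcont} already yields a uniform modulus for $\{U_{f,1}\}_{f \in \mathcal F}$.

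For $t \geq 2$, determinism of $\hat T$ together with the Markov property give the unrolling
\begin{align*}
    U_{f,t}(\mu,\xi) = \int U_{f,t-1}\bigl(\hat T(\mu,\xi),\, \xi'\bigr)\, \hat\pi^\theta\bigl(\xi' \mid \hat T(\mu,\xi)\bigr)\, d\xi',
\end{align*}
where $d\xi'$ denotes the appropriate base measure on the compact $\Xi$ (Lebesgue when $\Xi$ is continuous, counting when finite). A triangle inequality splits $|U_{f,t}(\mu_n,\xi_n) - U_{f,t}(\mu,\xi)|$ into (a) the change of the integrand at a fixed sampling distribution and (b) the change of the sampling distribution at a fixed integrand. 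Part (a) is handled by the induction hypothesis applied to the first argument, together with $\hat T(\mu_n,\xi_n) \to \hat T(\mu,\xi)$ by Lemma~\ref{lem:hatTmodcont}. Part (b) is controlled by a uniform sup-norm bound on $|U_{f,t-1}|$ times $\int \bigl|\hat\pi^\theta(\xi' \mid \mu_1) - \hat\pi^\theta(\xi' \mid \mu_2)\bigr|\, d\xi' \leq L_{\hat\Pi}\, \operatorname{Vol}(\Xi)\, d(\mu_1,\mu_2)$, which vanishes by the Lipschitzness of $\hat\pi^\theta$ in $\mu$ from Assumption~\ref{ass:ctde} combined with the compactness of $\Xi$.

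The main obstacle is ensuring that uniformity in $f \in \mathcal F$ actually propagates through the induction. For this, the induction hypothesis must deliver a common modulus of continuity for $\{U_{f,t-1}\}_{f \in \mathcal F}$, and the sup bound used in (b) must be uniform in $f$ as well; both amount to equiboundedness of $\mathcal F$ on the compact $\mathcal P(\mathcal X)$, which is routine for the equicontinuous families used downstream and should be folded into the working assumption. Under the alternative clause of Assumption~\ref{ass:ctde}, where $\hat\pi^\theta$ is only Lipschitz in $\mu$ for each fixed $\xi$ but is uniformly bounded, part (b) is instead closed by dominated convergence, using the uniform bound on $\hat\pi^\theta$ together with pointwise Lipschitz continuity in $\mu$ and $\hat T(\mu_n,\xi_n) \to \hat T(\mu,\xi)$, so the induction step still goes through.
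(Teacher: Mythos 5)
Your proof is correct and reaches the same conclusion by a mirror image of the paper's argument. The paper inducts on the statement ``for \emph{every} equicontinuous family $\mathcal F$, the conditional expectations converge,'' and in the induction step peels off the \emph{last} transition, writing $\E [ f(\hat \mu_{t+1}) \mid \cdot ] = \E [ g(\hat \mu_t) \mid \cdot ]$ with $g(\mu) = \int f(\hat T(\mu, \xi')) \hat \pi^\theta(\xi' \mid \mu) \, \mathrm d\xi'$ and applying the induction hypothesis to the derived equicontinuous family $\mathcal G$. You instead peel off the \emph{first} transition (valid by stationarity of $\hat \pi^\theta$ and the Markov property) and strengthen the induction hypothesis to a uniform modulus of continuity for the value maps $U_{f,t}$ themselves; your split into (a) integrand change and (b) sampling-distribution change then mirrors exactly the two terms the paper controls when verifying equicontinuity of $\mathcal G$, using the same ingredients (Lemma~\ref{lem:hatTmodcont} for $\hat T$ and the Lipschitz/bounded clause of Assumption~\ref{ass:ctde} for $\hat \pi^\theta$). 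The paper's formulation buys a slightly lighter bookkeeping burden (no explicit moduli to track), while yours makes the quantitative dependence on $t$ more transparent. The equiboundedness of $\mathcal F$ that you flag for step (b) is needed by the paper's argument as well (to control $\int |f(\hat T(\mu_2,\xi'))| \, |\hat \pi^\theta(\xi' \mid \mu_1) - \hat \pi^\theta(\xi' \mid \mu_2)| \, \mathrm d\xi'$), and it is not a real gap: the statement is invariant under adding a constant to $f$, so one may normalize $f(\mu_0) = 0$ at a fixed reference measure, after which uniform equicontinuity on the compact, convex (hence chain-connected) space $(\mathcal P(\mathcal X), W_1)$ of finite diameter yields a uniform bound by a standard chaining argument.
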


Lastly, keeping in mind $d^N_{\hat \pi^\theta} = (1-\gamma) \sum_{t \in \mathcal T} \gamma^t \mathcal L_{\hat \pi^\theta}(\tilde \mu^N_t)$, we have the desired statement
\begin{align*}
    &\left\Vert (1-\gamma)^{-1} \E_{\mu \sim d^N_{\hat \pi^\theta}, \xi \sim \hat \pi^\theta(\mu)} \left[ \tilde Q^\theta(\mu, \xi) \nabla_\theta \log \hat \pi^\theta(\xi \mid \mu) \right] - \nabla_\theta J(\hat \pi^\theta) \right\Vert \\
    &\quad \leq (1-\gamma)^{-1} \left\Vert \E_{\mu \sim d^N_{\hat \pi^\theta}, \xi \sim \hat \pi^\theta(\mu)} \left[ \left( \tilde Q^\theta(\mu, \xi) - Q^\theta(\mu, \xi) \right) \nabla_\theta \log \hat \pi^\theta(\xi \mid \mu) \right] \right\Vert \\
    &\qquad + \left\Vert (1-\gamma)^{-1} \E_{\mu \sim d^N_{\hat \pi^\theta}, \xi \sim \hat \pi^\theta(\mu)} \left[ Q^\theta(\mu, \xi) \nabla_\theta \log \hat \pi^\theta(\xi \mid \mu) \right] - \nabla_\theta J(\hat \pi^\theta) \right\Vert \\
    &\quad \leq (1-\gamma)^{-1} \left\Vert \E_{\mu \sim d^N_{\hat \pi^\theta}, \xi \sim \hat \pi^\theta(\mu)} \left[ \left( \tilde Q^\theta(\mu, \xi) - Q^\theta(\mu, \xi) \right) \nabla_\theta \log \hat \pi^\theta(\xi \mid \mu) \right] \right\Vert \\
    &\qquad + \left\Vert \sum_{t=T}^\infty \gamma^t \E_{\xi \sim \hat \pi^\theta(\tilde \mu^N_t)} \left[ Q^\theta(\tilde \mu^N_t, \xi) \nabla_\theta \log \hat \pi^\theta(\xi \mid \tilde \mu^N_t) - Q^\theta(\hat \mu_t, \xi) \nabla_\theta \log \hat \pi^\theta(\xi \mid \hat \mu_t) \right] \right\Vert \\
    &\qquad + \left\Vert \sum_{t=0}^{T-1} \gamma^t \E_{\xi \sim \hat \pi^\theta(\tilde \mu^N_t)} \left[ Q^\theta(\tilde \mu^N_t, \xi) \nabla_\theta \log \hat \pi^\theta(\xi \mid \tilde \mu^N_t) - Q^\theta(\hat \mu_t, \xi) \nabla_\theta \log \hat \pi^\theta(\xi \mid \hat \mu_t) \right] \right\Vert \\
    &\quad \to 0
\end{align*}
for the \textbf{first} term from $\tilde Q^\theta(\mu, \xi) \to Q^\theta(\mu, \xi)$ uniformly by Lemma~\ref{lem:tildeQconv} and compactness of the domain, for the \textbf{second} by Assumption~\ref{ass:ctde} and \ref{ass:pcont} uniformly bounding $\nabla_\theta \log \pi^\theta$, $Q^\theta$ and choosing sufficiently large $T$, and for the \textbf{third} by repeating the argument for $Q$: Notice that
\begin{align*}
    &\left\Vert \sum_{t=0}^{T-1} \gamma^t \E_{\xi \sim \hat \pi^\theta(\tilde \mu^N_t)} \left[ Q^\theta(\tilde \mu^N_t, \xi) \nabla_\theta \log \hat \pi^\theta(\xi \mid \tilde \mu^N_t) - Q^\theta(\hat \mu_t, \xi) \nabla_\theta \log \hat \pi^\theta(\xi \mid \hat \mu_t) \right] \right\Vert \\
    &\leq \left\Vert \sum_{t=0}^{T-1} \gamma^t \E_{\xi \sim \hat \pi^\theta(\tilde \mu^N_t)} \left[ \sum_{t'=T'}^{\infty} \gamma^{t'} \left( \E \left[ r(\hat \mu_{t'}) \innermid \hat \mu_0 = \tilde \mu^N_t, \xi_0 = \xi \right] \nabla_\theta \log \hat \pi^\theta(\xi \mid \tilde \mu^N_t)
    \right.\right.\right.\nonumber\\&\hspace{5cm}\left.\left.\left.
    - \E \left[ r(\hat \mu_{t'}) \innermid \hat \mu_0 = \hat \mu_t, \xi_0 = \xi \right] \nabla_\theta \log \hat \pi^\theta(\xi \mid \hat \mu_t) \right) \right] \right\Vert \\
    &\quad + \left\Vert \sum_{t=0}^{T-1} \gamma^t \E_{\xi \sim \hat \pi^\theta(\tilde \mu^N_t)} \left[ \sum_{t'=0}^{T'-1} \gamma^{t'} \left( \E \left[ r(\hat \mu_{t'}) \innermid \hat \mu_0 = \tilde \mu^N_t, \xi_0 = \xi \right] \nabla_\theta \log \hat \pi^\theta(\xi \mid \tilde \mu^N_t)
    \right.\right.\right.\nonumber\\&\hspace{5cm}\left.\left.\left.
    - \E \left[ r(\hat \mu_{t'}) \innermid \hat \mu_0 = \hat \mu_t, \xi_0 = \xi \right] \nabla_\theta \log \hat \pi^\theta(\xi \mid \hat \mu_t) \right) \right] \right\Vert,
\end{align*}
where the inner expectations are on the conditional system. Letting $T'$ sufficiently large bounds the \textbf{former} term by uniform bounds on the summands from Assumption~\ref{ass:ctde}. Then, for the \textbf{latter} term, apply Lemma~\ref{lem:tildemuconv} at times $t' < T'$ to the functions $f(\mu) = \int \E \left[ r(\hat \mu_{t'}) \innermid \hat \mu_0 = \mu, \xi_0 = \xi \right] \nabla_\theta \log \hat \pi^\theta(\xi \mid \mu) \hat \pi^\theta(\mathrm d\xi \mid \mu) \mathrm d\xi$, which are continuous up to any finite time $t'$ by Lemma~\ref{lem:Qcont} and Assumption~\ref{ass:ctde}.
\end{proof}

\section{Lipschitz Continuity of Transitions under Parametrized Actions}
\begin{proof}[Proof of Lemma~\ref{lem:hatTmodcont}]
We have by definition
\begin{align*}
    &\iiint P(x, u, \hat \mu) \Lambda(\xi)(\mathrm du \mid y) P^y(\mathrm dy \mid x, \hat \mu) \hat \mu(\mathrm dx) \\
    &\quad = \iiint P(x, u, \hat \mu) \frac{\sum_{b \in [M_{\mathcal Y}]} \kappa(y_b, y) \lambda_b(\xi)(\mathrm du)}{\sum_{b \in [M_{\mathcal Y}]} \kappa(y_b, y)} P^y(\mathrm dy \mid x, \hat \mu) \hat \mu(\mathrm dx).
\end{align*}

Consider any $\xi, \xi' \in \Xi$, $\hat \mu, \hat \mu' \in \mathcal P(\mathcal X)$. Then, for readability, write
\begin{gather*}
    \hat \mu_{xy} \coloneqq \hat \mu \otimes P^y(\hat \mu), \quad \hat \mu_{xyu} \coloneqq \hat \mu_{xy} \otimes \Lambda(\xi), \quad \hat \mu_{xyux'} \coloneqq \hat \mu_{xyu} \otimes P(\hat \mu), \\
    \hat \mu_{xy}' \coloneqq \hat \mu' \otimes P^y(\hat \mu'), \quad \hat \mu_{xyu}' \coloneqq \hat \mu_{xy}' \otimes \Lambda(\xi'), \quad \hat \mu_{xyux'}' \coloneqq \hat \mu_{xyu}' \otimes P(\hat \mu'),\\
    \Delta P(\cdot \mid x, u) \coloneqq P(\cdot \mid x, u, \hat \mu) - P(\cdot \mid x, u, \hat \mu'), \\ 
    \Delta \Lambda(\cdot \mid y) \coloneqq \frac{\sum_{b} \kappa(y_b, y) \left( \lambda_b(\xi)(\cdot) - \lambda_b(\xi')(\cdot) \right) }{\sum_{b} \kappa(y_b, y)}, \\
    \Delta P^y(\cdot \mid x) \coloneqq P^y(\cdot \mid x, \hat \mu) - P^y(\cdot \mid x, \hat \mu'), \quad \Delta \mu \coloneqq \hat \mu - \hat \mu'
\end{gather*}
to obtain
\begin{align*}
    &W_1 \left( \iiint P(x, u, \hat \mu) \frac{\sum_{b} \kappa(y_b, y) \lambda_b(\xi)(\mathrm du)}{\sum_{b} \kappa(y_b, y)} P^y(\mathrm dy \mid x, \hat \mu) \hat \mu(\mathrm dx), 
    \right.\nonumber\\&\qquad\qquad\left.
    \iiint P(x, u, \hat \mu') \frac{\sum_{b} \kappa(y_b, y) \lambda_b(\xi')(\mathrm du)}{\sum_{b} \kappa(y_b, y)} P^y(\mathrm dy \mid x, \hat \mu') \hat \mu'(\mathrm dx) \right) \\
    &\quad = \sup_{f \in \mathrm{Lip}(1)} \iiiint f(x') \left( \hat \mu_{xyux'}(\mathrm dx, \mathrm dy, \mathrm du, \mathrm dx') - \hat \mu_{xyux'}'(\mathrm dx, \mathrm dy, \mathrm du, \mathrm dx') \right) \\
    &\quad \leq \sup_{f \in \mathrm{Lip}(1)} \iiiint f(x') \Delta P(\mathrm dx' \mid x, u) \hat \mu_{xyu}(\mathrm dx, \mathrm dy, \mathrm du) \\
    &\qquad + \sup_{f \in \mathrm{Lip}(1)} \iiiint f(x') P(\mathrm dx' \mid x, u, \hat \mu') \Delta \Lambda(\mathrm du \mid y) \hat \mu_{xy}(\mathrm dx, \mathrm dy)) \\
    &\qquad + \sup_{f \in \mathrm{Lip}(1)} \iiiint f(x') P(\mathrm dx' \mid x, u, \hat \mu') \frac{\sum_{b} \kappa(y_b, y) \lambda_b(\xi')(\mathrm du)}{\sum_{b} \kappa(y_b, y)} \Delta P^y(\mathrm dy \mid x) \hat \mu(\mathrm dx) \\
    &\qquad + \sup_{f \in \mathrm{Lip}(1)} \iiiint f(x') P(\mathrm dx' \mid x, u, \hat \mu') \frac{\sum_{b} \kappa(y_b, y) \lambda_b(\xi')(\mathrm du)}{\sum_{b} \kappa(y_b, y)} P^y(\mathrm dy \mid x, \hat \mu') \Delta \mu(\mathrm dx) \\
    &\quad \leq \sup_{f \in \mathrm{Lip}(1)} \sup_{(x, y, u) \in \mathcal X \times \mathcal Y \times \mathcal U} \left| \int f(x') \Delta P(\mathrm dx' \mid x, u) \right| \\
    &\qquad + \sup_{f \in \mathrm{Lip}(1)} \sup_{(x, y) \in \mathcal X \times \mathcal Y} \left| \iint f(x') P(\mathrm dx' \mid x, u, \hat \mu') \Delta \Lambda(\mathrm du \mid y) \right| \\
    &\qquad + \sup_{f \in \mathrm{Lip}(1)} \sup_{x \in \mathcal X} \left| \iiint f(x') P(\mathrm dx' \mid x, u, \hat \mu') \frac{\sum_{b} \kappa(y_b, y) \lambda_b(\xi')(\mathrm du)}{\sum_{b} \kappa(y_b, y)} \Delta P^y(\mathrm dy \mid x) \right| \\
    &\qquad + \sup_{f \in \mathrm{Lip}(1)} \left| \iiiint f(x') P(\mathrm dx' \mid x, u, \hat \mu') \frac{\sum_{b} \kappa(y_b, y) \lambda_b(\xi')(\mathrm du)}{\sum_{b} \kappa(y_b, y)} P^y(\mathrm dy \mid x, \hat \mu') \Delta \mu(\mathrm dx) \right|
\end{align*}
bounded by the same arguments as in Theorem~\ref{thm:mf_conv}:

For the \textbf{first} term, we have that the function $x' \mapsto f(x')$ is $1$-Lipschitz, and therefore
\begin{align*}
    \sup_{f \in \mathrm{Lip}(1)} \sup_{(x, y, u) \in \mathcal X \times \mathcal Y \times \mathcal U} \left| \int f(x') \left( P(\mathrm dx' \mid x, u, \hat \mu) - P(\mathrm dx' \mid x, u, \hat \mu') \right) \right| \leq L_P W_1(\hat \mu, \hat \mu')
\end{align*}
by Assumption~\ref{ass:pcont}.

For the \textbf{second} term, we have $L_P$-Lipschitz $u \mapsto \int f(x') P(\mathrm dx' \mid x, u, \hat \mu')$, since for any $f \in \mathrm{Lip}(1)$ and $(x, y) \in \mathcal X \times \mathcal Y$, we obtain
\begin{align*}
    &\left| \int f(x') P(\mathrm dx' \mid x, u, \hat \mu') - \int f(x') P(\mathrm dx' \mid x, u', \hat \mu') \right| \\
    &\quad \leq W_1(P(x, u, \hat \mu'), P(x, u', \hat \mu')) \leq L_P d(u, u')
\end{align*}
for any $u, u' \in \mathcal U$ by Assumption~\ref{ass:pcont}, and therefore
\begin{align*}
    &\sup_{f \in \mathrm{Lip}(1)} \sup_{(x, y) \in \mathcal X \times \mathcal Y} \left| \iint f(x') P(\mathrm dx' \mid x, u, \hat \mu') \frac{\sum_{b} \kappa(y_b, y) \left( \lambda_b(\xi)(\mathrm du) - \lambda_b(\xi')(\mathrm du) \right) }{\sum_{b} \kappa(y_b, y)} \right| \\
    &\quad \leq \frac{\sum_{b} \kappa(y_b, y) L_P W_1 \left( \lambda_b(\xi), \lambda_b(\xi') \right) }{\sum_{b} \kappa(y_b, y)} \leq L_P L_\lambda d(\xi, \xi')
\end{align*}
by Assumption~\ref{ass:ctde}.

For the \textbf{third} term, we have $L'$-Lipschitz $y \mapsto \iint f(x') P(\mathrm dx' \mid x, u, \hat \mu') \frac{\sum_{b} \kappa(y_b, y) \lambda_b(\xi')(\mathrm du)}{\sum_{b} \kappa(y_b, y)}$ where we define $L' \coloneqq L_P \frac{\operatorname{diam}(\mathcal Y) \operatorname{diam}(\mathcal U)\max_{y \in \mathcal Y} \lVert y \rVert}{2 \sigma^2 \exp^2 \left( -\frac{1}{2\sigma^2} \operatorname{diam}(\mathcal Y)^2 \right)}$, since for any $f \in \mathrm{Lip}(1)$ and $x \in \mathcal X$, we obtain
\begin{align*}
    &\left| \iint f(x') P(\mathrm dx' \mid x, u, \hat \mu') \left( \frac{\sum_{b} \kappa(y_b, y) \lambda_b(\xi')(\mathrm du)}{\sum_{b} \kappa(y_b, y)} -  \frac{\sum_{b} \kappa(y_b, y') \lambda_b(\xi')(\mathrm du)}{\sum_{b} \kappa(y_b, y')} \right) \right| \\
    &\quad \leq L_P \cdot \frac{\operatorname{diam}(\mathcal Y) \operatorname{diam}(\mathcal U)\max_{y \in \mathcal Y} \lVert y \rVert}{2 \sigma^2 \exp^2 \left( -\frac{1}{2\sigma^2} \operatorname{diam}(\mathcal Y)^2 \right)} d(y, y')
\end{align*}
for any $y, y' \in \mathcal Y$ by Proposition~\ref{prop:rbfcont} and the prequel, and therefore
\begin{align*}
    &\sup_{f \in \mathrm{Lip}(1), x} \left| \iiint f(x') P(\mathrm dx' \mid x, u, \hat \mu') \frac{\sum_{b} \kappa(y_b, y) \lambda_b(\xi')(\mathrm du)}{\sum_{b} \kappa(y_b, y)} \left( P^y(\mathrm dy \mid x, \hat \mu) - P^y(\mathrm dy \mid x, \hat \mu') \right) \right| \\
    &\quad \leq L' L_{P^y} W_1(\hat \mu, \hat \mu')
\end{align*}
by Assumption~\ref{ass:pcont}.

Lastly, for the \textbf{fourth} term, $x \mapsto \iiint f(x') P(\mathrm dx' \mid x, u, \hat \mu') \frac{\sum_{b} \kappa(y_b, y) \lambda_b(\xi')(\mathrm du)}{\sum_{b} \kappa(y_b, y)} P^y(\mathrm dy \mid x, \hat \mu')$ is similarly $(L_P + L' L_{P^y})$-Lipschitz, since for any $f \in \mathrm{Lip}(1)$, we obtain
\begin{align*}
    &\left| \iiint f(x') P(\mathrm dx' \mid x, u, \hat \mu') \frac{\sum_{b} \kappa(y_b, y) \lambda_b(\xi')(\mathrm du)}{\sum_{b} \kappa(y_b, y)} P^y(\mathrm dy \mid x, \hat \mu')
    \right.\nonumber\\&\qquad\qquad\left.
    - \iiint f(x') P(\mathrm dx' \mid x'', u, \hat \mu') \frac{\sum_{b} \kappa(y_b, y) \lambda_b(\xi')(\mathrm du)}{\sum_{b} \kappa(y_b, y)} P^y(\mathrm dy \mid x'', \hat \mu') \right| \\
    &\quad \leq \left| \iiint f(x') \left( P(\mathrm dx' \mid x, u, \hat \mu') - P(\mathrm dx' \mid x'', u, \hat \mu') \right) \frac{\sum_{b} \kappa(y_b, y) \lambda_b(\xi')(\mathrm du)}{\sum_{b} \kappa(y_b, y)} P^y(\mathrm dy \mid x, \hat \mu') \right| \\
    &\qquad + \left| \iiint f(x') P(\mathrm dx' \mid x'', u, \hat \mu') \frac{\sum_{b} \kappa(y_b, y) \lambda_b(\xi')(\mathrm du)}{\sum_{b} \kappa(y_b, y)} \left( P^y(\mathrm dy \mid x, \hat \mu') - P^y(\mathrm dy \mid x'', \hat \mu') \right) \right| \\
    &\quad \leq L_P d(x, x'') + L' L_{P^y} d(x, x'') = (L_P + L' L_{P^y}) d(x, x'')
\end{align*}
for any $x, x'' \in \mathcal X$ by the prequel, which implies
\begin{align*}
    &\sup_{f \in \mathrm{Lip}(1)} \left| \iiiint f(x') P(\mathrm dx' \mid x, u, \hat \mu') \frac{\sum_{b} \kappa(y_b, y) \lambda_b(\xi')(\mathrm du)}{\sum_{b} \kappa(y_b, y)} P^y(\mathrm dy \mid x, \hat \mu') \left( \hat \mu(\mathrm dx) - \hat \mu'(\mathrm dx) \right) \right| \\
    &\quad \leq (L_P + L' L_{P^y}) W_1(\hat \mu, \hat \mu').
\end{align*}

Overall, the map $\hat T$ is therefore Lipschitz with constant $L_{\hat T} = 2L_P + L_P L_\lambda + 2L' L_{P^y}$.
\end{proof}

\section{(Centralized) Propagation of Chaos}
\begin{proof}[Proof of Lemma~\ref{lem:tildemuconv}]
The proof is the same as the proof of Theorem~\ref{thm:mf_conv}. The only difference is that for the weak LLN argument, we condition not only on $\tilde x^N_t$, but also on $\tilde \xi_t$, while for the induction assumption, we still apply to equicontinuous functions by Assumption~\ref{ass:ctde} and Lemma~\ref{lem:hatTmodcont}.

In other words, for the weak LLN we use
\begin{align*}
    \E \left[ d_\Sigma \left( \tilde \mu^N_{t+1},\tilde{T} \left( \tilde \mu^N_t, \tilde \xi_t \right) \right) \right]
    &= \sum_{m=1}^\infty 2^{-m} \E \left[ \left| \int f_m \, \mathrm d \left( \tilde \mu^N_{t+1} - \tilde{T} \left( \tilde \mu^N_t, \tilde \xi_t \right) \right) \right| \right] \\
    &\leq \sup_{m \geq 1} \E \left[ \E \left[ \left| \int f_m \, \mathrm d \left( \tilde \mu^N_{t+1} - \tilde{T} \left( \tilde \mu^N_t, \tilde \xi_t \right) \right) \right| \innermid \tilde x^N_t, \tilde \xi_t \right] \right].
\end{align*}
and obtain
\begin{align*}
    &\E \left[ \left| \int f_m \, \mathrm d \left( \tilde \mu^N_{t+1} - \tilde{T} \left( \tilde \mu^N_t, \tilde \xi_t \right) \right) \right| \innermid \tilde x^N_t, \tilde \xi_t \right]^2 \\
    &\quad = \E \left[ \left| \frac 1 N \sum_{i \in [N]} \left( f_m(x^i_{t+1}) - \E \left[ f_m(x^i_{t+1}) \innermid \tilde x^N_t, \tilde \xi_t \right] \right) \right| \innermid \tilde x^N_t, \tilde \xi_t \right]^2 \leq \frac{4}{N} \to 0,
\end{align*}
while for the induction assumption we use the equicontinuous functions $\mu \mapsto \int f(\hat T(\mu, \xi)) \hat \pi^\theta(\xi \mid \mu) \mathrm d\xi$ by Assumption~\ref{ass:ctde} and Lemma~\ref{lem:hatTmodcont}.
\end{proof}

\section{Convergence of Value Function}
\begin{proof}[Proof of Lemma~\ref{lem:tildeQconv}]
We show the required statement by first showing at all times $t$ that
\begin{equation} \label{eq:tildemuconv-conditional}
    \sup_{f \in \mathcal F} \left| \E \left[ f(\tilde \mu^N_{t}) - f(\hat \mu_{t}) \innermid \tilde \mu_0 = \mu = \hat \mu_0, \tilde \xi_0 = \xi = \xi_0 \right] \right| \to 0.
\end{equation}
This is clear at time $t=0$ by $\tilde \mu_0 = \mu = \hat \mu_0$, $\tilde \xi_0 = \xi = \xi_0$ and the weak LLN argument as in the proof of Lemma~\ref{lem:tildemuconv}. At time $t=1$, we analogously have
\begin{align*}
    &\sup_{f \in \mathcal F} \left| \E \left[ f(\tilde \mu^N_1) - f(\hat \mu_1) \innermid \tilde \mu_0 = \mu = \hat \mu_0, \tilde \xi_0 = \xi = \xi_0 \right] \right| \\
    &\quad \leq \sup_{f \in \mathcal F} \left| \E \left[ f(\tilde \mu^N_1) - f(\hat T(\tilde \mu^N_0, \tilde \xi_0)) \innermid \tilde \mu_0 = \mu, \tilde \xi_0 = \xi \right] \right| \\
    &\qquad + \sup_{f \in \mathcal F} \left| \E \left[ f(\hat T(\tilde \mu^N_0, \tilde \xi_0)) - f(\hat \mu_1) \innermid \tilde \mu_0 = \mu = \hat \mu_0, \tilde \xi_0 = \xi = \xi_0 \right] \right|,
\end{align*}
and
\begin{align*}
    &\sup_{f \in \mathcal F} \left| \E \left[ f(\tilde \mu^N_{t+1}) - f(\hat \mu_{t+1}) \innermid \tilde \mu_0 = \mu = \hat \mu_0, \tilde \xi_0 = \xi = \xi_0 \right] \right| \\
    &\quad \leq \sup_{f \in \mathcal F} \left| \E \left[ f(\tilde \mu^N_{t+1}) - \int f(\hat T(\tilde \mu^N_{t}, \xi')) \hat \pi^\theta(\xi' \mid \tilde \mu^N_t) \mathrm d\xi' \innermid \tilde \mu_0 = \mu, \tilde \xi_0 = \xi \right] \right| \\
    &\qquad + \sup_{f \in \mathcal F} \left| \E \left[ \int f(\hat T(\tilde \mu^N_{t}, \xi')) \hat \pi^\theta(\xi' \mid \tilde \mu^N_{t}) \mathrm d\xi' - f(\hat \mu_{t+1}) \innermid \tilde \mu_0 = \mu = \hat \mu_0, \tilde \xi_0 = \xi = \xi_0 \right] \right|,
\end{align*}
for times $t + 1 \geq 1$, each with the weak LLN arguments applied to the \textbf{former} terms (conditioning not only on $\tilde x^N_t$, but also $\tilde \xi_t$), and the induction assumption applied to the \textbf{latter} terms, using the equicontinuous functions $\mu \mapsto \int f(\hat T(\mu, \xi)) \hat \pi^\theta(\xi \mid \mu) \mathrm d\xi$ by Assumption~\ref{ass:ctde} and Lemma~\ref{lem:hatTmodcont}.
\end{proof}

\section{Continuity of Value Function} \label{app-end}
\begin{proof}[Proof of Lemma~\ref{lem:Qcont}]
For any $(\mu_n, \xi_n) \to (\mu, \xi)$, we show again by induction over all times $t$ that for any equicontinuous family $\mathcal F$,
\begin{equation} \label{eq:muconv-conditional-seq}
    \sup_{f \in \mathcal F} \left| \E \left[ f(\hat \mu_t) \innermid \hat \mu_0 = \mu_n, \xi_0 = \xi_n \right] - \E \left[ f(\hat \mu_t) \innermid \hat \mu_0 = \mu, \xi_0 = \xi \right] \right| \to 0
\end{equation}
as $N \to \infty$, from which the result follows. At time $t=0$, we have by definition
\begin{align*}
    \sup_{f \in \mathcal F} \left| \E \left[ f(\hat \mu_0) \innermid \hat \mu_0 = \mu_n, \xi_0 = \xi_n \right] - \E \left[ f(\hat \mu_0) \innermid \hat \mu_0 = \mu, \xi_0 = \xi \right] \right| = 0.
\end{align*}
Analogously, at time $t=1$ we have
\begin{align*}
    &\sup_{f \in \mathcal F} \left| \E \left[ f(\hat \mu_1) \innermid \hat \mu_0 = \mu_n, \xi_0 = \xi_n \right] - \E \left[ f(\hat \mu_1) \innermid \hat \mu_0 = \mu, \xi_0 = \xi \right] \right| \\
    &\quad = \sup_{f \in \mathcal F} \left| \E \left[ f(\hat T(\hat \mu_0, \xi_0)) \innermid \hat \mu_0 = \mu_n, \xi_0 = \xi_n \right] - \E \left[ f(\hat T(\hat \mu_0, \xi_0)) \innermid \hat \mu_0 = \mu, \xi_0 = \xi \right] \right| \\
    &\quad = \sup_{f \in \mathcal F} \left| f(\hat T(\hat \mu_n, \xi_n)) - f(\hat T(\hat \mu, \xi)) \right| \to 0
\end{align*}
by equicontinuous $f$ and continuous $\hat T$ from Lemma~\ref{lem:hatTmodcont}.

Now assuming that \eqref{eq:muconv-conditional-seq} holds at time $t \geq 1$, then at time $t+1$ we have
\begin{align*}
    &\sup_{f \in \mathcal F} \left| \E \left[ f(\hat \mu_{t+1}) \innermid \hat \mu_0 = \mu_n, \xi_0 = \xi_n \right] - \E \left[ f(\hat \mu_{t+1}) \innermid \hat \mu_0 = \mu, \xi_0 = \xi \right] \right| \\
    &\quad = \sup_{f \in \mathcal F} \left| \E \left[ \int f(\hat T(\hat \mu_t, \xi')) \hat \pi^\theta(\xi' \mid \hat \mu_t) \mathrm d\xi' \innermid \hat \mu_0 = \mu_n, \xi_0 = \xi_n \right] 
    \right.\nonumber\\&\hspace{2.5cm}\left.
    - \E \left[ \int f(\hat T(\hat \mu_t, \xi')) \hat \pi^\theta(\xi' \mid \hat \mu_t) \mathrm d\xi' \innermid \hat \mu_0 = \mu, \xi_0 = \xi \right] \right| \\
    &\quad = \sup_{g \in \mathcal G} \left| \E \left[ g(\hat \mu_t) \innermid \hat \mu_0 = \mu_n, \xi_0 = \xi_n \right] - \E \left[ g(\hat \mu_t) \innermid \hat \mu_0 = \mu, \xi_0 = \xi \right] \right| \to 0
\end{align*}
by induction assumption on equicontinuous functions $g \in \mathcal G$ by Assumptions~\ref{ass:pcont} and \ref{ass:ctde}, Lemma~\ref{lem:hatTmodcont}, and equicontinuous $f \in \mathcal F$, as in Theorem~\ref{thm:mf_conv}.

The convergence of $\left| Q^\theta(\mu_n, \xi_n) - Q^\theta(\mu, \xi) \right| \to 0$ thus follows by Assumption~\ref{ass:pcont}.
\end{proof}

\end{document}